\newcommand{\ind}[1]{{\mathbf{1}_{#1}}}
\renewcommand{\c}{\mathcal}
\newcommand{\cL}{\mathcal{L}}
\renewcommand{\b}{\mathbb}
\newcommand{\bR}{\mathbb{R}}
\newcommand{\w}{w}
\newcommand{\h}{G}
\newcommand{\p}{p}
\newcommand{\wo}{{\W{0}}}
\newcommand{\W}[1]{{\w_{(#1)}}}
\newcommand{\U}[1]{{U_{(#1)}}}
\newcommand{\WW}[1]{{W_{(#1)}}}
\newcommand{\Z}[1]{{Z_{(#1)}}}
\newcommand{\x}[1]{x_{#1}}
\newcommand{\y}[1]{y_{#1}}
\newcommand{\st}{\text{ s.t., }}
\newcommand{\innerprod}[2]{\left\langle #1,#2 \right\rangle}
\newcommand{\mybox}{\hfill\(\Box\)}
\DeclareMathOperator*{\argmin}{\arg\!\min}
\DeclareMathOperator*{\argmax}{\arg\!\max}
\renewcommand{\hat}{\widehat}
\renewcommand{\tilde}{\widetilde}
\newcommand{\din}{d}
\newcommand{\remove}[1]{{}}
\newtheorem{claim}{Claim}
\newtheorem{property}{Property}
\newtheorem{remark}{Remark}
\newtheorem{theorem}{Theorem}
\newtheorem{lemma}[theorem]{Lemma}
\newtheorem{proposition}[theorem]{Proposition}
\newtheorem{example}[theorem]{Example}
\newtheorem{subproposition}{Proposition}[theorem]
\begin{document}
\title{Characterizing Implicit Bias in Terms of Optimization Geometry}

\author{\name Suriya Gunasekar \addr TTI Chicago, USA \email suriya@ttic.edu   \AND \name Jason Lee  \addr USC Los Angeles, USA \email jasonlee@marshall.usc.edu \AND \name Daniel Soudry \addr Technion, Israel \email daniel.soudry@gmail.com  \AND Nathan Srebro \addr TTI Chicago, USA \email nati@ttic.edu}

\maketitle

\begin{abstract}
  We study the implicit bias of generic optimization methods, such as mirror
 descent, natural gradient descent, and steepest descent with respect
  to different potentials and norms, when optimizing underdetermined
  linear regression or separable linear classification problems.  We explore
  the question of whether the specific global minimum (among the many possible
  global minima) reached by an algorithm can be characterized in terms
  of the potential or norm of the optimization geometry, and independently of hyperparameter choices such as step-size and momentum.
\end{abstract}
\section{Introduction}\label{sec:intro}
Implicit bias from the optimization algorithm plays a crucial
role in learning  deep neural networks as it introduces effective capacity control not directly specified in the objective \citep{neyshabur2015search,neyshabur2015path,zhang2017understanding,  keskar2016large,wilson2017marginal,neyshabur2017geometry}. 
In overparameterized models where the
training objective has many global minima, optimizing using a specific  algorithm, such as  gradient descent, \textit{implicitly biases} the solutions to some special
global minima. The properties of the learned model, including its generalization performance, are thus crucially influenced by the choice of optimization algorithm used. In neural networks especially, characterizing these special global minima for  common algorithms such as stochastic gradient descent (SGD) is  essential for understanding  what the inductive bias of the learned model is and why such large capacity networks often show remarkably good generalization even in the absence of explicit regularization \citep{zhang2017understanding} or early stopping \citep{Hoffer2017}. 

Implicit bias from optimization depends on the choice of algorithm, and
changing the algorithm, or even changing associated
hyperparameter can change the implicit bias. 
For example, \citet{wilson2017marginal} showed that for 
some standard deep learning architectures, variants of SGD algorithm
with different choices of momentum and adaptive gradient updates
(AdaGrad and Adam) exhibit different biases and thus have different
generalization performance;
\citet{keskar2016large}, \citet{Hoffer2017} and \citet{Smith2018} study how the size of the
mini-batches used in SGD influences generalization; and
\citet{neyshabur2015path} compare the bias of path-SGD (steepest
descent with respect to a scale invariant path-norm) to standard SGD. 

It is therefore important to  explicitly relate different optimization algorithms to their implicit biases. 
Can we precisely characterize which global minima different  algorithms converge to? 
How does this depend on the loss function? 
What other choices including initialization,
step-size, momentum, stochasticity, and adaptivity, does the
implicit bias depend on? 
In this paper, we provide answers to some of these questions for simple linear regression and classification models. 
While neural networks are certainly more complicated than these simple linear models, the results here provide a segue into understanding such biases for more complex models. 

For linear models, we already have an understanding of the implicit bias of gradient
descent. For underdetermined least squares objective, gradient descent can be shown to converge to the minimum Euclidean norm solution. Recently, \citet{soudry2017implicit} studied gradient descent for linear
logistic regression.  The logistic loss is fundamentally different
from the squared loss in that the loss function has no attainable 
global minima.  Gradient descent iterates therefore diverge (the norm
goes to infinity), but \citeauthor{soudry2017implicit} showed that they
diverge in the direction of the hard margin support vector machine
solution, and therefore the decision boundary converges to this maximum
margin separator.   

Can we extend such characterization to other optimization methods
that work under different (non-Euclidean) geometries such as mirror
descent with respect to some potential, natural gradient descent with
respect to a Riemannian metric, and steepest descent with respect to
a generic norm? Can we relate the implicit bias to these geometries?

As we shall see, the answer depends on whether the loss function is
similar to a squared loss or to a logistic loss. This difference is captured
by two family of losses: \begin{inparaenum}[(a)]\item loss functions
  that have a unique finite root, like the squared loss and \item
  strictly monotone loss functions where the infimum is unattainable,
  like the logistic loss. \end{inparaenum}  For losses with a unique
finite root, we study the {\em limit point} of the optimization iterates,
$\w_\infty=\lim_{t\to\infty} \W{t}$.  For monotone losses, we study the
{\em limit direction}
$\bar{w}_\infty=\lim_{t\to\infty}\frac{\W{t}}{\|\W{t}\|}$.

In Section~\ref{sec:unique-root} we study linear models with loss functions that have unique finite
roots.  We obtain a robust characterization of the limit point for
mirror descent, and discuss how it is independent of step-size and
momentum.  For natural gradient descent, we show that the step-size does
play a role, but get a characterization for infinitesimal step-size.
For steepest descent, we show that not only does step-size affects the limit point, but even with infinitesimal step-size, the expected characterization does not hold.  The situation is fundamentally
different for strictly monotone losses such as the logistic loss
(Section~\ref{sec:monotonic}) where we do get a precise 
characterization of the limit direction for generic steepest descent.  We also study the adaptive gradient descent method (AdaGrad) \cite{duchi2011adaptive} (Section~\ref{sec:adagrad})
and optimization over matrix factorization  (Section~\ref{sec:mf}).  Recent studies
considered the bias of such methods for least squares problems
\citep{wilson2017marginal,gunasekar2017implicit}, and here we study
these algorithms for  monotone loss functions, obtaining a more robust characterization
for matrix factorization problems, while concluding that the implicit bias of AdaGrad depends on initial conditions including step-size even for strict monotone losses.  




\section{Losses with a Unique Finite Root}\label{sec:unique-root}
We first consider learning linear models using losses with a unique finite root, such as the squared loss, where the loss  $\ell(\hat{y},y)$ between a prediction $\hat{y}$ and label $y$ is minimized at a unique and finite value of $\hat{y}$. We assume without loss of generality, that $\min_{\hat{y}} \ell(\hat{y},y)=0$ and the unique minimizer is $\hat{y}=y$. 
\begin{property} [\textbf{Losses with a unique finite root}] \label{ass:finite-root}  For any $y$, a  sequence  $\{\hat{y}_t\}_{t=1}^\infty$ minimizes $\ell(.,y)$, i.e.,  $\ell(\hat{y}_t,y)\overset{t\to\infty}\longrightarrow\inf_{\hat{y}}\ell(\hat{y},y)=0$ if and only if $\hat{y}_t\overset{t\to\infty}\longrightarrow y$. 
\end{property}
Denote the training dataset $\{(\x{n},\y{n}):n=1,2,\ldots,N\}$ with features $\x{n}\in \b{R}^{\din}$ and labels $\y{n}\in \bR$. The empirical loss (or risk) minimizer of a linear model $f(x)=\innerprod{w}{x}$ with parameters $\w\in\b{R}^{\din}$ is given by,
\begin{equation}
\min_{\w} \c{L}(\w):=\sum_{n=1}^N {\ell}(\innerprod{\w}{\x{n}},\y{n}).
\label{eq:lm}
\end{equation}
We are particularly interested in the case where $N<d$ and the observations are realizable, i.e., $\min_w \c{L}(w)=0$. Under these conditions, the optimization problem in eq. \eqref{eq:lm} is underdetermined and has multiple global minima denoted by $\c{G}=\{\w:\c{L}(\w)=0\}=\{\w:\forall n,\;\innerprod{\w}{\x{n}}=\y{n}\}.$ Note that the set of global minima $\c{G}$ is the same for any loss $\ell$ with unique finite root (Property \ref{ass:finite-root}), including, e.g., the Huber loss, the truncated squared loss. 
Which specific global minima $\w\in\c{G}$ do different optimization algorithms reach when minimizing the empirical loss objective $\c{L}(w)$? 

\subsection{Gradient descent}\label{sec:gd-finite}
Consider  gradient descent updates for minimizing  $\c{L}(\w)$ with  step-size sequence $\{\eta_t\}_t$ and initialization $\W{0}$,
\begin{equation*}\W{t+1}=\W{t}-\eta_t\nabla\c{L}(\W{t}).\end{equation*}
If $\W{t}$ minimizes the empirical loss in eq.~\eqref{eq:lm}, then the iterates  converge to  the unique global minimum that is closest to initialization $\W{0}$ in $\ell_2$ distance, i.e.,  $\W{t}\to\argmin_{w\in\c{G}} \|w-\W{0}\|_2$. This can be easily seen as for any $\w$, the gradients $\nabla\c{L}(\w)=\sum_n{\ell}^\prime (\innerprod{\w}{\x{n}},\y{n})\x{n}$ are always constrained to the fixed  subspace spanned by the data $\{x_n\}_n$, and thus the iterates $\W{t}$ are confined to the low dimensional affine manifold $\wo+\text{span}(\{x_n\}_n)$. Within this  low dimensional manifold, there is a unique global minimizer $\w$ that satisfies the linear constraints in $\c{G}=\{\w:\innerprod{\w}{\x{n}}=\y{n},\forall n\in[N]\}$.

The same argument also extends  for updates with instance-wise stochastic gradients, where we use a stochastic estimate $\tilde{\nabla}\c{L}(\W{t})$ of the full gradient $\nabla\c{L}(\W{t})$ computed from a  random subset of instances  $S_t\subseteq[N]$,
\begin{equation}\label{eq:stoc}
 \tilde{\nabla}\c{L}(\W{t})=\sum\nolimits_{n\in S_{t}\subset[n]} \nabla_{\w} \ell(\innerprod{\W{t}}{\x{n_t}},\y{n_t}).
\end{equation}
Moreover, when initialized with $\wo=0$, 
the implicit bias characterization also extends to the following generic  momentum and acceleration based updates,
\begin{equation}\W{t+1}\!=\!\W{t}\!+\!\beta_t\Delta\W{t-1}\!-\!\eta_t\nabla\c{L}\!\left(\!\W{t}\!+\!\gamma_t\Delta\W{t-1}\!\right)\!,
\label{eq:mom}
\end{equation} 
where $\Delta\W{t-1}=\W{t}-\W{t-1}$. This includes Nesterov's acceleration ($\beta_t=\gamma_t$) \citep{nesterov1983method} and Polyak's heavy ball momentum ($\gamma_t=0$) \citep{polyak1964some}.


For losses with a unique finite root, the implicit bias of gradient descent therefore  depends only on the initialization and not on the step-size or momentum or mini-batch size. Can we get such  succinct characterization for other optimization algorithms? That is, characterize the bias in terms of the optimization geometry and initialization, but independent of choices of step-sizes, momentum, and stochasticity.
\subsection{Mirror descent}\label{sec:md-finite}
Mirror descent (MD) \citep{beck2003mirror,nemirovskii1983problem} was introduced as a generalization of gradient descent for  optimization over  geometries beyond the Euclidean geometry of gradient descent. In particular, mirror descent updates are defined for any strongly convex and differentiable potential  $\psi$ as
\begin{equation}
 \W{t+1}=\argmin_{\w\in\c{W}} \eta_t \innerprod{\w}{\nabla\c{L}(\W{t})}+D_\psi(\w,\W{t}),
 \label{eq:md-update}
\end{equation}
where $D_\psi(\w,\w')\!=\!\psi(\w)-\psi(\w')-\innerprod{\nabla \psi(\w')}{\w-\w'}$ is the \textit{Bregman divergence} \citep{bregman1967relaxation} w.r.t.  $\psi$, and $\c{W}$ is some constraint set for parameters $\w$. 

We first look at unconstrained optimization where $\c{W}=\bR^d$ and the update  in eq.~\eqref{eq:md-update} is equivalent to
\begin{equation}
\nabla\psi(\W{t+1})=\nabla\psi(\W{t})-\eta_t \nabla\c{L}(\W{t}).
\label{eq:md-upd-opt}
\end{equation}
For a strongly convex  potential $\psi$,  $\nabla\psi$ is called the link function and is invertible. Hence, the above updates are uniquely defined. Also, $\w$ and $\nabla\psi(\w)$ are referred as  \textit{primal} and \textit{dual} variables, respectively. 

Examples of potentials $\psi$ for mirror descent include the squared $\ell_2$ norm $\psi(w)=\nicefrac{1}{2}\|w\|_2^2$, which leads to  gradient descent; the entropy potential $\psi(w)=\sum_i w[i]\log{w[i]}-w[i]$; the spectral entropy for matrix valued $\w$, where $\psi(\w)$ is the entropy potential on the singular values of $\w$;  general quadratic potentials $\psi(w)=\nicefrac{1}{2}\|w\|_D^2=\nicefrac{1}{2}\,w^\top Dw$ for any positive definite matrix $D$; and the squared $\ell_p$ norms for $p\in(1,2]$.

From eq.~\eqref{eq:md-upd-opt}, we see that rather than the primal iterates $\W{t}$, it is the dual iterates $\nabla\psi(\W{t})$ that are constrained to the low dimensional data manifold $\nabla\psi(\W{0})+\text{span}(\{\x{n}\}_{n\in[N]})$. The arguments for gradient descent can now be generalized to get the following result. 
\begin{restatable}{theorem}{thmmdfinite} \label{thm:md-finite} For any loss $\ell$ with a unique finite root (Property~\ref{ass:finite-root}), any realizable dataset $\{\x{n},\y{n}\}_{n=1}^N$, and any strongly convex potential $\psi$, consider the mirror descent iterates $\W{t}$ from eq. \eqref{eq:md-upd-opt} for minimizing the empirical loss $\c{L}(\w)$ in eq. \eqref{eq:lm}. For all initializations $\W{0}$,  if the step-size sequence $\{\eta_t\}_t$ is chosen such that the limit point of the iterates $\w_\infty=\lim_{t\to\infty}\W{t}$ is a global minimizer of $\c{L}$, i.e., $\c{L}(\w_{\infty})=0$,  then $w_\infty$ is given by
\begin{equation}\label{eq:md_mnopt}
\w_{\infty}=\argmin_{\w:\forall n, \innerprod{\w}{\x{n}}=\y{n}} D_\psi(\w,\W{0}).
\end{equation}
\end{restatable}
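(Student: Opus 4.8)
The plan is to show that the limit point $w_\infty$ satisfies the first-order optimality conditions for the convex program in eq.~\eqref{eq:md_mnopt}, and then conclude by uniqueness of its minimizer. The driving observation, exactly as for gradient descent, is that mirror descent confines the \emph{dual} iterate to a fixed low-dimensional affine subspace: since $\nabla\c{L}(\w)=\sum_n \ell'(\innerprod{\w}{\x{n}},\y{n})\,\x{n}\in\text{span}(\{\x{n}\}_{n\in[N]})$ for every $\w$, unrolling eq.~\eqref{eq:md-upd-opt} gives $\nabla\psi(\W{t})-\nabla\psi(\W{0})=-\sum_{s<t}\eta_s\nabla\c{L}(\W{s})\in\text{span}(\{\x{n}\}_{n\in[N]})$ for all $t$. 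This subspace is closed (finite-dimensional), and $\nabla\psi$ is continuous on the interior of its domain (automatic for a differentiable convex $\psi$), so passing to the limit $t\to\infty$ yields
\[
\nabla\psi(\w_\infty)-\nabla\psi(\W{0})\in\text{span}(\{\x{n}\}_{n\in[N]}).
\]

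Next I would use the hypothesis $\c{L}(\w_\infty)=0$. Because $\c{L}$ is a sum of nonnegative terms, each a loss with a unique finite root, Property~\ref{ass:finite-root} forces $\innerprod{\w_\infty}{\x{n}}=\y{n}$ for every $n$, i.e.\ $\w_\infty\in\c{G}$. So $\w_\infty$ is feasible for the program in eq.~\eqref{eq:md_mnopt}, and by the previous display its ``dual residual'' lies in the span of the data.

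Then I would invoke convex duality for $\min_{\w}D_\psi(\w,\W{0})$ subject to $\innerprod{\w}{\x{n}}=\y{n}\ \forall n$. This problem is feasible (the dataset is realizable, so $\c{G}\neq\emptyset$), has an affine feasible set, and has a strictly convex objective $\w\mapsto D_\psi(\w,\W{0})$ (since $\psi$ is strongly convex); hence it admits a unique minimizer, and a feasible $\w$ is \emph{the} minimizer if and only if $\nabla_\w D_\psi(\w,\W{0})=\nabla\psi(\w)-\nabla\psi(\W{0})$ is orthogonal to every feasible direction $\{v:\innerprod{v}{\x{n}}=0\ \forall n\}$, equivalently if and only if $\nabla\psi(\w)-\nabla\psi(\W{0})\in\text{span}(\{\x{n}\}_{n\in[N]})$. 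The two facts established above are precisely this feasibility-plus-stationarity pair, so $\w_\infty$ equals the unique minimizer, which is eq.~\eqref{eq:md_mnopt}. Since the resulting characterization mentions neither the step-sizes $\{\eta_t\}$ nor (by the same argument applied to the momentum recursion) the momentum parameters, the claimed independence follows.

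The step that needs the most care is the limiting argument in the dual space: one must ensure $\w_\infty$ lies in the interior of $\text{dom}(\psi)$, so that $\nabla\psi(\w_\infty)$ is defined and $\nabla\psi$ is continuous there — for potentials such as the entropy this is exactly where the (implicit) constraint set $\c{W}$ must be accounted for. Everything else is the routine specialization to $\psi=\tfrac12\|\cdot\|_2^2$ of the gradient-descent reasoning recalled in Section~\ref{sec:gd-finite}, now carried out on the dual iterates rather than the primal ones.
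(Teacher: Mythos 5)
Your proposal is correct and follows essentially the same route as the paper's proof: telescope the dual updates to show $\nabla\psi(\W{t})-\nabla\psi(\W{0})$ stays in $\text{span}(\{\x{n}\})$, pass to the limit, and verify that $\w_\infty$ satisfies the KKT conditions (primal feasibility from $\c{L}(\w_\infty)=0$ plus the stationarity condition) of the strictly convex Bregman-projection problem, whose minimizer is unique. Your explicit remarks on continuity of $\nabla\psi$ at the limit and on $\w_\infty$ lying in the interior of $\text{dom}(\psi)$ are a reasonable tightening of a step the paper leaves implicit, but they do not change the argument.
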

In particular, if we start at $\wo=\argmin_\w \psi(w)$ (so that $\nabla\psi(\wo)=0$), then we get to $\w_\infty=\argmin_{\w\in\c{G}} \psi(\w)$, where recall that $\c{G}=\{\w:\forall n, \innerprod{\w}{\x{n}}=\y{n}\}$ is the set of global minima for $\c{L}(w)$. 

The analysis of Theorem~\ref{thm:md-finite} can also be extended for special cases of constrained mirror descent  (eq. \eqref{eq:md-update}) when $\c{L}(\w)$ is minimized over realizable affine equality constraints.
\begin{restatable}{subtheorem}{thmlinc} \label{thm:linc}
Under the conditions of Theorem~\ref{thm:md-finite}, consider constrained mirror descent updates $\W{t}$ from eq. \eqref{eq:md-update} with realizable affine equality constraints, that is $\c{W}=\{\w:G\w=h\}$ for some $G\in\b{R}^{d^\prime\times d}$ and $h\in\b{R}^{d^\prime}$ and additionally, $\exists w\in\c{W}$ with $\c{L}(w)=0$. For all initializations $\W{0}$, if the step-size sequence $\{\eta_t\}_t$ is chosen to asymptotically minimize $\c{L}$, i.e., $\c{L}(\w_{\infty})=0$,  then $w_\infty=\argmin_{\w\in\c{G}\cap \c{W}}D_\psi(\w,\W{0})$.  
\end{restatable}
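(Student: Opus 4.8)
The plan is to follow the proof of Theorem~\ref{thm:md-finite} almost verbatim, tracking the \emph{dual} iterates $\nabla\psi(\W{t})$, with the one change that the constrained update in eq.~\eqref{eq:md-update} contributes an extra Lagrange-multiplier term for the constraint $Gw=h$. Since $\psi$ is strongly convex, the map $w\mapsto\eta_t\innerprod{w}{\nabla\c{L}(\W{t})}+D_\psi(w,\W{t})$ is strongly convex and coercive, so over the nonempty affine set $\c{W}$ its minimizer $\W{t+1}$ is attained and unique, and stationarity (the constraint being affine, no constraint-qualification issue arises) gives some $\mu_t\in\bR^{d'}$ with
\begin{equation*}
\nabla\psi(\W{t+1})=\nabla\psi(\W{t})-\eta_t\nabla\c{L}(\W{t})-G^{\top}\mu_t .
\end{equation*}
Because $\nabla\c{L}(\W{t})=\sum_n\ell^\prime(\innerprod{\W{t}}{\x{n}},\y{n})\x{n}\in\mathrm{span}(\{\x{n}\}_n)$, each dual increment lies in the fixed finite-dimensional (hence closed) subspace $\c{M}:=\mathrm{span}(\{\x{n}\}_n)+\{G^{\top}\mu:\mu\in\bR^{d'}\}$. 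Telescoping over $s=0,\dots,t-1$ and using continuity of $\nabla\psi$, the limit point obeys $\nabla\psi(\w_\infty)-\nabla\psi(\W{0})\in\c{M}$.

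Next I would establish feasibility of $\w_\infty$. For every $t\ge1$ the iterate $\W{t}$ lies in the closed set $\c{W}$, so $\w_\infty\in\c{W}$; and by the hypothesis on $\{\eta_t\}_t$ we have $\c{L}(\w_\infty)=0$, i.e.\ $\w_\infty\in\c{G}$. Thus $\w_\infty\in\c{G}\cap\c{W}$, which is a nonempty affine subspace (by the assumption that $\exists w\in\c{W}$ with $\c{L}(w)=0$) whose tangent space is $T=\{v:\innerprod{v}{\x{n}}=0\ \forall n,\ Gv=0\}=\c{M}^{\perp}$.

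Finally I would compare with the first-order conditions of the convex program $\min_{w\in\c{G}\cap\c{W}}D_\psi(w,\W{0})$. As $D_\psi(\cdot,\W{0})$ is strongly convex and $\c{G}\cap\c{W}$ is a nonempty affine subspace, this program has a unique optimum $w^\star$, characterized by feasibility together with $\nabla_w D_\psi(w^\star,\W{0})=\nabla\psi(w^\star)-\nabla\psi(\W{0})\perp T$, equivalently $\nabla\psi(w^\star)-\nabla\psi(\W{0})\in(\c{M}^{\perp})^{\perp}=\c{M}$. Since $\w_\infty$ satisfies both the feasibility condition and, by the telescoping argument, the membership condition, it coincides with $w^\star$, proving $\w_\infty=\argmin_{w\in\c{G}\cap\c{W}}D_\psi(w,\W{0})$.

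I expect the only real work to be in the first paragraph: checking that the constrained subproblem in eq.~\eqref{eq:md-update} is genuinely well posed (minimizer attained and unique), that its stationarity condition can be written with a multiplier of the form $G^{\top}\mu_t$, and that the membership $\nabla\psi(\W{t})-\nabla\psi(\W{0})\in\c{M}$ passes to the limit — which it does because $\c{M}$ is a closed subspace and $\nabla\psi$ is continuous on the relevant domain. Everything downstream is the same convex-duality bookkeeping as in Theorem~\ref{thm:md-finite}, with the enlarged subspace $\c{M}$ replacing $\mathrm{span}(\{\x{n}\}_n)$.
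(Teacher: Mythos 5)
Your proposal is correct and follows essentially the same route as the paper's proof: derive the constrained update's stationarity condition with a multiplier term for $Gw=h$, observe that the dual increments stay in the subspace spanned by the data and the rows of $G$, telescope, and then verify the optimality conditions of the constrained Bregman-projection problem (which you phrase as orthogonality to the tangent space of $\c{G}\cap\c{W}$, while the paper writes the equivalent KKT conditions with explicit multipliers $\nu_n,\mu_j$). Your additional care about well-posedness of the subproblem and passing the subspace membership to the limit is a harmless refinement of the same argument.
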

For example,  in exponentiated gradient descent \citep{kivinen1997exponentiated}, which is mirror descent w.r.t $\psi(\w)=\sum_i \w[i]\log{\w[i]}-\w[i]$,  under the explicit simplex constraint   $\c{W}=\{\w:\sum_i\w[i]=1\}$, Theorem~\ref{thm:linc} shows that using uniform initialization $\wo=\frac{1}{d}\mathbf{1}$, mirror descent will return the  the maximum entropy solution ${\w_\infty=\argmin_{\w\in\c{G}\cap \c{W}} \sum_{i}\w[i]\log{\w[i]}}$. 

Let us now consider momentum for mirror descent. There are two possible generalizations of the gradient descent momentum in eq.~\eqref{eq:mom}: adding momentum either to primal variables $\W{t}$, or to  dual variables $\nabla\psi(\W{t})$, 
\begin{flalign}
&\text{Dual momentum:}&&\!\!\!\!\!\nabla\psi(\W{t+1})=\nabla\psi(\W{t})+\beta_t \Delta z_{(t-1)}-\eta_t \nabla\c{L}\left(\W{t}+\gamma_t \Delta w_{(t-1)}\right)\label{eq:dual-mom}&\\
&\text{Primal momentum:}&&\!\!\!\!\!\nabla\psi(\W{t+1})=\nabla\psi\big(\W{t}+\beta_t \Delta w_{(t-1)}\big)-\eta_t \nabla\c{L}\left(\W{t}+\gamma_t \Delta w_{(t-1)}\right)\label{eq:primal-mom}&
\end{flalign}
where $\Delta z_{(-1)}=\Delta w_{(-1)}=0$, and for $t\ge1$, $\Delta z_{(t-1)}=\nabla\psi(\W{t})-\nabla\psi(\W{t-1})$ and $\Delta w_{(t-1)}=\W{t}-\W{t-1}$ are the momentum terms in the primal and dual space, respectively; and $\{\beta_t\ge 0,\gamma_t\ge 0\}_t$ are the momentum parameters.

If we initialize at $\wo=\argmin_\w \psi(w)$, then  even with dual momentum  $\nabla\psi(\W{t})$  continues to remain in the data manifold. This leads to the following extension of Theorem~\ref{thm:md-finite}.
\begin{restatable}{subtheorem}{thmmdfinitea} \label{thm:md-finite1a}Under the conditions in Theorem~\ref{thm:md-finite}, if initialized at  $\wo=\argmin_\w \psi(w)$, then the mirror descent updates with dual momentum also converge to \eqref{eq:md_mnopt}, i.e., for all $\{\eta_t\}_t,\{\beta_t\}_t,\{\gamma_t\}_t$, if $\W{t}$ from eq. \eqref{eq:dual-mom} converges to $\w_{\infty}\in\c{G}$, then $\w_{\infty}=\argmin_{\w\in\c{G}}\psi(\w)$. 
\end{restatable}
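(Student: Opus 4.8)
The plan is to follow the proof of Theorem~\ref{thm:md-finite} essentially verbatim; the only new ingredient is checking that \emph{dual} momentum, unlike primal momentum, preserves the invariant ``the dual iterate $\nabla\psi(\W{t})$ lies in the data span.'' So first I would show, by induction on $t$, that $\nabla\psi(\W{t})\in\mathcal{S}:=\mathrm{span}(\{\x{n}\}_{n\in[N]})$ for every $t\ge 0$. The base case is exactly the hypothesis $\wo=\argmin_\w\psi(w)$, which gives $\nabla\psi(\wo)=0\in\mathcal{S}$. For the inductive step, inspect eq.~\eqref{eq:dual-mom}: the term $\Delta z_{(t-1)}=\nabla\psi(\W{t})-\nabla\psi(\W{t-1})$ is a difference of dual iterates, hence in $\mathcal{S}$ by the inductive hypothesis; and for \emph{any} argument $\w$ we have $\nabla\c{L}(\w)=\sum_n\ell^\prime(\innerprod{\w}{\x{n}},\y{n})\,\x{n}\in\mathcal{S}$, so in particular the gradient evaluated at the extrapolated point $\W{t}+\gamma_t\Delta w_{(t-1)}$ lies in $\mathcal{S}$. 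Thus $\nabla\psi(\W{t+1})$ is a linear combination of elements of $\mathcal{S}$ and stays in $\mathcal{S}$. (This is precisely where dual momentum matters: primal momentum, eq.~\eqref{eq:primal-mom}, would instead require $\nabla\psi(\W{t}+\beta_t\Delta w_{(t-1)})\in\mathcal{S}$, which need not hold since $\nabla\psi$ is nonlinear in general.)

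Next I would pass to the limit. By assumption $\W{t}\to\w_\infty\in\c{G}$, and $\nabla\psi$ is continuous (the gradient of a differentiable convex function is continuous wherever it exists), so $\nabla\psi(\W{t})\to\nabla\psi(\w_\infty)$. Since $\mathcal{S}$ is a finite-dimensional, hence closed, subspace, we conclude $\nabla\psi(\w_\infty)\in\mathcal{S}$, i.e., $\nabla\psi(\w_\infty)=\sum_n\mu_n\x{n}$ for some coefficients $\mu_n\in\bR$.

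Finally I would invoke first-order optimality for the strictly convex program $\min_{\w}\psi(\w)$ subject to $\innerprod{\w}{\x{n}}=\y{n}$ for all $n$: its Lagrangian stationarity condition is exactly $\nabla\psi(\w)\in\mathrm{span}(\{\x{n}\})$ together with feasibility $\w\in\c{G}$, and strong (hence strict) convexity of $\psi$ makes the minimizer unique. The point $\w_\infty$ is feasible (it lies in $\c{G}$) and satisfies stationarity by the previous paragraph, so $\w_\infty=\argmin_{\w\in\c{G}}\psi(\w)$, which is the claim. Equivalently, since $\nabla\psi(\wo)=0$ yields $D_\psi(\w,\wo)=\psi(\w)-\psi(\wo)$, this is precisely the specialization of eq.~\eqref{eq:md_mnopt} to $\wo=\argmin_\w\psi(w)$.

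I do not expect a substantive obstacle here: the computations are routine, and the only points requiring care are the induction bookkeeping for the momentum terms — ensuring the momentum is injected in the dual space and that $\nabla\c{L}$ is $\mathcal{S}$-valued no matter where it is evaluated — and the mild topological step that a convergent sequence of $\mathcal{S}$-valued dual iterates has its limit in $\mathcal{S}$.
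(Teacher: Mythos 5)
Your proposal is correct and follows essentially the same route as the paper: an induction showing the dual iterates $\nabla\psi(\W{t})$ stay in $\text{span}(\{\x{n}\})$ (the paper proves this for a slightly more general update of which eq.~\eqref{eq:dual-mom} is a special case), followed by the KKT/stationarity argument identifying the limit with $\argmin_{\w\in\c{G}}\psi(\w)$. Your explicit handling of the limit step (continuity of $\nabla\psi$ and closedness of the span) is a point the paper leaves implicit, but it is the same argument.
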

\begin{restatable}{remark}{remsgd}\label{rem:sgd}Following the same arguments, we can show that Theorem~\ref{thm:md-finite}--\ref{thm:md-finite1a} also hold when instancewise stochastic gradients defined in eq.  \eqref{eq:stoc} are used in place of $\nabla\c{L}(\W{t})$.
\end{restatable}
{\small
\begin{figure*}
    \centering
    \begin{subfigure}[b]{0.35\textwidth}
        \includegraphics[width=\textwidth]{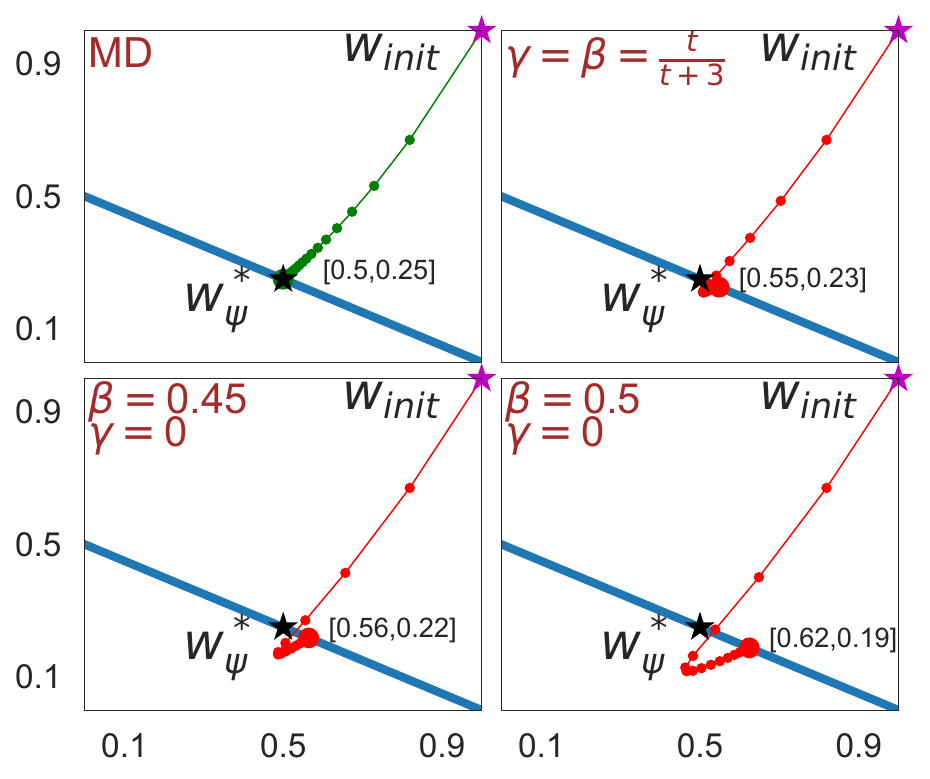}
        \caption{\label{fig:md}\centering Mirror descent primal \newline momentum (Example~\ref{ex:md})}
    \end{subfigure}
    \begin{subfigure}[b]{0.27\textwidth}
        \includegraphics[width=\textwidth]{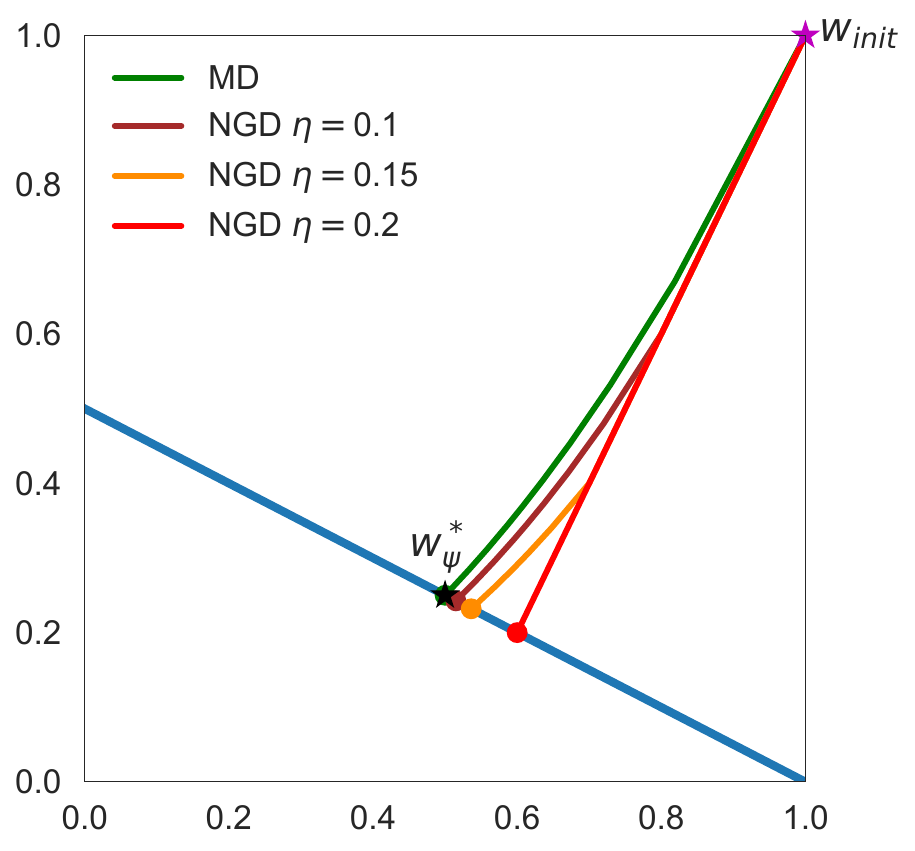}
        \caption{\label{fig:ngd}\centering Natural gradient descent\newline (Example~\ref{ex:ngd})}
    \end{subfigure}
    \begin{subfigure}[b]{0.35\textwidth}
        \includegraphics[width=\textwidth]{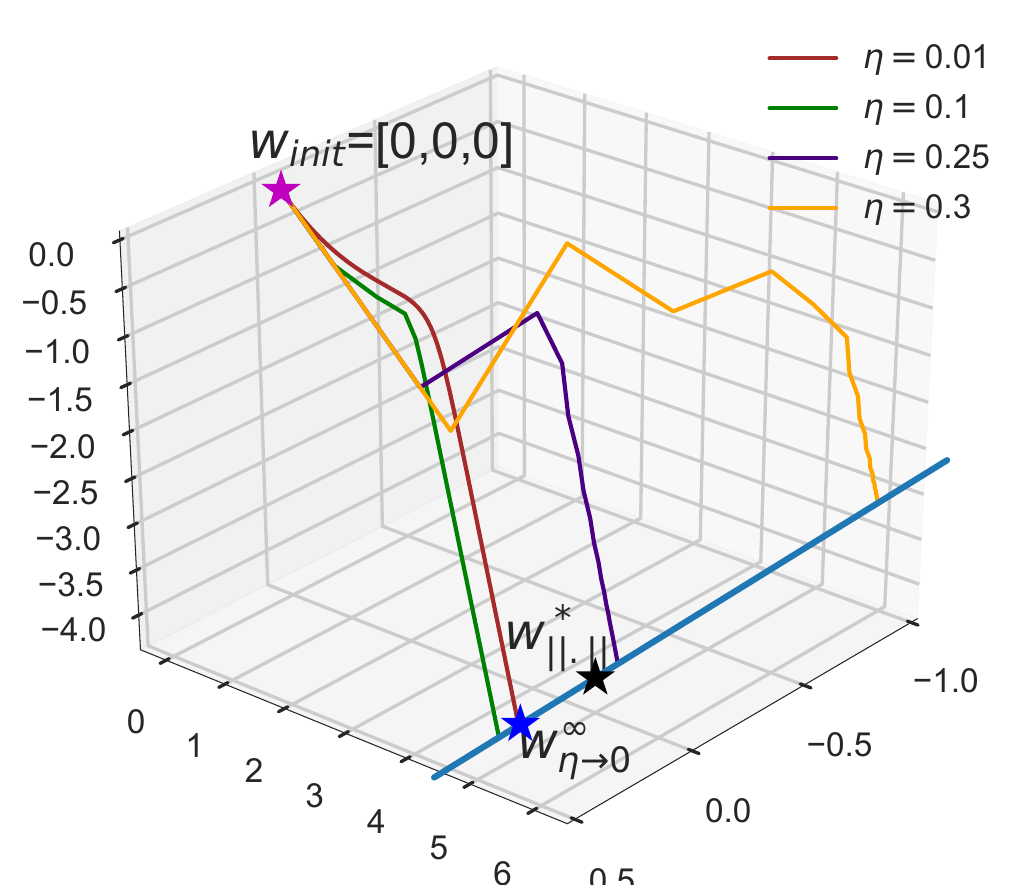}
        \caption{\label{fig:sd}\centering Steepest descent w.r.t $\|.\|_{\nicefrac{4}{3}}$ \newline  (Example~\ref{ex:sd})}
    \end{subfigure}
    \caption{\label{fig:finite-root}\small Dependence of implicit bias on step-size and momentum: In $(a)$--$(c)$, the blue line denotes the set $\c{G}$ of global minima for the respective examples. In $(a)$ and $(b)$, $\psi$ is the entropy potential and all algorithms are initialized with $\wo=[1,1]$ so that $\psi(\wo)=\argmin_{w}\psi(w)$. $\w^*_\psi=\argmin_{\psi\in\c{G}}\psi(w)$ denotes the minimum potential global minima we expect to converge to. 
$(a)$ \textbf{Mirror descent with primal momentum (Example~\ref{ex:md}):}  the global minimum that  eq. \eqref{eq:primal-mom}  converges to depends on the momentum parameters---the sub-plots contain the trajectories of eq. \eqref{eq:primal-mom} for different choices of $\beta_t=\beta$ and $\gamma_t=\gamma$. 
$(b)$ \textbf{Natural gradient descent  (Example~\ref{ex:ngd}):}  for different  step-sizes $\eta_t=\eta$, eq. \eqref{eq:ngd-update} converges to different global minima. Here, $\eta$ was chosen to be small enough to ensure $\W{t}\in\text{dom}(\psi)$.  (c) \textbf{Steepest descent w.r.t $\|.\|_{\nicefrac{4}{3}}$ (Example~\ref{ex:sd}):} the global minimum to which eq. \eqref{eq:sd-update} converges to depends on $\eta$. Here $\wo=[0,0,0]$,  $\w^*_{\|.\|}=\argmin_{\psi\in\c{G}}\|w\|_{\nicefrac{4}{3}}$ denotes the minimum norm global minimum, and $\w^\infty_{\eta\to0}$ denotes the solution of infinitesimal SD with $\eta\to0$.  Note that even as $\eta\to 0$, the expected characterization does not hold, i.e., $\w^\infty_{\eta\to0}\neq \w^*_{\|.\|}$.}
\end{figure*}
}

Let us now look at primal momentum. 
For general  potentials $\psi$, the dual  iterates $\nabla\psi(\W{t})$ from the primal momentum  can fall off the data  manifold and the additional components influence the final solution. Thus,  the specific global minimum that  the iterates $\W{t}$  converge to  will depend on the values of  momentum parameters $\{\beta_t,\gamma_t\}_t$ and  step-sizes $\{\eta_t\}_t$ as demonstrated in  the following example.  
\begin{example}\label{ex:md}
Consider optimizing $\c{L}(w)$  with dataset $\{(\x{1}=[1,2], \y{1}=1)\}$ and squared loss $\ell(u,y)=(u-y)^2$ using primal momentum updates from eq. \eqref{eq:primal-mom} for MD w.r.t. the entropy potential $\psi(w)=\sum_i w[i]\log{w[i]}-w[i]$. For initialization $\wo=\argmin_\w \psi(w)$, Figure~\ref{fig:md} shows  how different choices of momentum  $\{\beta_t,\gamma_t\}$ change the limit point $w_\infty$. Additionally, we show the following:
\begin{subproposition} \label{prop:md-primal} In Example~\ref{ex:md}, consider the case where  primal momentum is  used only in the first step, but $\gamma_t=0$ and $\beta_{t}=0$ for all $t\ge2$. For any $\beta_1>0$, there exists $\{\eta_t\}_t$, such that $\W{t}$ from \eqref{eq:primal-mom}  converges to a global minimum, but not to $\argmin_{\w\in\c{G}}\psi(\w)$. 
\end{subproposition}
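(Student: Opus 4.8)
The plan is to run the dual-variable argument behind Theorem~\ref{thm:md-finite} while bookkeeping how the single primal-momentum step pushes the dual iterate off the data line $\mathrm{span}(\x{1})$, and then to check that this off-manifold push is never parallel to $\x{1}$, however small the relevant step-size is. Recall $\psi(w)=\sum_i w[i]\log w[i]-w[i]$, so $\nabla\psi(w)=(\log w[1],\log w[2])$, $\c{G}=\{\w:w[1]+2w[2]=1\}$ with tangent space $\x{1}^{\perp}$, and $\wo=(1,1)$. First I would write the two initial steps explicitly. The ordinary step gives $\nabla\psi(\W{1})=\nabla\psi(\wo)-\eta_0\nabla\c{L}(\wo)=-\eta_0(4,8)$ (as $\nabla\c{L}(\wo)=2(\innerprod{\wo}{\x{1}}-\y{1})\x{1}=(4,8)$), hence $\W{1}=(e^{-4\eta_0},e^{-8\eta_0})$ with $\nabla\psi(\W{1})\in\mathrm{span}(\x{1})$. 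Writing $\Delta=\W{1}-\wo$, the momentum step producing $\W{2}$ is
\begin{equation*}
\nabla\psi(\W{2})=\nabla\psi(\W{1}+\beta_1\Delta)-\eta_1\nabla\c{L}(\W{1}+\gamma_1\Delta)=\underbrace{\bigl(\nabla\psi(\W{1}+\beta_1\Delta)-\nabla\psi(\W{1})\bigr)}_{=:\,r}+\bigl(\text{a vector in }\mathrm{span}(\x{1})\bigr),
\end{equation*}
since $\nabla\psi(\W{1})$ and every gradient of $\c{L}$ lie in $\mathrm{span}(\x{1})$ (in particular $\gamma_1$ plays no role). All later updates are ordinary mirror steps, adding only multiples of $\x{1}$, so $\nabla\psi(\W{t})\in r+\mathrm{span}(\x{1})$ for every $t\ge2$.

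Next I would invoke the variational characterization from the proof of Theorem~\ref{thm:md-finite}: if the step-sizes are chosen so that $\W{t}\to\w_\infty\in\c{G}$, then $\nabla\psi(\w_\infty)\in r+\mathrm{span}(\x{1})$, and since $\nabla\psi$ is strictly monotone $\w_\infty$ is the unique such point of $\c{G}$, namely $\w_\infty=\argmin_{\w\in\c{G}}\bigl(\psi(\w)-\innerprod{r}{\w}\bigr)$. Its first-order condition on $\c{G}$ reads $\nabla\psi(\w_\infty)-r\in\mathrm{span}(\x{1})$, while that of $\w^{\star}:=\argmin_{\w\in\c{G}}\psi(\w)$ reads $\nabla\psi(\w^{\star})\in\mathrm{span}(\x{1})$; these two minimizers can coincide only if $r\in\mathrm{span}(\x{1})$. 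So it suffices to exhibit some $\eta_0>0$ with $r\notin\mathrm{span}(\x{1})$ for which, in addition, $\W{1}+\beta_1\Delta$ lies in the positive orthant $\mathrm{dom}(\psi)$.

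The third step is the only real computation. Put $s=e^{-4\eta_0}\in(0,1)$ and $a=1+\beta_1$; then $\W{1}+\beta_1\Delta=(as-\beta_1,\,as^2-\beta_1)$ and $r=\bigl(\log(as-\beta_1)+4\eta_0,\ \log(as^2-\beta_1)+8\eta_0\bigr)$. The alignment condition $2r[1]=r[2]$ simplifies to $(as-\beta_1)^2=as^2-\beta_1$; expanding and substituting $a-1=\beta_1$, every term carries a factor $\beta_1(1+\beta_1)>0$, and cancelling it leaves $(s-1)^2=0$, i.e.\ $\eta_0=0$. Hence for \emph{every} $\eta_0>0$ we have $r\notin\mathrm{span}(\x{1})$, and choosing $\eta_0$ small enough that $(1+\beta_1)e^{-8\eta_0}>\beta_1$ keeps $\W{1}+\beta_1\Delta$ inside $\mathrm{dom}(\psi)$, so the momentum step is well defined.

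Finally, with $\eta_0$ fixed this way, I would pick the remaining $\{\eta_t\}_{t\ge1}$ small enough (e.g.\ a sufficiently fast-decaying sequence) that the ordinary mirror-descent tail started from the now-fixed $\W{2}$ converges to a zero-loss point of $\c{G}$; together with the two preceding paragraphs this gives $\W{t}\to\w_\infty=\argmin_{\w\in\c{G}}(\psi(\w)-\innerprod{r}{\w})\neq\argmin_{\w\in\c{G}}\psi(\w)$, which is the claim. I expect the genuine obstacle to be this last step --- certifying actual convergence of the iterates to $\c{G}$ --- which needs the quantitative mirror-descent descent lemma together with compactness of the entropy Bregman sublevel sets $\{\w>0:D_\psi(\w^{\star},\w)\le c\}$ to confine the iterates, rather than the soft subspace argument used for gradient descent; the rest is the off-manifold bookkeeping and the one-line cancellation above.
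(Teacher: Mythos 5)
Your proposal is correct and takes essentially the same route as the paper: the single primal-momentum step displaces the dual iterate off $\text{span}(\x{1})$ by a non-aligned vector, all subsequent plain mirror steps keep $\nabla\psi(\W{t})$ in that shifted coset, and the stationarity/KKT characterization then forces the limit in $\c{G}$ to differ from $\argmin_{\w\in\c{G}}\psi(\w)$, with standard mirror-descent convergence supplying the tail. If anything, your explicit cancellation $(as-\beta_1)^2=as^2-\beta_1\iff(s-1)^2=0$ and the positivity check on $\W{1}+\beta_1\Delta\W{0}$ are more careful than the paper's own computation, which replaces $\nabla\psi(\W{1}+\beta_1\Delta\W{0})$ by $\nabla\psi((1+\beta_1)\W{1})$ (literally valid only if $\W{0}=0$) but reaches the same conclusion.
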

\end{example}


\subsection{Natural gradient descent}
 Natural gradient descent  (NGD)  was introduced by \citet{amari1998natural} as a modification of  gradient descent, wherein the updates are chosen to be the steepest descent direction w.r.t a Riemannian metric tensor $H$ that maps $w$ to a positive definite local metric $H(w)$. The updates are given by,
\begin{equation}
\W{t+1}= \W{t}-\eta_t H\big(\W{t}\big)^{-1}{\nabla{\c{L}}}(\W{t}).
\label{eq:ngd-update}
\end{equation}
In many instances, the metric tensor $H$ is specified by the Hessian $\nabla^2\psi$ of a strongly convex potential $\psi$. For example, when the metric over the Riemannian manifold is the KL divergence between distributions $P_w$ and $P_{w'}$ parameterized by $w$, the metric tensor is given by $H(w)=\nabla^2\psi(P_w)$, where the potential $\psi$ is the entropy potential over $P_w$.

\paragraph{Connection to mirror descent} When $H(w)=\nabla\psi^2(w)$ for a strongly convex potential $\psi$, 
as the step-size $\eta$ goes to zero,  the iterates $\W{t}$ from natural gradient descent in eq. \eqref{eq:ngd-update} and mirror descent  w.r.t $\psi$ in eq. \eqref{eq:md-update} converge to each other, and the common dynamics  in the limit is given by,
\begin{equation}
\dv{\nabla \psi(\W{t})}{t}=-{\nabla{\c{L}}}(\W{t})\implies\dv{\W{t}}{t}=-\nabla^2\psi(\W{t})^{-1}{\nabla{\c{L}}}(\W{t}).
\label{eq:md-ode}
\end{equation}
Thus, as the step-sizes are made infinitesimal, the limit point of  natural gradient descent $w_\infty= \lim_{t\to\infty}\W{t}$ is also the limit point of  mirror descent  and hence will be biased towards solutions with minimum divergence to the initialization, i.e., as $\eta\to0$, $w_\infty=\argmin_{w\in\c{G}} D_\psi(w, \W{0})$.

For general step-sizes $\{\eta_t\}$, if the potential $\psi$ is quadratic, $\psi(w)=\nicefrac{1}{2}\|w\|_D^2$ for some positive definite $D$, we get linear link functions  $\nabla\psi(w)=Dw$ and  constant metric tensors $\nabla^2\psi(w)=H(w)=D$, and  the natural gradient descent updates \eqref{eq:ngd-update} are the same as the mirror descent  \eqref{eq:md-upd-opt}. Otherwise the updates in eq. \eqref{eq:ngd-update} is only an approximation of the mirror descent update $\nabla\psi^{-1}(\nabla\psi(\W{t})-\eta_t\nabla\c{L}(\W{t}))$. 


For natural gradient descent with finite step-size and non-quadratic potentials $\psi$, the characterization in eq. \eqref{eq:md_mnopt} generally does not hold. We can  see this as for any initialization $\W{0}$,  a finite $\eta_1>0$  will  lead to $\W{1}$ for which the dual variable $\nabla\psi(\W{1})$ is no longer in the data manifold $\text{span}(\{\x{n}\})+\nabla\psi(\W{0})$, and hence will converge to a different global minimum dependent on the step-sizes  $\{\eta_t\}_t$. 
\begin{example}\label{ex:ngd} Consider optimizing $\c{L}(w)$ with squared loss over dataset $\{(\x{1}=[1,2], \y{1}=1)\}$ using the natural gradient descent  w.r.t. the metric tensor given by  $H(w)=\nabla^2\psi(w)$, where  $\psi(w)=\sum_iw[i]\log{w[i]}-\w[i]$,  and initialization $\wo=[1,1]$. Figure~\ref{fig:ngd} shows that NGD with different step-sizes $\eta$ converges to different global minima. For a simple analytical example:  take one finite step $\eta_1>0$ and then follow the continuous time path in eq. \eqref{eq:md-ode}.
\begin{subproposition}\label{prop:ngd} For almost all  $\eta_1>0$, $\lim_{t\to\infty}\W{t}=\argmin_{\w\in\c{G}}D_\psi(w,\W{1})\neq \argmin_{\w\in\c{G}}D_\psi(w,\W{0})$.
\end{subproposition}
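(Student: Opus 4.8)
The plan is to reduce the claim to a comparison of two Bregman projections onto the line $\c{G}=\{w:\innerprod{w}{\x{1}}=1\}$ — the one from $\wo$ (what the mirror-descent characterization of Theorem~\ref{thm:md-finite} predicts) and the one from $\W{1}$ — and then to compute both in closed form. Throughout, $\psi(w)=\sum_i w[i]\log w[i]-w[i]$, so $\nabla\psi(w)=(\log w[i])_i$, $\nabla^2\psi(w)=\mathrm{diag}(1/w[1],1/w[2])$, while $\c{L}(w)=(\innerprod{w}{\x{1}}-1)^2$ and $\nabla\c{L}(w)=2(\innerprod{w}{\x{1}}-1)\x{1}$.

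First I would analyze the continuous-time phase. After the finite step, the iterates follow the flow of eq.~\eqref{eq:md-ode} started at $\W{1}$, so in dual coordinates $\nabla\psi(\W{t})=\nabla\psi(\W{1})+s(t)\,\x{1}$ for a scalar path $s(t)$ with $s(1)=0$; equivalently $\W{t}[i]=\W{1}[i]\exp(s(t)\,\x{1}[i])>0$, so the flow never leaves $\mathrm{dom}\,\psi=\bR_{>0}^2$ whenever $\W{1}\in\bR_{>0}^2$ (here $\x{1}=[1,2]>0$). Writing $g(s):=\sum_i\W{1}[i]\,\x{1}[i]\exp(s\,\x{1}[i])=\innerprod{\W{t}}{\x{1}}$, the scalar dynamics is $s'=-2(g(s)-1)$, where $g$ is strictly increasing with $g(-\infty)=0$ and $g(+\infty)=+\infty$; hence $s(t)$ converges monotonically to the unique $\lambda$ with $g(\lambda)=1$, and $\W{t}\to w_\infty$ satisfying $\nabla\psi(w_\infty)\in\nabla\psi(\W{1})+\mathrm{span}(\x{1})$ and $\innerprod{w_\infty}{\x{1}}=1$ — exactly the optimality conditions identifying $w_\infty=\argmin_{w\in\c{G}}D_\psi(w,\W{1})$, which is the conclusion of Theorem~\ref{thm:md-finite} read in its continuous-time form (eq.~\eqref{eq:md-ode}) with $\W{1}$ in place of $\wo$. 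Since $\wo=[1,1]=\argmin_w\psi(w)$, Theorem~\ref{thm:md-finite} would instead predict $\argmin_{w\in\c{G}}D_\psi(w,\wo)=\argmin_{w\in\c{G}}\psi(w)$, so it remains to show these two projections differ for almost every $\eta_1>0$.

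Next I would compute both projections. Minimizing $D_\psi(w,p)$ over the single constraint $\innerprod{w}{\x{1}}=1$ gives, via a Lagrange multiplier, $w[i]=p[i]\exp(\mu\,\x{1}[i])$ with $\mu$ the unique scalar enforcing the constraint (strict convexity of $\psi$ plus $\c{G}\cap\bR_{>0}^2\neq\emptyset$); thus the projections of $p$ and $p'$ coincide iff $p[i]/p'[i]=c^{\,\x{1}[i]}$ for a common $c>0$, which for $\x{1}=[1,2]$, after eliminating $c$, is $(p[1]/p'[1])^2=p[2]/p'[2]$. At $\wo=[1,1]$ one has $\nabla^2\psi(\wo)^{-1}=I$ and $\nabla\c{L}(\wo)=2(3-1)[1,2]=[4,8]$, so $\W{1}=[1-4\eta_1,\,1-8\eta_1]$, which lies in $\bR_{>0}^2$ precisely for $\eta_1\in(0,1/8)$. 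The coincidence condition with $p=\W{1}$, $p'=\wo$ becomes $(1-4\eta_1)^2=1-8\eta_1$, i.e.\ $16\eta_1^2=0$, which fails for every $\eta_1\in(0,1/8)$. Hence $\lim_t\W{t}=\argmin_{w\in\c{G}}D_\psi(w,\W{1})\neq\argmin_{w\in\c{G}}D_\psi(w,\wo)$ for all $\eta_1\in(0,1/8)$, and in particular for almost all $\eta_1>0$; the exceptional null set appears only because for $\eta_1\ge1/8$ the single step already leaves $\mathrm{dom}\,\psi$.

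The main obstacle is the first part: one must verify that after the finite step the continuous NGD path both converges and remains inside $\mathrm{dom}\,\psi$, so that the hypothesis of Theorem~\ref{thm:md-finite} (that the limit minimizes $\c{L}$) holds and its Bregman-projection characterization can be invoked from the shifted starting point $\W{1}$. Once that is in place, the remainder is the short explicit two-dimensional computation above.
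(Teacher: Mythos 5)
Your proposal is correct and follows essentially the same route as the paper: one finite NGD step to $\W{1}=[1-4\eta_1,1-8\eta_1]$, then the continuous path is the mirror-descent flow whose limit is the Bregman projection of $\W{1}$ onto $\c{G}$, and the coincidence test you derive, $(1-4\eta_1)^2=1-8\eta_1$, is exactly the paper's check that $\nabla\psi(\W{1})\in\c{P}_\c{X}$, i.e.\ $2\log\W{1}[1]-\log\W{1}[2]=0$, which fails for all admissible $\eta_1>0$. Your additional verification that the flow stays in $\mathrm{dom}\,\psi$ and converges (via the scalar dynamics $s'=-2(g(s)-1)$) just fills in a step the paper asserts without detail.
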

\end{example}
\remove{
\paragraph{Necessary conditions: }
So far, we saw conditions under which MD w.r.t $\psi$ and NGD w.r.t. $\nabla^2\psi$ converges to the global optimum that minimizes the Bregman divergence to the initialization. 
We can also ask the inverse question of what kind of updates will lead to  an implicit bias towards minimizing the Bregman divergence to an initialization.

\begin{lemma}\label{lem:cond}Consider any strongly convex and differentiable function $\phi:\bR^d\to \bR$ and the associated Bregman divergence $D_\phi(\w,\w')=\phi(\w)-\phi(\w')-\innerprod{\nabla \phi(\w')}{\w-\w'}$. 

For any $\ell(u,y)$ with finite roots and sequence of  $\W{t}$ that converges to a global optimum of $\c{L}(\w)$ in \eqref{eq:lm}, we get $\lim_{t\to\infty}\W{t}= \argmin_{\w\in\c{G}} D_\phi(\w,\W{0})$ for \textit{every} $\W{0}$ if and only if $\forall t, \nabla\phi(\W{t+1})-\nabla\phi(\W{t})\in\text{span}(\{\x{n}\})$. 
\end{lemma}
In particular, for first order algorithms where $\W{t}$ is updated only based on past updates $\{\W{t'}\}_{t'<t}$ and gradients $\{\c{L}(\W{t'})\}_{t'<t}$, to get the implicit bias of $\argmin_{\w\in\c{G}} D_\phi(\w,\W{0})$ for every $\W{0}$, we need to necessarily use  updates of the form $ \nabla\phi(\W{t+1})=\nabla\phi(\W{t})+\sum_{t'<t} \xi_{t'}\nabla\c{L}(\W{t})$ and  its stochastic variants, for some $\{\xi_{t'}\in\bR\}$---essentially minor variants of MD.

The above results do not preclude cases where we can get implicit regularization under additional conditions on $\W{0}$, for example,  in Theorem~\ref{thm:md-finite1a} we get implicit regularization to $\psi(w)$ for MD with dual momentum when $\nabla\psi(\W{0})=0$, but this does not extend to \textit{any} other initialization, and hence is not covered by Lemma~\ref{lem:cond}. }

\subsection{Steepest Descent}
Gradient descent is also a special case of  steepest descent (SD) w.r.t  a generic norm $\|.\|$ \citep{boyd2004convex} with updates  given by, 
\begin{equation}
\W{t+1}= \W{t}+\eta_t \Delta \W{t}, \text{ where }\Delta \W{t}=\argmin_{v} \innerprod{\nabla{\c{L}}(\W{t})}{v}+\frac{1}{2}\|v\|^2.
\label{eq:sd-update}
\end{equation}
The optimality of $\Delta\W{t}$ in eq. \eqref{eq:sd-update} requires $-\nabla{\c{L}}(\W{t})\in \partial\|\Delta\W{t}\|^2$, which is equivalent to,
\begin{equation}
\langle\Delta\W{t},\!-\nabla\c{L}(\W{t})\rangle\!=\!\|\Delta\W{t}\|^2\!=\!\|\nabla\c{L}(\W{t})\|_\star^2. 
\label{eq:sd-upd-opt}
\end{equation}
Examples of steepest descent include gradient descent, which is
steepest descent w.r.t $\ell_2$ norm and coordinate descent, which is
steepest descent w.r.t $\ell_1$ norm. In general, the update
$\Delta\W{t}$ in eq. \eqref{eq:sd-update} is not uniquely defined and
there could be multiple direction $\Delta\W{t}$ that minimize eq.
\eqref{eq:sd-update}.  In such cases, any minimizer of eq.  \eqref{eq:sd-update}
is a valid steepest descent update and satisfies eq.
\eqref{eq:sd-upd-opt}.

Generalizing gradient descent, we might expect the limit point $\w_\infty$ of steepest
descent w.r.t an arbitrary norm $\|.\|$ to be the solution closest to initialization in corresponding norm, 
$\argmin_{\w\in\c{G}} \|\w-\wo\|$. This is indeed the case for
quadratic norms $\|v\|_D=\sqrt{v^\top Dv}$ when eq.~\ref{eq:sd-update} is equivalent to mirror descent 
with $\psi(w)=\nicefrac{1}{2}\|w\|_D^2$. Unfortunately, this
 does not hold for general norms.
\begin{example}\label{ex:sd}
Consider minimizing $\c{L}(w)$ with  dataset $\{(\x{1}=[1,1,1], \y{1}=1), (\x{2}=[1,2,0], \y{2}=10)\}$ and loss $\ell(u,y)=(u-y)^2$ using steepest descent updates  w.r.t. the $\ell_{4/3}$ norm. The empirical results for this problem in Figure~\ref{fig:sd} clearly show that even for $\ell_p$ norms where the $\|.\|^2_p$ is smooth and strongly convex, the corresponding steepest descent converges to a global minimum that depends on the step-size. Further, even in the continuous step-size limit of $\eta\to 0$, $\W{t}$ does not converge to $\argmin_{\w\in\c{G}} \|\w-\wo\|$. 
\end{example}

\paragraph{Coordinate descent}
Steepest descent w.r.t.~the $\ell_1$ norm is called the coordinate descent, with updates:
\begin{equation*}
\Delta\W{t+1}\in\text{conv}\left\{-\eta_t \frac{\partial \c{L}(\W{t})}{\partial \w[j_t]}e_{j_t}:j_t=\argmax_j \left|\frac{\partial \c{L}(\W{t})}{\partial \w[j]}\right|\right\},
\label{eq:sdl1-finite}
\end{equation*}
where $\text{conv}(S)$ denotes the convex hull of the set $S$, and $\{e_j\}$ are the standard basis, i.e., when multiple partial derivatives are maximal, we can choose any convex combination of the maximizing coordinates, leading to many possible coordinate descent optimization paths.

The connection between optimization paths
of coordinate descent and the $\ell_1$ \textit{regularization path} given by, 
$\hat{w}(\lambda)=\argmin_{w}\c{L}(w)+\lambda \|w\|_1$, has been studied by \citet{efron2004least}.  The specific coordinate descent path where updates are along the average of all optimal coordinates and the  step-sizes are infinitesimal is equivalent to
forward stage-wise selection, a.k.a.~$\epsilon$-boosting
\citep{friedman2001greedy}.  When the $\ell_1$ regularization path
$\hat{w}(\lambda)$ is monotone in each of the coordinates, it is
identical to this stage-wise selection path, i.e.,~to a coordinate
descent optimization path (and also to the related LARS path)
\citep{efron2004least}. In this case, at the limit of $\lambda\to0$ and $t\to\infty$,  the optimization and regularization paths, both converge to the  minimum $\ell_1$
norm solution.  However, when the regularization path $\hat{w}(\lambda)$ is not
monotone, which can and does happen, the optimization and regularization paths diverge, and forward stage-wise selection can converge to solutions with sub-optimal $\ell_1$ norm.
This matches our understanding that steepest descent w.r.t.~a norm $\|.\|$, in
this case the $\ell_1$ norm might converge to a solution that is {\em  not} always the minimum $\|.\|$  norm solution.

\subsection{Summary for losses with a unique finite root}
For losses with a unique finite root, we
characterized the implicit bias of generic mirror descent algorithm in
terms of the potential function and initialization. This
characterization extends for momentum in the dual space as well as to
natural gradient descent in the limit of infinitesimal step-size. We also saw that the characterization breaks for mirror
descent with primal momentum and natural gradient descent with finite
step-sizes. Moreover, for steepest descent with general norms, we were
unable to get a useful characterization even in the infinitesimal step
size limit. In the following section, we will see that for
strictly monotone losses, we {\em can} get a characterization also for
steepest descent.

\section{Strictly Monotone Losses}\label{sec:monotonic} 
We now turn to strictly monotone loss functions $\ell$ where the behavior of the implicit bias is fundamentally different, and as are the situations when the implicit bias can be characterized. Such losses are common in classification problems where $y=\{-1,1\}$ and $\ell(f(x),y)$ is typically a continuous surrogate of the $0$-$1$ loss. Examples of such losses include logistic loss, exponential loss, and probit loss. 
\begin{property} [\textbf{Strict monotone losses}] \label{ass:mon-bdd}   
$\ell(\hat{y},y)$ is bounded from below, and $\forall y$, $\ell(\hat{y},y)$ is strictly monotonically decreasing in  $\hat{y}$.
 Without loss of generality,  $\forall y$,  $\inf_{\hat{y}} \ell(\hat{y},y)= 0$ and $\ell(\hat{y},y)\overset{\hat{y} y\to \infty}\longrightarrow 0$. 
\end{property}
 
We look at classification models that fit the training data $\{\x{n},\y{n}\}_n$ with linear decision boundaries  $f(x)=\innerprod{w}{x}$ with decision rule  given by $\hat{y}(x)=\text{sign}(f(x))$.  
 In many instances of the proofs, we also assume without loss of generality that $y_n=1$ for all $n$, since for linear models, the sign of $\y{n}$ can equivalently be absorbed into $\x{n}$. 
 

We again look at unregularized empirical risk minimization objective of the form in eq. \eqref{eq:lm}, but now with strictly monotone losses. 
When the training data $\{\x{n},\y{n}\}_n$ is not linearly separable, the empirical  objective $\c{L}(\w)$  can have a finite global minimum. However, if the dataset is linearly separable, i.e., $\exists w:\forall n, \y{n}\innerprod{w}{\x{n}}>0$, the  empirical loss $\c{L}(\w)$ is again ill-posed, and moreover $\c{L}(\w)$ does not have any finite minimizer, i.e, $\c{L}(w)\to0$ only as $\|w\|\to\infty$. 
Thus, for any sequence $\{\W{t}\}_{t=0}^\infty$,  if  $\c{L}(\W{t})\to 0$,  then  $\W{t}$ necessarily diverges to infinity rather than converge, and hence we cannot talk about $\lim_{t\to\infty}\W{t}$. Instead, we look at the limit direction $\bar{\w}_{\infty}=\lim\limits_{t\to\infty}\frac{\W{t}}{\|\W{t}\|}$ whenever the limit exists. We refer to existence of this limit as convergence in direction. Note that, the limit direction fully specifies the decision rule of the classifier that we care about. 

\remove{To show convergence of ${\W{t}}$ in direction, we further restrict to  losses with exponential tails. 
\begin{property} [\textbf{Tight exponential tail}] \label{ass:exp-tail} $\ell(u,y)$ has a tight exponential tail if $\exists \mu>0$ and $u_0>0$ such that 
$$\forall yu>u_0,\;\;(1-e^{-\mu yu})e^{-yu}\le -{\ell}^\prime (u,y)\le (1+e^{-\mu yu})e^{-yu}$$
\end{property}
This includes, exponential, logistic, and sigmoid losses. \remove{chk}More specifically, our results are proved only for the case of exponential loss $\ell(u,y)=\exp(-uy)$. But generalization to tight exponential tails can be obtained for with bit of additional algebra along the lines of \citet{soudry2017implicit} and \citet{telgarsky2013margins}. }

We focus on the  exponential loss $\ell(u,y)=\exp(-uy)$. However,  our results can be extended to  loss functions with tight exponential tails, including logistic and sigmoid losses, along the lines of \citet{soudry2017implicit} and \citet{telgarsky2013margins}. 




\subsection{Gradient descent} 
\citet{soudry2017implicit} showed that for almost all linearly separable datasets, gradient descent  with \textit{any initialization and any bounded step-size}  converges in direction to maximum margin separator with unit $\ell_2$ norm, i.e., the hard margin support vector machine classifier,
  \[
  \bar{w}_\infty=\lim_{t\to\infty}\frac{\W{t}}{\|\W{t}\|_2}= w^*_{\|.\|_2}:=\argmax_{\|\W{t}\|_2\le 1} \min_n \y{n}\innerprod{\w}{\x{n}}.\]  

This characterization of the implicit bias  is independent of both the step-size as well as the initialization. We already see a fundamentally difference from the implicit bias of gradient descent for losses with a unique finite root (Section~\ref{sec:gd-finite}) where the characterization  depended on the initialization. 

Can we similarly characterize the implicit bias of different algorithms establishing $\W{t}$ converges in direction and calculating $\bar\w_\infty$? Can we do this even when we \textit{could not} characterize the limit point $\w_\infty=\lim_{t\to\infty} \W{t}$ for losses with unique finite roots? As we will see in the following section, we can indeed answer these questions for steepest descent w.r.t arbitrary norms. 

\subsection{Steepest Descent}\label{sec:sd-finite}
Recall that for squared loss, the limit point of steepest descent
 depends on the step-size, and we were unable obtain a useful 
characterization even for infinitesimal step-size and zero initialization.  In contrast, for exponential loss,  the following
theorem provides a crisp characterization of the limit direction of
steepest descent as a maximum margin solution, independent of 
 step-size (as long as it is small enough) and  initialization. Let $\|.\|_\star$ denote the dual norm of $\|.\|$. 
\begin{restatable}{theorem}{thmsdexp} \label{thm:sd-exp}
For any separable dataset $\{\x{n},\y{n}\}_{n=1}^N$ and any norm $\lVert \cdot \rVert$, consider the steepest
descent updates  from eq. \eqref{eq:sd-upd-opt} for minimizing $\c{L}(\w)$ in eq. \eqref{eq:lm} with the exponential loss $\ell(u,y)=\exp(-uy)$. For all initializations  $\W{0}$, and all bounded step-sizes satisfying $\eta_t \le \min\{\eta_+, \frac{1}{B^2\c{L}(\W{t})}\}$, where $B:=\max_n \|x_n\|_\star$ and $\eta_+<\infty$ is any finite upper bound, the iterates $\W{t}$ satisfy the following,
\[ \lim_{t\to\infty} \min_n\frac{\y{n}\innerprod{\W{t}}{x_n}}{\|\W{t}\|}= \max_{\w:\|w\|\le 1} \min_{n} {\y{n}\innerprod{\w}{x_n}}.\]
In particular, if there is a unique maximum-$\|.\|$ margin  solution $\w^\star_{\|.\|}=\argmax_{\w:\|\w\|\le1} \min_{n} {\y{n}\innerprod{\w}{x_n}}$, then the limit direction is given by  $\bar{w}_\infty=\lim\limits_{t\to\infty}\frac{\W{t}}{\|\W{t}\|}=\w^\star_{\|.\|}$.
\end{restatable}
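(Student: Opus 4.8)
The plan is to reduce the claim to a one‑sided margin bound and then run a Lyapunov/potential argument that mimics the continuous‑time flow. Throughout, assume (as the paper allows) that $y_n=1$ for all $n$, write $\gamma:=\max_{\|w\|\le1}\min_n\langle w,x_n\rangle>0$ for the maximum margin, and fix an optimal unit vector $u^\star$, so that $\langle u^\star,x_n\rangle\ge\gamma$ for every $n$. Since $\|\W{t}/\|\W{t}\|\|=1$, we always have $\min_n\langle\W{t},x_n\rangle/\|\W{t}\|\le\gamma$, so the entire content is the matching lower bound $\liminf_{t\to\infty}\min_n\langle\W{t},x_n\rangle/\|\W{t}\|\ge\gamma$. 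Once this is established the displayed limit follows, and if $w^\star_{\|\cdot\|}$ is unique then every limit point of $\W{t}/\|\W{t}\|$ (these exist by compactness of the unit sphere) attains margin $\gamma$, hence equals $w^\star_{\|\cdot\|}$, which gives convergence in direction.

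\textbf{Step 1 (the loss vanishes).} I would first show $\c{L}(\W{t})\to0$. For the exponential loss $\langle v,\nabla^2\c{L}(w)v\rangle=\sum_n e^{-\langle w,x_n\rangle}\langle x_n,v\rangle^2\le B^2\c{L}(w)\,\|v\|^2$, i.e.\ $\c{L}$ is locally $B^2\c{L}(w)$-smooth with respect to $\|\cdot\|$. Plugging the steepest-descent optimality relation~\eqref{eq:sd-upd-opt} into the descent lemma and using $\eta_t\le1/(B^2\c{L}(\W{t}))$ (with a short bootstrap to certify that the local smoothness constant does not blow up along the segment $[\W{t},\W{t+1}]$, which the same step-size bound guarantees) yields $\c{L}(\W{t+1})\le\c{L}(\W{t})-\tfrac{\eta_t}{2}\|\nabla\c{L}(\W{t})\|_\star^2$; thus $\c{L}(\W{t})$ is non-increasing, $\sum_t\eta_t\|\nabla\c{L}(\W{t})\|_\star^2<\infty$, and convexity together with separability (no finite minimizer exists) forces $\c{L}(\W{t})\to0$. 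Consequently $\|\W{t}\|\to\infty$, and since $e^{-\langle\W{t},x_n\rangle}\le\c{L}(\W{t})$ for each $n$, also $\min_n\langle\W{t},x_n\rangle\ge\rho_t:=\log(1/\c{L}(\W{t}))\to\infty$. (The $\eta_+$ clause just absorbs the initial phase where $\c{L}$ is still large.)

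\textbf{Steps 2--3 (margin certificate and potential function --- the crux).} Testing $-\nabla\c{L}(\W{t})=\sum_n e^{-\langle\W{t},x_n\rangle}x_n$ against $u^\star$ gives the key inequality
\[
\|\nabla\c{L}(\W{t})\|_\star\;\ge\;\big\langle u^\star,-\nabla\c{L}(\W{t})\big\rangle\;=\;\sum_n e^{-\langle\W{t},x_n\rangle}\langle u^\star,x_n\rangle\;\ge\;\gamma\,\c{L}(\W{t}).
\]
Now consider $P_t:=\gamma\|\W{t}\|-\rho_t=\gamma\|\W{t}\|+\log\c{L}(\W{t})$; note $P_t\ge0$ by Step~1. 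In the continuous-time steepest-descent flow $\dot w=\Delta w$ one has $\tfrac{d}{dt}\|\W{t}\|\le\|\Delta\W{t}\|=\|\nabla\c{L}(\W{t})\|_\star$ and $\tfrac{d}{dt}\log\c{L}(\W{t})=-\|\nabla\c{L}(\W{t})\|_\star^2/\c{L}(\W{t})$, so by the certificate
\[
\dot P_t\;\le\;\|\nabla\c{L}(\W{t})\|_\star\Big(\gamma-\tfrac{\|\nabla\c{L}(\W{t})\|_\star}{\c{L}(\W{t})}\Big)\;\le\;0,
\]
hence $P_t\le P_0$, i.e.\ $\gamma\|\W{t}\|\le\rho_t+P_0$, and therefore $\min_n\langle\W{t},x_n\rangle/\|\W{t}\|\ge\rho_t/\|\W{t}\|\ge\gamma-P_0/\|\W{t}\|\to\gamma$. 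For the actual discrete iterates I would repeat this computation on $P_{t+1}-P_t$: the first-order part reproduces the non-positive bound above, leaving a second-order discretization error arising from expanding each $e^{-\eta_t\langle\Delta\W{t},x_n\rangle}$. Because $\eta_t\le1/(B^2\c{L}(\W{t}))$ and $\|\nabla\c{L}(\W{t})\|_\star\le B\c{L}(\W{t})$, every such exponent is bounded ($|\eta_t\langle\Delta\W{t},x_n\rangle|\le1$), so the quadratic remainder is uniformly controlled and, after telescoping against the loss-decrease from Step~1, contributes only a lower-order term, giving $P_t=o(\|\W{t}\|)$ and hence $\liminf_t\rho_t/\|\W{t}\|\ge\gamma$. \emph{This last bookkeeping is the step I expect to be the main obstacle:} one must squeeze the discretization error strictly below the growth rate of $\|\W{t}\|$ across the entire allowed step-size window --- not merely for infinitesimal steps --- which is precisely why the step-size is tied to $1/(B^2\c{L}(\W{t}))$, and it may in addition require showing that the ``dual weights'' $e^{-\langle\W{t},x_n\rangle}/\c{L}(\W{t})$ concentrate, so that $\|\nabla\c{L}(\W{t})\|_\star/\c{L}(\W{t})\to\gamma$, which combined with $P_t\ge0$ pins down the margin.

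\textbf{Step 4 (conclusion).} Steps~1--3 give $\lim_t\min_n\langle\W{t},x_n\rangle/\|\W{t}\|=\gamma$, which is the displayed identity; uniqueness of the maximizer then upgrades this to $\bar w_\infty=w^\star_{\|\cdot\|}$ as noted above. Finally, the passage from the exponential loss to losses with a tight exponential tail (logistic, sigmoid, probit) is obtained by replacing the exact identities above with two-sided exponential bounds on $\ell'$, following \citet{soudry2017implicit} and \citet{telgarsky2013margins}.
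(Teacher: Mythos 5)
Your proposal is correct and follows essentially the same route as the paper: your Step~1 is the paper's descent/summability lemmata, your test-vector inequality $\|\nabla\mathcal{L}(w)\|_\star\ge\langle u^\star,-\nabla\mathcal{L}(w)\rangle\ge\gamma\,\mathcal{L}(w)$ is exactly the (weak-duality) content of the paper's key lemma, and your potential $P_t=\gamma\|w_{(t)}\|+\log\mathcal{L}(w_{(t)})$ is the paper's margin-versus-norm telescoping repackaged as a Lyapunov function. The discrete bookkeeping you flag as the main obstacle goes through exactly as you sketch --- $P_{t+1}-P_t\le\tfrac{1}{2}\eta_t^2B^2\|\nabla\mathcal{L}(w_{(t)})\|_\star^2$, summable by Step~1 and the bound $\eta_t\le\eta_+$ --- so no concentration of the dual weights is needed, and this is precisely the second-order term the paper controls in its final step.
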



A special case of Theorem~\ref{thm:sd-exp} is for steepest descent
w.r.t.~the $\ell_1$ norm, which as we already saw corresponds to
coordinate descent.  More specifically, coordinate descent on the
exponential loss can be thought of as an alternative presentation of
AdaBoost \citep{schapire2012boosting}, where each coordinate represents
the output of one ``weak learner''.  Indeed, initially mysterious
generalization properties of boosting have been understood in terms of
implicit $\ell_1$ regularization \citep{schapire2012boosting}, and
later on AdaBoost with small enough step-size was shown to converge in
direction precisely to the maximum $\ell_1$ margin solution \citep{zhang2005boosting,shalev2010equivalence,telgarsky2013margins}, just as
 guaranteed by Theorem~\ref{thm:sd-exp}.
In fact, \citet{telgarsky2013margins} generalized the result to a
richer variety of exponential tailed loss functions including logistic
loss, and a broad class of non-constant step-size rules. Interestingly,  coordinate descent with exact line search can result in infinite step-sizes, leading the iterates to 
converge in a different direction that is not a max-$\ell_1$-margin
direction \citep{rudin2004dynamics}, hence the maximum step-size bound in Theorem~\ref{thm:sd-exp}.  

Theorem~\ref{thm:sd-exp} is a generalization of the result of \citeauthor{telgarsky2013margins}
 to steepest descent with respect to other norms, and our proof
follows the same strategy as \citeauthor{telgarsky2013margins}. We
first prove a generalization of the duality result of
\citet{shalev2010equivalence}: if there is a unit norm linear
separator that achieves margin $\gamma$, then $\norm{\nabla
  \c{L}(w)}_\star \ge \gamma \c{L}(w)$ for all $w$. By using this lower
bound on the dual norm of the gradient, we are able to
show that the loss decreases faster than the increase in the norm of
the iterates, establishing convergence in a margin maximizing direction.

In relating the optimization path to the regularization path, it is
also relevant to relate Theorem \ref{thm:sd-exp} to the result by \citet{rosset2004boosting} that
for monotone loss functions and  $\ell_p$ norms, the $\ell_p$ regularization path $\hat{w}(c)=\argmin_{\w:\|w\|_p\le c}
\c{L}(\W{t})$ also converges in direction to the maximum margin
separator, i.e.,~$\lim_{c\to\infty}\hat{w}(c)=\w^\star_{\|.\|_p}$ .  Although the
optimization path and regularization path are not the same, they both
converge to the same max-margin separator in the limits of $c\to \infty$ and $t\to\infty$, for the  regularization path and steepest descent optimization path, respectively.

\subsection{Adaptive Gradient Descent (AdaGrad)} \label{sec:adagrad}
Adaptive gradient methods, such as AdaGrad \citep{duchi2011adaptive} or Adam  \citep{kingma2015adam} are very popular for neural network training. We now look at the implicit bias of the  basic (diagonal) AdaGrad. 
\begin{equation}
\w_{\left(t+1\right)} =\w_{\left(t\right)}-\eta\mathbf{\h}_{\left(t\right)}^{-1/2}\nabla\mathcal{L}\left(\w_{\left(t\right)}\right),
\label{eq: AdaGrad}
\end{equation}
where $\mathbf{\h}_{\left(t\right)}\in\bR^{d\times d}$ is a diagonal matrix such that, 
\begin{equation}
\,\forall i:\,\,\mathbf{\h}_{\left(t\right)}[i,i]=\sum_{u=0}^{t}\left(\nabla\mathcal{L}\left(\w_{\left(u\right)}\right)[i]\right)^{2}\,.\label{eq: G}
\end{equation}
AdaGrad updates described above correspond to a pre-conditioned gradient descent, where the pre-conditioning matrix $\mathbf{\h}_{(t)}$ adapts across iterations. 
It was observed by \citet{wilson2017marginal}  that for neural networks with squared loss, adaptive methods tend to degrade generalization performance  in comparison to non-adaptive methods (e.g., SGD with momentum), even when both methods are used to train the network until convergence to a global minimum of training loss. This suggests that adaptivity does indeed affect the implicit bias. For squared loss, by inspection the updates in eq. \eqref{eq: AdaGrad}, we do not expect to get a characterization of the limit point $\w_\infty$  that is independent of the step-sizes. 

However, we might hope that, like for steepest descent, the situation might be different for strictly monotone losses, where the asymptotic behavior could potentially nullify the initial conditions. Examining  the updates in eq. \eqref{eq: AdaGrad}, we can see that the robustness to initialization and initial updates depend on whether the matrices $\mathbf{\h}_{(t)}$ diverge or converge: if $\mathbf{\h}_{(t)}$ diverges, then we  expect the asymptotic effects to dominate, but if it is bounded, then the limit direction will  depend  on the initial conditions.

Unfortunately, the following theorem shows that, the components of $\mathbf{\h}_{(t)}$ matrix are bounded, and hence even for strict monotone losses, the initial conditions $\wo,\mathbf{\h}_{\left(0\right)}$ and step-size $\eta$ will have a non-vanishing contribution to the asymptotic behavior of $\mathbf{\h}_{(t)}$ and hence to the limit direction $\bar{w}_{\infty}=\lim\limits_{t\to\infty}\frac{\W{t}}{\|\W{t}\|}$, whenever it exists. In other words,  the implicit bias of AdaGrad does indeed depend on initialization and step-size. 

\begin{restatable}{theorem}{thmlemadagrad}
\label{lem:AdaGrad} For any linearly separable training data $\{\x{n},\y{n}\}_{n=1}^N$,  consider the AdaGrad iterates $\W{t}$ from eq.~\eqref{eq: AdaGrad}  for minimizing $\mathcal{L}\left(\w\right)$ with exponential loss $\ell(u,y)=\exp(-uy)$. 
For any fixed and bounded step-size $\eta<\infty$, and any initialization of $\wo$ and $\mathbf{\h}_{\left(0\right)}$, such that $\frac{\eta}{2}\mathcal{L}\left(\wo\right)<1$, and $\left\Vert \mathbf{\h}_{\left(0\right)}^{-1/4}\x{n}\right\Vert _{2}\leq1$, $\forall i,\forall t:\,\,\mathbf{\h}_{\left(t\right)}[i,i]<\infty.$
\end{restatable}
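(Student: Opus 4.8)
Since $\mathbf{\h}_{(t)}[i,i]=\sum_{u=0}^{t}\big(\nabla\c{L}(\W{u})[i]\big)^2$ is a finite sum for every finite $t$, the real content of the statement is that this nonnegative, nondecreasing sequence stays \emph{bounded} as $t\to\infty$, i.e.\ that $\sum_{u\ge 0}\big(\nabla\c{L}(\W{u})[i]\big)^2<\infty$ for each coordinate $i$. I would prove this in two stages: a preconditioned descent lemma for $\c{L}$, followed by an AdaGrad-style telescoping sum. Write $g_t:=\nabla\c{L}(\W{t})=-\sum_n \e^{-\y{n}\innerprod{\W{t}}{\x{n}}}\,\y{n}\x{n}$ (using $\y{n}^2=1$) and $\Delta_t:=-\eta\,\mathbf{\h}_{(t)}^{-1/2}g_t$ for the update step.

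\emph{Step 1 (the initialization condition propagates).} The matrices $\mathbf{\h}_{(t)}$ are diagonal, positive definite (the hypothesis $\|\mathbf{\h}_{(0)}^{-1/4}\x{n}\|_2\le1$ forces $\mathbf{\h}_{(0)}\succ0$), and nondecreasing in $t$, so $\mathbf{\h}_{(t)}^{-1/2}\preceq\mathbf{\h}_{(0)}^{-1/2}$ and hence, for all $t,n$: $\|\mathbf{\h}_{(t)}^{-1/4}\x{n}\|_2^2=\x{n}^\top\mathbf{\h}_{(t)}^{-1/2}\x{n}\le\x{n}^\top\mathbf{\h}_{(0)}^{-1/2}\x{n}\le1$. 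Two consequences I will use repeatedly: $\|\mathbf{\h}_{(t)}^{-1/4}g_t\|_2\le\sum_n\e^{-\y{n}\innerprod{\W{t}}{\x{n}}}\|\mathbf{\h}_{(t)}^{-1/4}\x{n}\|_2\le\c{L}(\W{t})$, and the shift of each example's score along the step satisfies $|\y{n}\innerprod{\Delta_t}{\x{n}}|=\eta\,|g_t^\top\mathbf{\h}_{(t)}^{-1/2}\x{n}|\le\eta\|\mathbf{\h}_{(t)}^{-1/4}g_t\|_2\|\mathbf{\h}_{(t)}^{-1/4}\x{n}\|_2\le\eta\,\c{L}(\W{t})$, with squared magnitude at most $\eta^2\|\mathbf{\h}_{(t)}^{-1/4}g_t\|_2^2$.

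\emph{Step 2 (preconditioned descent and telescoping).} Consider $\phi(s):=\c{L}(\W{t}+s\Delta_t)=\sum_n\e^{-\y{n}\innerprod{\W{t}}{\x{n}}}\,\e^{-s\y{n}\innerprod{\Delta_t}{\x{n}}}$. A direct computation gives $\phi'(0)=\innerprod{g_t}{\Delta_t}=-\eta\|\mathbf{\h}_{(t)}^{-1/4}g_t\|_2^2$, and, using Step 1 to bound the per-example exponent shifts, $\phi''(s)\le\e^{\eta\c{L}(\W{t})}\,\eta^2\,\c{L}(\W{t})\,\|\mathbf{\h}_{(t)}^{-1/4}g_t\|_2^2$ on $[0,1]$. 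Taylor's theorem ($\phi(1)\le\phi(0)+\phi'(0)+\tfrac12\sup_{[0,1]}\phi''$) then yields $\c{L}(\W{t+1})\le\c{L}(\W{t})-\eta\big(1-\tfrac12\e^{\eta\c{L}(\W{t})}\eta\,\c{L}(\W{t})\big)\|\mathbf{\h}_{(t)}^{-1/4}g_t\|_2^2$; under $\tfrac{\eta}{2}\c{L}(\wo)<1$ an induction (the bracket stays positive as long as $\c{L}$ has not increased) shows $\c{L}(\W{t})$ is nonincreasing and $\c{L}(\W{t})-\c{L}(\W{t+1})\ge c\,\eta\,\|\mathbf{\h}_{(t)}^{-1/4}g_t\|_2^2$ for some $c>0$. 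Summing over $t$ and using $\c{L}\ge0$ gives $\sum_{t\ge0}\|\mathbf{\h}_{(t)}^{-1/4}g_t\|_2^2\le\c{L}(\wo)/(c\eta)<\infty$, i.e.\ $\sum_{t\ge0}(g_t[i])^2/\sqrt{\mathbf{\h}_{(t)}[i,i]}<\infty$ for each $i$. Finally, with $S_t:=\mathbf{\h}_{(t)}[i,i]$ nondecreasing and $S_t-S_{t-1}=(g_t[i])^2$ for $t\ge1$, the elementary bound $(S_t-S_{t-1})/\sqrt{S_t}\ge\sqrt{S_t}-\sqrt{S_{t-1}}$ telescopes to $\sqrt{\mathbf{\h}_{(T)}[i,i]}\le\sqrt{\mathbf{\h}_{(0)}[i,i]}+\sum_{t\ge1}(g_t[i])^2/\sqrt{\mathbf{\h}_{(t)}[i,i]}$, which is finite uniformly in $T$; hence $\sup_t\mathbf{\h}_{(t)}[i,i]<\infty$.

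\emph{Main obstacle.} The delicate point is Step 2: controlling the exponential second-order remainder of the non-smooth loss along the preconditioned step so that it is a strict fraction of the linear descent term. The mechanism that makes it work is that the normalization $\|\mathbf{\h}_{(0)}^{-1/4}\x{n}\|_2\le1$ is preserved for all $t$ by monotonicity of $\mathbf{\h}_{(t)}$, which simultaneously keeps $\|\mathbf{\h}_{(t)}^{-1/4}g_t\|_2$ and every per-example score shift bounded by $\eta\,\c{L}(\W{t})$, so the hypothesis $\tfrac{\eta}{2}\c{L}(\wo)<1$ keeps these quantities $O(1)$ along the whole trajectory and $\c{L}$ stays monotone; matching the constant in the descent bound to \emph{exactly} this hypothesis (rather than to a crude ``$\eta$ small enough'') is the part that needs a sharper accounting of the remainder than the $\e^{\eta\c{L}}$ estimate sketched above. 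Everything else --- the gradient identity for the exponential loss and the AdaGrad summation lemma --- is routine.
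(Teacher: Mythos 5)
Your proposal follows essentially the same route as the paper: the hypothesis $\left\Vert\mathbf{\h}_{(0)}^{-1/4}\x{n}\right\Vert_2\le1$ propagates to all $t$ because the diagonal entries of $\mathbf{\h}_{(t)}$ are nondecreasing; a descent inequality then gives $\sum_t\left\Vert\mathbf{\h}_{(t)}^{-1/4}\nabla\c{L}(\W{t})\right\Vert_2^2<\infty$ (the paper packages this as a lemma on adaptive steepest descent with respect to the time-varying norm $\|v\|_t$ defined by the preconditioner, whose dual norm of the gradient is exactly your $\|\mathbf{\h}_{(t)}^{-1/4}g_t\|_2$); and a coordinatewise summation converts this into boundedness of $\mathbf{\h}_{(t)}[i,i]$. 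Your telescoping bound $(S_t-S_{t-1})/\sqrt{S_t}\ge\sqrt{S_t}-\sqrt{S_{t-1}}$ is a clean equivalent of the paper's final step (which lower-bounds $[\sum_{u\le t}(g_u[i])^2]^{-1/2}$ by the full-series normalizer), and your use of the $\mathbf{\h}^{1/4}$-weighted norm is the consistent choice matching the stated hypothesis.

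The one place where your sketch does not close is exactly the point you flag. With the remainder estimate $\phi''(s)\le e^{\eta\c{L}(\W{t})}\eta^2\c{L}(\W{t})\|\mathbf{\h}_{(t)}^{-1/4}g_t\|_2^2$, the bracket in your descent inequality is $1-\tfrac12 e^{\eta\c{L}(\W{t})}\eta\,\c{L}(\W{t})$, which need not be positive under $\tfrac{\eta}{2}\c{L}(\wo)<1$ (take $\eta\,\c{L}(\wo)=1.9$), so the induction cannot even start. The paper removes the exponential factor entirely instead of sharpening it: since $\c{L}$ is convex, its supremum over the segment $[\W{t},\W{t+1}]$ is attained at an endpoint, so the second-order term is at most $\tfrac{\eta^2}{2}\max\{\c{L}(\W{t}),\c{L}(\W{t+1})\}\,\|\mathbf{\h}_{(t)}^{-1/4}g_t\|_2^2$; a short contradiction argument (the same device as in the paper's lemma showing monotone decrease of $\c{L}$ for steepest descent with bounded step-size) rules out $\c{L}(\W{t+1})>\c{L}(\W{t})$, yielding $\c{L}(\W{t+1})\le\c{L}(\W{t})-\eta\bigl(1-\tfrac{\eta}{2}\c{L}(\W{t})\bigr)\|\mathbf{\h}_{(t)}^{-1/4}g_t\|_2^2$ and the induction with constant $c=1-\tfrac{\eta}{2}\c{L}(\wo)>0$. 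Substituting this convexity step for your $e^{\eta\c{L}}$ estimate makes your argument complete and, at that point, essentially identical to the paper's proof.
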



\remove{
For example, if $\mathbf{\h}_{(t)}$ converges to some fixed matrix $\mathbf{\h}_{(\infty)}$, then AdaGrad asymptotically becomes GD with a fixed pre-conditioner. As this is a special case of SD, we expect $\w_{(t)}$ to converge in direction to that of the solution of the following max margin problem
\begin{equation}
\min_{\w}\w^{\top}\mathbf{\h}_{\left(\infty\right)}^{1/2}\w\quad , s.t.\,\w^{\top}\x{n}\geq1 .\label{eq: svm for Adagrad}
\end{equation}
In general though, as the matrix $\mathbf{\h}_{(\infty)}$ depends on the initial conditions, so would the minimizer of eq. \ref{eq: svm for Adagrad}. 
}

\section{Gradient  descent on the factorized parameterization}\label{sec:mf}
Consider the empirical risk minimization in eq.  \eqref{eq:lm} for matrix valued $X_n\in\bR^{d\times d}$, $W\in\bR^{d\times d}$
\begin{equation}
\min_{W} \c{L}(W)=\ell(\innerprod{W}{X_n},y_n).
\label{eq:lmW}
\end{equation}
This is the exact same setting as eq. \eqref{eq:lm} obtained by arranging $\w$ and $\x{n}$ as matrices.  We can now study  another class of algorithms for learning linear models based on matrix factorization, where we reparameterize $W$ as $W=UV^\top$ with \textit{unconstrained} $U\in\bR^{d\times d}$ and  $V\in\bR^{d\times d}$  to get the following equivalent  objective, 
\begin{equation}
\min_{U,V} \c{L}(UV^\top)=\sum_{n=1}^N\ell(\innerprod{UV^\top}{X_{n}},\y{n}).
\label{eq:lm-uv}
\end{equation}
 Note that although non-convex, eq. \eqref{eq:lm-uv} is equivalent to eq. \eqref{eq:lmW} with the exact same set of global minima over $W=UV^\top$. 
 \citet{gunasekar2017implicit} studied this problem for squared loss $\ell(u,y)=(u-y)^2$ and noted that  gradient descent on the factorization yields radically different implicit bias compared to gradient descent  on $W$. In particular, gradient descent on $U,V$ is often observed to be biased towards low nuclear norm solutions, which in turns ensures generalization \citep{srebro2005generalization} and low rank matrix recovery \citep{recht2010guaranteed,candes2009exact}. Since the matrix factorization objective in  eq. \eqref{eq:lm-uv} can be viewed as a two-layer neural network with linear activation, understanding the implicit bias here could  provide direct insights into characterizing the implicit bias in more complex neural networks with non-linear activations.

\citet{gunasekar2017implicit} noted that,  the optimization problem in eq. \eqref{eq:lm-uv} over factorization $W\!=\!UV^\top\!$ can be cast as a special case of optimization over p.s.d. matrices  with unconstrained symmetric factorization $W\!=\!UU^\top$:
\begin{equation}
\min_{U\in\bR^{d\times d}}\bar{\c{L}}(U)=\c{L}(UU^\top)=\sum_{n=1}^N\ell\left(\innerprod{UU^\top}{X_{n}},\y{n}\right).
\label{eq:lm-u}
\end{equation}
Specifically, in terms of both the objective as well as gradient descent updates, a problem instance of  eq. \eqref{eq:lm-uv} is equivalent to a problem instance of eq. \eqref{eq:lm-u} with larger data matrices $\tilde{X}_n=\left[\begin{smallmatrix}0&X_n\\X_n^\top&0\end{smallmatrix}\right]$ and loss optimized over larger p.s.d. matrix of the form  $\tilde{U}\tilde{U}^\top=\left[\begin{smallmatrix}A_1&W\\W^\top&A_2\end{smallmatrix}\right]$, where $W=UV^\top$ corresponds to the optimization variables in the original problem instance of eq. ~\eqref{eq:lm-uv} and $A_1$ and $A_2$ some p.s.d matrices that are irrelevant for the objective.

Henceforth, we will also consider the symmetric matrix factorization  in \eqref{eq:lm-u}.
Let $\U{0}\in\mathbb{R}^{d\times d}$ be any full rank initialization,  gradient descent updates in $U$ are given by,
\begin{equation}
\U{t+1}=\U{t}-\eta_t\nabla\bar{\c{L}}(\U{t}), 
\label{eq:mf-updateU}
\end{equation}
with corresponding updates in $\WW{t}=\U{t}\U{t}^\top$  given by,
\begin{flalign}
\WW{t+1}=\WW{t}\! &-\eta_t\big[\nabla{\c{L}}(\WW{t})\WW{t}+\WW{t}\nabla{\c{L}}(\WW{t})\big]+\eta_t^2\nabla{\c{L}}(\WW{t})\WW{t}\nabla{\c{L}}(\WW{t})
\label{eq:mf-updateW}
\end{flalign}

\paragraph{Losses with a unique finite root }For  squared loss, \citet{gunasekar2017implicit}  showed that the implicit bias of iterates in eq.~\eqref{eq:mf-updateW} crucially depended on both the initialization $\U{0}$ as well as the step-size $\eta$. \citeauthor{gunasekar2017implicit} conjectured, and provided theoretical and empirical evidence that
gradient descent on the factorization  converges to the minimum
nuclear norm global minimum, but only if the initialization is infinitesimally close to zero and the step-sizes
are infinitesimally small.  \citet{li2017algorithmic}, later proved the conjecture under additional assumption that the measurements $X_n$ satisfy certain \textit{restricted isometry property (RIP)}. 

In the case of  squared loss, it is evident that for finite step-sizes and finite initialization, the implicit bias towards the minimum nuclear norm global minima is not exact. In practice, not only do we need $\eta>0$, but we also cannot initialize very close to zero since zero is a saddle point for eq. \eqref{eq:lm-u}. The natural question motivated by the results in Section~\ref{sec:monotonic} is: for strictly monotone losses, can we get a characterization of the implicit bias of gradient descent for the factorized objective in eq.  \eqref{eq:lm-u} that is more robust to initialization and step-size? 

\paragraph{Strict monotone losses } In the following theorem, we again see that the characterization of the implicit bias of gradient descent for factorized objective is more robust in the case of strict monotone losses. 

\begin{restatable}{theorem}{thmmfexp} \label{thm:mf-exp}
For almost all datasets $\{X_{n},\y{n}\}_{n=1}^N$ separable by a p.s.d. linear classifier, consider the gradient descent iterates $\U{t}$ in eq. \eqref{eq:mf-updateU} for minimizing $\bar{\c{L}}(U)$ with the exponential loss $\ell(u,y)=\exp(-uy)$ and the corresponding sequence of linear predictors $\WW{t}$ in eq. \eqref{eq:mf-updateW}.  For any full rank initialization $\U{0}$ and 
any finite step-size sequence $\{\eta_t\}_t$, 
if $\WW{t}$ asymptotically minimizes $\c{L}$, i.e., $\c{L}(\WW{t})\to0$,  and additionally the   updates $\U{t}$ and the gradients $\nabla\c{L}(\WW{t})$ converge in direction, 
then the limit direction $\bar{U}_\infty=\lim\limits_{t\to\infty}\frac{\U{t}}{\|\U{t}\|_*}$ 
is a scaling of a first order stationary point (f.o.s.p) of the following non-convex optimization problem
\begin{equation}
\bar{U}_\infty\propto\text{ f.o.s.p. }\min_{U\in\bR^{d\times d}} \norm{U}^2_2 \quad \st \quad \forall n, {\y{n}\innerprod{UU^\top}{X_{n}}}\ge 1.
\label{eq:uopt}
\end{equation}
\end{restatable}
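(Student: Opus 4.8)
The plan is to follow the gradient-flow-on-factors philosophy of \citet{gunasekar2017implicit} but in discrete time: track the updates in the factor space $U$, and read the KKT conditions of \eqref{eq:uopt} directly off the assumed directional limits, with the Lagrange multipliers emerging as limiting normalized loss derivatives. Since $\WW{t}=\U{t}\U{t}^\top$ is symmetric, we may symmetrize each $X_n$ and take $\y{n}=1$; then $\nabla\cL(\WW{t})=-g_t$ with $g_t:=\sum_n e^{-\innerprod{\WW{t}}{X_n}}X_n$, and since $\nabla_U\bar\cL(U)=(\nabla\cL(UU^\top)+\nabla\cL(UU^\top)^\top)U=2\nabla\cL(UU^\top)U$ for symmetric data, the update \eqref{eq:mf-updateU} is $\U{t+1}=(I+2\eta_t g_t)\U{t}$. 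Telescoping gives the identity $\U{T}=\U{0}+2\sum_{t<T}\eta_t\,g_t\U{t}$, which drives the whole argument.

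First I would record the elementary consequences of the hypotheses. Since $\cL(\WW{t})\to0$ on separable data, $\innerprod{\WW{t}}{X_n}\to\infty$ for every $n$, hence $\|\WW{t}\|\to\infty$ and $\|\U{t}\|\to\infty$; also $g_t\neq0$, since otherwise $\WW{t}$ would be a finite minimizer of $\cL$, which does not exist. Write $\rho_t=\|\U{t}\|_*$ and $A_t=\rho_t^{-2}\WW{t}$, and use the assumed directional limits $\U{t}/\rho_t\to\bar U_\infty$ and $g_t/\|g_t\|\to\bar g$; then $A_t\to\bar U_\infty\bar U_\infty^\top$. For almost every dataset I would use general position of the $X_n$ to conclude that the normalized loss derivatives themselves converge, $\nu_n:=\lim_t e^{-\innerprod{\WW{t}}{X_n}}/\sum_m e^{-\innerprod{\WW{t}}{X_m}}\ge0$ with $\sum_n\nu_n=1$ (otherwise one argues along a subsequence), so $\bar g\propto\sum_n\nu_n X_n$. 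Complementary slackness is then a concentration estimate: setting $\gamma^\star:=\min_m\innerprod{\bar U_\infty\bar U_\infty^\top}{X_m}$, if $\innerprod{\bar U_\infty\bar U_\infty^\top}{X_n}>\gamma^\star$ then for $m$ attaining the minimum $e^{-\innerprod{\WW{t}}{X_n-X_m}}=e^{-\rho_t^2(\innerprod{\bar U_\infty\bar U_\infty^\top}{X_n-X_m}+o(1))}\to0$, forcing $\nu_n=0$; so $\nu_n>0$ only on constraints of \eqref{eq:uopt} that $\bar U_\infty$ makes active.

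For stationarity I would divide the telescoped identity by $\rho_T$. Writing $g_t\U{t}=\|g_t\|\rho_t(\bar g\bar U_\infty+\epsilon_t)$ with $\epsilon_t\to0$ and $c_t:=2\eta_t\|g_t\|\rho_t>0$ gives $\U{T}/\rho_T=\U{0}/\rho_T+(\rho_T^{-1}\sum_{t<T}c_t)\bar g\bar U_\infty+\rho_T^{-1}\sum_{t<T}c_t\epsilon_t$. A weighted Ces\`aro estimate $\sum_{t<T}c_t\epsilon_t=o(\sum_{t<T}c_t)$, together with $\sum_{t<T}c_t\asymp\rho_T$ (the upper bound by testing the telescoped identity against a fixed direction that pairs positively with $\bar g\bar U_\infty$, the lower bound by the triangle inequality), kills the last term, so $\bar U_\infty=(\lim_T\rho_T^{-1}\sum_{t<T}c_t)\,\bar g\bar U_\infty$; the scalar limit is some $c>0$, hence $(\sum_n\nu_n X_n)\bar U_\infty=c\,\bar U_\infty$. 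A useful intermediate computation is $\innerprod{\bar U_\infty\bar U_\infty^\top}{\bar g}=\sum_n\nu_n\innerprod{\bar U_\infty\bar U_\infty^\top}{X_n}=\gamma^\star$, which is positive for almost every dataset; this both supplies the positive pairing used above and shows $\bar g\bar U_\infty\neq0$. Finally I would assemble the KKT certificate: scaling $\bar U_\infty$ by $s=1/\sqrt{\gamma^\star}$ makes $s\bar U_\infty$ feasible for \eqref{eq:uopt} with the $\{n:\nu_n>0\}$ constraints tight, and $\lambda_n=\nu_n/c\ge0$ are then valid multipliers with $\sum_n\lambda_n X_n$ acting as the identity on the range of $\bar U_\infty$ by the stationarity relation; hence $s\bar U_\infty$ is a first-order stationary point of \eqref{eq:uopt}, i.e.\ $\bar U_\infty$ is a scaling of one.

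The main obstacle is the stationarity step, and inside it the two interlocking genericity facts: that the limit direction separates the data with strictly positive margin, $\gamma^\star>0$, and consequently that the net drift $\bar g\bar U_\infty$ is nonzero --- these are exactly what make the telescoping/Ces\`aro estimate close. They are where ``almost all datasets'' is used, to exclude limit directions sitting on the boundary of the feasible cone; pushing the argument through the boundary case $\innerprod{\bar U_\infty\bar U_\infty^\top}{X_m}\to0$ without a genericity hypothesis appears substantially harder than the rest. A secondary technicality is upgrading convergence of the multipliers $\nu_n$ from subsequential to genuine, which I would again route through general position of the $X_n$.
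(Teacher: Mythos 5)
Your proposal is correct and follows essentially the same route as the paper's proof: your concentration estimate showing the normalized exponential weights collapse onto the minimal-margin examples is exactly the content of the paper's Lemma~\ref{lem:grad-conv} (which also handles the genericity via $\sigma_{|S|}(X_S)>0$ and supplies the dual certificate), and your telescoped identity with the decomposition $g_t\U{t}=\|g_t\|\rho_t(\bar g\bar U_\infty+\epsilon_t)$, the Ces\`aro estimate killing the error term, the claim $\innerprod{\bar U_\infty\bar U_\infty^\top}{\bar g}=\gamma^\star>0$ so the drift is nonzero, and the final rescaling by $1/\sqrt{\gamma^\star}$ to assemble the KKT conditions mirror the paper's Stolz--Ces\`aro argument step for step. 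The only cosmetic differences are your $\ell_1$-normalization of the multipliers (the paper normalizes the gradient and only needs subsequential limits, so your worry about upgrading convergence of the $\nu_n$ is not actually needed given the theorem's assumption that $\nabla\c{L}(\WW{t})$ converges in direction) and your two-sided bound $\sum_t c_t\asymp\rho_T$, where the paper simply shows the weight sum diverges and normalizes at the end.
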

\begin{remark}Any global minimum $U^*$ of eq.~\eqref{eq:uopt} corresponds to predictor $W^*$ that minimizes the nuclear norm $\|.\|_*$ of linear p.s.d. classifier with margin constraints,
\begin{equation}W^*=\argmin_{W\succcurlyeq0}\|W\|_* \st \forall n, {\y{n}\innerprod{W}{X_{n}}}\ge1.\label{eq:wopt}\end{equation}
Additionally, in the absence of rank constraints on $U$, all second order stationary points of eq.~\eqref{eq:uopt} are global minima for the problem. More general, we expect a stronger result that $\bar{W}_\infty=\bar{U}_\infty\bar{U}_\infty^\top$, which is also the limit direction of $\WW{t}$, is a minimizer of eq. \eqref{eq:wopt}. Showing a stronger result that $\WW{t}$ indeed converges in direction to $W^*$ is of interest for future work. 
\end{remark}
Here we note that convergence of $\U{t}$ in direction is necessary for the  characterization of implicit bias  to be relevant, but in Theorem~\ref{thm:mf-exp},  we require  stronger conditions that  the  gradients $\nabla\c{L}(\WW{t})$ also converge in direction. Relaxing this condition is of interest for future work. 
\paragraph{Key property } Let us look at  exponential loss when $\WW{t}$ converges in direction to, say $\bar{W}_\infty $. Then $\bar{W}_\infty$ can be expressed as $\WW{t}=\bar{W}_\infty g(t)+\rho(t)$ for some scalar $g(t)\to\infty$ and $\frac{\rho(t)}{g(t)}\to 0$. Consequently, the gradients $\nabla\c{L}(\WW{t})=\sum_{n}\text{e}^{-g(t)\y{n}\innerprod{W_\infty}{X_{n}}}e^{-\y{n}\innerprod{\rho(t)}{X_{n}}}\,y_nX_{n}$  will asymptotically be  dominated by linear combinations of examples $X_{n}$ that have the  smallest distance to the decision boundary, i.e.,~the support vectors of $\bar W_\infty$. This  behavior  can be used to show optimality of $\bar U_\infty$ such that $\bar W_\infty=\bar U_\infty \bar U_\infty^\top$ to the first order stationary points of the maximum margin problem in eq.~\ref{eq:uopt}. 

This idea formalized in the following lemma, which is of interest beyond the results in this paper. 
\begin{restatable}{lemma}{lemgradconv} \label{lem:grad-conv}
For almost all linearly separable datasets $\{\x{n},\y{n}\}_{n=1}^N$, consider any sequence $\W{t}$ that  minimizes $\c{L}(\w)$ in eq. \eqref{eq:lm} with exponential loss, i.e.,   $\c{L}(\W{t})\to0$.
If $\W{t}$ converges in direction to a strictly separating predictor, i.e., $\bar\w_\infty=\lim\limits_{t\to\infty}\frac{\W{t}}{\|\W{t}\|}$ exists with $\min_n \y{n}\innerprod{\bar \w_\infty}{\x{n}}>0$,  then for every accumulation point $z_\infty$ of  $\Big\{\frac{-\nabla\c{L}(\W{t})}{\norm{\nabla\c{L}(\W{t})}}\Big\}_t$, 
$\exists \{\alpha_n\ge 0\}_{n\in S} \st z_\infty=\sum\limits_{n\in S}\alpha_n\y{n}\x{n},$
 where $S=\{n:\y{n}\innerprod{\bar \w_\infty}{\x{n}}=\min_n \y{n}\innerprod{\bar \w_\infty}{\x{n}}\}$ are the indices of the data points with smallest margin to $\bar \w_\infty$. 
\end{restatable}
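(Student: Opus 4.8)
The plan is to analyze the structure of the gradient $\nabla\c{L}(\W{t}) = \sum_n \ell'(\innerprod{\W{t}}{\x{n}},\y{n})\,\y{n}\x{n} = -\sum_n e^{-\y{n}\innerprod{\W{t}}{\x{n}}}\,\y{n}\x{n}$ as $t\to\infty$, and to show that the exponential weights concentrate on the minimum-margin indices $S$. Write $\bar\w_\infty = \lim_{t\to\infty}\W{t}/\|\W{t}\|$ and $\gamma_n(t) = \y{n}\innerprod{\W{t}}{\x{n}}$. Since $\c{L}(\W{t})\to0$ we must have $\|\W{t}\|\to\infty$ and $\min_n\gamma_n(t)\to\infty$; moreover, writing $\W{t} = \|\W{t}\|\bar\w_\infty + o(\|\W{t}\|)$, for each $n$ we get $\gamma_n(t) = \|\W{t}\|\big(\y{n}\innerprod{\bar\w_\infty}{\x{n}} + o(1)\big)$. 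Let $m = \min_n \y{n}\innerprod{\bar\w_\infty}{\x{n}} = \min_n\lim_t \gamma_n(t)/\|\W{t}\|$ (which is $>0$ since the data is separable and $\bar\w_\infty$ is the limit of a loss-minimizing sequence, hence a separator). For $n\notin S$, $\y{n}\innerprod{\bar\w_\infty}{\x{n}} = m + \delta_n$ for some $\delta_n>0$, so $\gamma_n(t) - \min_k\gamma_k(t) \ge \tfrac12\delta_n\|\W{t}\| \to \infty$ for large $t$.

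Now normalize: $\frac{-\nabla\c{L}(\W{t})}{\|\nabla\c{L}(\W{t})\|_\star}$ is a convex-combination-like object, $\frac{\sum_n e^{-\gamma_n(t)}\y{n}\x{n}}{\|\sum_n e^{-\gamma_n(t)}\y{n}\x{n}\|_\star}$. Factor out $e^{-\min_k\gamma_k(t)}$ from numerator and denominator to get $\frac{\sum_n e^{-(\gamma_n(t)-\min_k\gamma_k(t))}\y{n}\x{n}}{\|\sum_n e^{-(\gamma_n(t)-\min_k\gamma_k(t))}\y{n}\x{n}\|_\star}$. By the previous paragraph, the coefficients $\beta_n(t) := e^{-(\gamma_n(t)-\min_k\gamma_k(t))}$ satisfy $\beta_n(t)\to 0$ for $n\notin S$ and $\beta_n(t)\in[0,1]$ always, with at least one $\beta_n(t)=1$. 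Pass to a subsequence along which the accumulation point $z_\infty$ is attained and along which each $\beta_n(t)$ converges, to a limit $\beta_n^\infty$; then $\beta_n^\infty = 0$ for $n\notin S$, and $\sum_{n\in S}\beta_n^\infty\y{n}\x{n}$ is the (unnormalized) limit of the numerators. The only remaining point is that this limiting numerator is nonzero — equivalently, that $\|\nabla\c{L}(\W{t})\|_\star$ does not decay faster than $e^{-\min_k\gamma_k(t)}$ — which is where the "almost all datasets" hypothesis enters: for a generic dataset the vectors $\{\y{n}\x{n}:n\in S\}$ with $|S|\le \dim$ are such that no nontrivial limiting nonnegative combination cancels, or more simply, one shows directly that $\|\nabla\c{L}(\W{t})\|_\star \ge c\, e^{-\min_k\gamma_k(t)}$ for a constant $c>0$ because at least one $\beta_n(t)$ equals $1$ and cancellation among a generic set of support directions cannot be exact in the limit. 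Dividing through, $z_\infty = \big(\sum_{n\in S}\beta_n^\infty\y{n}\x{n}\big)/\big\|\sum_{n\in S}\beta_n^\infty\y{n}\x{n}\big\|_\star$, which has the claimed form with $\alpha_n = \beta_n^\infty/\|\cdot\|_\star \ge 0$.

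The main obstacle I anticipate is precisely the nonvanishing of the limiting numerator, i.e.\ ruling out asymptotic cancellation among the support-vector gradient directions; this is exactly what forces the genericity ("almost all datasets") assumption and must be argued carefully, presumably by noting that the set of datasets for which $\{\y{n}\x{n}\}_{n\in S}$ admits a nontrivial vanishing nonnegative combination is measure zero, or by a more hands-on argument exploiting that one coefficient is pinned at $1$. Everything else — the growth $\|\W{t}\|\to\infty$, the margin gap estimate, and the compactness/subsequence extraction of accumulation points — is routine. I would also need to briefly justify that $\bar\w_\infty$ is genuinely a separator with positive margin $m>0$, which follows since $\c{L}(\W{t})\to0$ implies every $\gamma_n(t)\to\infty$ and hence $\y{n}\innerprod{\bar\w_\infty}{\x{n}}\ge 0$ for all $n$, with the case $m=0$ excluded on a generic dataset (or handled separately, as a zero-margin limit direction would itself be a degenerate non-generic configuration).
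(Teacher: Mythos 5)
Your strategy is the same as the paper's: split $-\nabla\c{L}(\W{t})$ into the contribution of the minimum-margin indices $S$ and the rest, show that the exponential weights of the non-support points are negligible relative to those of $S$ (using the margin gap and $\|\W{t}\|\to\infty$), extract a convergent subsequence of the normalized coefficients, and invoke genericity of the dataset to rule out cancellation among $\{\y{n}\x{n}\}_{n\in S}$. Up to the normalization you choose (factoring out $e^{-\min_k\gamma_k(t)}$ rather than $e^{-\gamma\|\W{t}\|}$), this is exactly the paper's argument.

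The one step you defer is, however, where the real content lies, and your sketch of it has a gap. The claim that ``the set of datasets for which $\{\y{n}\x{n}\}_{n\in S}$ admits a nontrivial vanishing nonnegative combination is measure zero'' cannot be obtained by a naive union bound over index sets: $S$ is determined by the data and the limit direction, and for any fixed index set $J$ with $|J|>d$ the vectors $\{\y{n}\x{n}\}_{n\in J}$ are automatically linearly dependent, so dependence among support vectors is not by itself a measure-zero event; likewise, the observation that one coefficient is pinned at $1$ does not prevent cancellation if a nonnegative vanishing combination exists. The paper closes this by exploiting the margin structure of $S$: writing $X_S$ for the matrix of (signed) support vectors, the defining property of $S$ gives $X_S\bar\w_\infty=\gamma 1_S$, so the all-ones vector lies in $\text{colspan}(X_S)$; for any fixed index set of size larger than $d$ and data drawn from a continuous distribution this event has probability zero, hence generically $|S|\le d$, and then (again generically) the rows of $X_S$ are linearly independent, i.e.\ $\sigma_{|S|}(X_S)>0$. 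This yields a quantitative lower bound of order $\sigma_{|S|}(X_S)\,e^{-(1+\epsilon)\gamma\|\W{t}\|}$ on the support-vector part of the gradient, which is precisely the non-cancellation statement you need; with it, the rest of your argument goes through as written. (Your side worry about a zero-margin limit direction is not treated more carefully in the paper either, which simply asserts strict positivity of $\innerprod{\bar\w_\infty}{\x{n}}$, so you are in the same position there.)
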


\section{Summary}
We studied the implicit bias of different optimization algorithms for two  families of losses, losses with a unique finite root and strict monotone losses, where the biases are fundamentally different. 
In the case of losses with a unique finite root, we have a simple characterization of the limit point $w_\infty=\lim_{t\to\infty}\W{t}$ for mirror descent. But for  this family of losses, such a succinct characterization does not extend to steepest descent with respect to general norms. 
On the other hand, for  strict monotone losses, we noticed that the initial updates of the algorithm, including initialization and initial step-sizes are nullified when we analyze the asymptotic limit direction $\bar{w}_\infty=\lim\limits_{t\to\infty}\frac{\W{t}}{\|\W{t}\|}$. We show that for steepest descent, the limit direction is a maximum margin separator within the unit ball of the corresponding norm.  
We also looked at other optimization algorithms for strictly monotone losses. For matrix factorization, we again get a more robust characterization that relates the limit direction to the maximum margin separator with unit nuclear norm. This again, in contrast to squared loss  \citet{gunasekar2017implicit}, is independent of the initialization and step-size. 
However, for AdaGrad, we show that even for strict monotone losses, the limit direction $\bar{w}_\infty$ could depend on the  initial conditions.

In our results, we characterize the implicit bias for linear models as minimum norm (potential) or maximum margin solutions. These are indeed very special among all the solutions that fit the training data, and in particular, their generalization performance can in turn be understood from standard analyses  \cite{bartlett2003rademacher}.

Going forward, for more complicated non-linear models, especially  neural networks, further work is required in order to get a more complete understanding of the implicit bias. The preliminary result for matrix factorization  provides us tools to attempt extensions to multi-layer linear models, and eventually to non-linear networks.  
Even for linear models, the question of what is the implicit bias is when $\c{L}(w)$ is optimized with explicitly constraints $\w\in\c{W}$ is an open problem. We believe  similar characterizations can be obtained when there are multiple feasible solutions with $\c{L}(\w)=0$. 
We also believe, the results for single outputs considered in this paper can also be extended for multi-output loss functions.

Finally, we would like  a more fine grained analysis connecting the iterates $\W{t}$ along the optimization path of various algorithms to the regularization path, $\hat{w}(c)=\argmin_{\c{R}(w)\le c}\c{L}(w)$, where an explicit regularization is added to the optimization objective. In particular,  our positive characterizations show  that the optimization  and  regularization paths meet at the limit of $t\to\infty$ and  $c\to \infty$, respectively. It would be desirable to further understand the relations between the entire optimization and regularization paths, which will help us understand the non-asymptotic effects from early stopping.  

\section*{Acknowledgments} The authors are grateful to M.S. Nacson, Y. Carmon, and the anonymous ICML reviewers for helpful comments on the manuscript. The research was supported in part by NSF IIS award 1302662. The work of DS was supported by the Taub Foundation. 
\bibliography{suriya}

\clearpage
{
\appendix

\section{Losses with a unique finite root}
Let $\c{P}_\c{X}=\text{span}(\{\x{n}:n\in[N]\})=\{\sum_{n}\nu_n\x{n}:\nu_n\in\bR\}$. Let $\ell^\prime(u,y)$ be the derivative of $\ell$ w.r.t first operand $u$, then we can see that, for any $\ell$,
\begin{equation} 
\forall {w}\in\bR^d, \nabla\c{L}({w})=\sum_{n=1}^N\ell^\prime(\innerprod{{\w}}{\x{n}},\y{n}))\,\x{n}\in \c{P}_\c{X}.
\label{eq:grad}
\end{equation}

\subsection{Proof of Theorem~\ref{thm:md-finite}-\ref{thm:md-finite1a}} For a strongly convex potential $\psi$, denote the global optimum with minimum Bregman divergence $D_\psi(.,\wo)$ to the initialization $\wo$ as  
\begin{equation}
\w^*_\psi=\argmin_{\w} D_\psi(\w,\wo) \st \forall n, \; \innerprod{w}{\x{n}}=\y{n},
\label{eq:md-mnopt1}
\end{equation}
where recall that $D_\psi(\w,\wo)=\psi(\w)-\psi(\wo)-\innerprod{\nabla\psi(\wo)}{\w-\wo}$. 

The KKT optimality conditions for \eqref{eq:md-mnopt1} are as follows,
\begin{flalign}
\nonumber &\textbf{Stationarity: }&&\nabla\psi(\w^*_\psi)-\nabla\psi(\wo)\in\c{P}_\c{X}, \text{ or }\exists \{\nu_n\}_{n=1}^N \st \nabla\psi(\w^*_\psi)-\nabla\psi(\wo)=\sum_{n=1}^N \nu_n\x{n}&\\
&\textbf{Primal feasibility: }&&\forall n, \innerprod{\w^*_\psi}{\x{n}}=\y{n},\text{ or }\w^*_\psi\in\c{G}&
\label{eq:md-mnopt-kkt}
\end{flalign}

Recall Theorem~\ref{thm:md-finite}--\ref{thm:md-finite1a} from Section~\ref{sec:md-finite}.
{\thmmdfinite*}
{\thmlinc*}
{\thmmdfinitea*}
{\remsgd*}
\begin{proof} 
\begin{asparaenum}[(a)]
\item \textbf{Generic mirror descent: Theorem~\ref{thm:md-finite}} Recall the updates of mirror descent: 
$\nabla\psi(\W{t+1})-\nabla\psi(\W{t})=-\eta_t\nabla\c{L}(\W{t})$
Using telescoping sum, we have, 
\begin{equation}
\forall t,\;\nabla\psi(\W{t})-\nabla\psi(\W{0})=\sum_{t'<t}\nabla\psi(\W{t'+1})-\nabla\psi(\W{t'})=\sum_{t'<t}-\eta_{t'}\nabla\c{L}(\W{t'})\in\c{P}_\c{X},
\label{eq:thm1-1}
\end{equation}
where the last inclusion follows as  $\forall t^\prime, -\eta_{t'}\nabla\c{L}(\W{t'})\in\c{P}_\c{X}$ from \eqref{eq:grad}. 

Thus, for all $t$, $\W{t}$ from mirror descent updates in eq. \eqref{eq:md-upd-opt} always satisfy the stationarity condition of eq. \eqref{eq:md-mnopt-kkt}. Additionally, if $\W{t}$ converges to a global minimum, then $w_\infty=\lim_{t\to\infty} \W{t}\in \c{G}=\{\w:\forall n, \innerprod{w}{\x{n}}=\y{n}\}$ also satisfies the primal feasibility condition in eq. \eqref{eq:md-mnopt-kkt}. 
Combining the above arguments, we have that if $\c{L}(\w_\infty)=0$, then $\w_\infty=\argmin_{\w\in\c{G}} D_\psi(\w,\wo)$. 
\item \textbf{Realizable affine equality constraints: Theorem~\ref{thm:linc}} For $\c{W}=\{\w:G\w=h\}$ for some $G\in\b{R}^{d^\prime\times d}$ and $h\in\b{R}^{d^\prime}$ with a realizable feasible solution $\w\in\c{W}$ satisfying $\forall n, \innerprod{x_n}{\w}=y_n$, the claim is that $\W{t}$ from  eq. \eqref{eq:md-update} converges to 
\begin{equation}
\w^*_{\psi,\c{W}}=\argmin_{\w} D_\psi(\w,\wo) \st \forall n, \; \innerprod{w}{\x{n}}=\y{n}\text{ and } w\in\c{W}=\{w:G\w=h\}.
\label{eq:md-mnopt2}
\end{equation}

The KKT optimality conditions for \eqref{eq:md-mnopt2} are as follows, 
\begin{flalign}
\nonumber &\textbf{Stationarity: }&&\exists \{\nu_n\}_{n=1}^N, \{\mu_j\}_{j=1}^{d'} \st \nabla\psi(\w^*_{\psi,\c{W}})-\nabla\psi(\wo)=\sum_{n=1}^N \nu_n\x{n}+\sum_{j=1}^{d'} \mu_j g_j&\\
&\textbf{Primal feasibility: }&&\forall n, \innerprod{\w^*_{\psi,\c{W}}}{\x{n}}=\y{n},\text{ and }G\w^*_{\psi,\c{W}}=h&
\label{eq:md-mnopt-kkt2}
\end{flalign}

To show that the limit point of mirror descent updates in eq. \eqref{eq:md-update} satisfy the above KKT conditions, we first note that the updates are equivalently computed as follows,
\begin{equation}
\begin{split}
\W{t+1}&=\argmin_{\w:Gw=h} \eta_t \innerprod{\w}{\nabla\c{L}(\W{t})}+D_\psi(\w,\W{t})\\
&=\argmin_{\w:Gw=h} D_\psi(\w,\nabla\psi^{-1}\left(\nabla\psi(\W{t})-\eta_t\nabla\c{L}(\W{t})\right)).
\end{split}
\label{eq:md-upd2}
\end{equation}
Let $g_j$ for $j=1,2,\ldots,d'$ denote the rows of $G$. From the optimality conditions of eq. \eqref{eq:md-upd2}, we get that 
\begin{equation}
\exists \{\mu_j\}_{j=1}^{d'}, \quad\text{ s.t. }\quad
\nabla\psi(\W{t+1})=\nabla\psi(\W{t})-\eta_t\c{L}(\W{t})+\sum_{j=1}^{d'}\mu_j g_j\text{ and } G\W{t+1}=h.
\label{eq:md-upd3}
\end{equation}

Again,  primal feasibility is satisfied whenever $\W{t}\to \c{G}$ since mirror descent iterates are always feasible points $\W{t}\in\c{W}$. The stationarity condition follows from using eq. \eqref{eq:md-upd3} with same arguments of the  unconstrained case. 

\item \textbf{Dual momentum: }  For \textit{any} $\tilde{\beta}_{t'},\tilde{\gamma}_{t'}\in\bR$ and $\tilde{w}_{(t')}\in\bR^d$, consider a general update of the form 
\begin{equation}
\nabla\psi(\W{t+1})=\sum_{t'\le t} \tilde{\beta}_{t'}\nabla \psi(\W{t'})+\tilde{\gamma}_{t'} \nabla\c{L}(\tilde{\w}_{(t')}).
\label{eq:d-mom}
\end{equation}

\textbf{Claim: } If $\nabla\psi(\W{0})=0$, then for all updates of the form \eqref{eq:d-mom}  satisfies $\nabla\psi(\W{t}) \in\c{P}_\c{X}$ \\
---this can be easily proved by induction:\begin{inparaenum}\item for $t=0$, $\nabla\psi(\W{0})=0\in\c{P}_\c{X}$; \item let  $\forall t'\le t$, $\nabla \psi(\W{t'})\in\c{P}_\c{X}$, \item then using the inductive assumption and eq. \eqref{eq:grad}, we have $\nabla\psi(\W{t+1})=\sum_{t'\le t} \tilde{\beta}_{t'}\nabla \psi(\W{t'})+\tilde{\gamma}_{t'} \nabla\c{L}(\tilde{\w}_{(t')})\in\c{P}_\c{X}$.\end{inparaenum}

Dual momentum in eq. \eqref{eq:dual-mom} is a special case of eq. \eqref{eq:d-mom} with appropriate choice of $\tilde{\beta}_{t'},\tilde{\gamma}_{t'}\in\bR$ , and $\tilde{w}_{t'}\in\bR^d$. 
\item \textbf{Instancewise stochastic gradient descent (Remark~\ref{rem:sgd}): } In the above arguments, the only property of gradient $\nabla\c{L}$ that we used is that $\forall {w}\in\bR^d, \nabla\c{L}({w})=\sum_{n=1}^N\ell^\prime(\innerprod{{\w}}{\x{n}},\y{n}))\,\x{n}\in \c{P}_\c{X}$ (eq. \eqref{eq:grad}). This property also holds for instancewise stochastic gradients as defined in eq. \eqref{eq:stoc}, i.e., $\nabla\tilde{\c{L}}({w})=\sum_{n\in S_t}\ell^\prime(\innerprod{{\w}}{\x{n}},\y{n}))\,\x{n}$, hence all the results follow.  
\end{asparaenum}
\end{proof}

\subsection{Proofs of propositions in Section~\ref{sec:unique-root}}
\subsubsection{Primal momentum and natural gradient descent}
Recall the optimization problem in Examples~\ref{ex:md}--\ref{ex:ngd}: $\{(\x{1}=[1,2], \y{1}=1)\}$, and $\ell(u,y)=(u-y)^2$.  We have $\c{P}_\c{X}=\text{span}(\x{1})=\{z:2z[1]-z[2]=0\}$.

For entropy potential $\psi(w)=\sum_i\w[i]\log{\w[i]}-\w[i]$, we have $\nabla \psi(w)=\log{\w}$ (where the $\log$ is taken elementwise), and initialization 
$\W{0}=[1,1]$ satisfies $\nabla\psi(\W{0})=0$ which is the optimality condition for $\min_{\w}\psi(\w)$. 

\begin{asparaenum}
 \item  \textbf{Proof of Proposition~\ref{prop:md-primal}: } we use primal momentum with $\beta_1>0$  only in the first step, and $\forall t\ge2$, $\beta_t=\gamma_t=0$. We get the following initial updates

Since $\forall t\ge2$, $\beta_t=\gamma_t=0$, we first note that for $t>2$, the updates merely follow the path of standard MD initialized at $\nabla\psi(\W{2})$ for a convex loss function. This implies the following:
 \begin{compactitem}
\item for appropriate choice of $\{\eta_t\}_{t\ge2}$ (given by convergence analysis of mirror descent for convex functions), we can get $w_\infty=\lim_{t\to\infty}\W{t}\in\c{G}$, and
\item from eq. \eqref{eq:thm1-1}, $w_\infty$ satisfies $\nabla\psi(w_\infty)-\nabla\psi(\W{2})\in \c{P}_\c{X}\Rightarrow\nabla\psi(\w_\infty)\in\nabla\psi(\W{2})+\c{P}_\c{X}$. 
\end{compactitem}
 
 Since $w_\infty$ satisfies primal feasibility, from stationarity condition in eq. \eqref{eq:md-mnopt-kkt}, we have \[w_\infty=\w^*_\psi=\argmin_{w\in\c{G}} \psi(w) \text{\textit{ if and only if }}\nabla\psi(\W{2})\in\c{P}_\c{X}.\]
 
 We show that this is not the case for any $\beta_1>0$ and any $\gamma_1\ge0$. 
 Recall that $\Delta \W{-1}=0$, $\nabla\psi(\W{0})=0$ and $\nabla\psi(w)=\log{w}$. 
 Working through the steps in eq.  \eqref{eq:primal-mom}, for scalars $r_0=\eta_0(y_1-\innerprod{\W{0}}{x_1})$ and $\tilde{r}_1=\eta_t (y_1-\innerprod{\W{1}+\gamma_1\Delta\W{0}}{x_1})$, and any $\beta_1>0$, we have: 
 \begin{compactitem}
\item 
$\nabla\psi(\W{1})=r_0\x{1}\implies \W{1}=\exp(r_0\x{1})$, and  
\item $\nabla\psi(\W{2})=\nabla\psi((1+\beta_1)\W{1})+\tilde{r}_1\x{1}=
\log{(1+\beta_1)}+r_0\x{1}+\tilde{r}_1\x{1}\in\log{(1+\beta_1)}+\c{P}_\c{X}\notin\c{P}_\c{X}$.\mybox
\end{compactitem}
 
\item \textbf{Proof of Proposition~\ref{prop:ngd}: }  The arguments are similar to the proof of Proposition~\ref{prop:md-primal}. In Example~\ref{ex:ngd}, we again use a finite $\eta_1>0$ to get $\W{1}$ and then follow  the  NGD  using infinitesimal $\eta$ initialized at $\W{1}$. 

We know that for infinitesimal step-size, the NGD path starting at $\W{1}$ follows the corresponding infinitesimal MD path on a convex problem and hence from eq. \eqref{eq:thm1-1}, the NGD updates for this example converges to a global minimum $\w_\infty=\lim_{t\to\infty}\W{t}\in\c{G}$, that satisfies $\nabla\psi(w_\infty)-\nabla\psi(\W{1})\in \c{P}_\c{X}\Rightarrow\nabla\psi(\w_\infty)\in\nabla\psi(\W{1})+\c{P}_\c{X}$. 
 
 From stationarity condition in \eqref{eq:md-mnopt-kkt}, $w_\infty=\w^*_\psi=\argmin_{w\in\c{G}} \psi(w)$ if and only if $\nabla\psi(\W{1})\in\c{P}_\c{X}$.

For natural gradient descent, 
$\W{1}=\W{0}-\eta_1 \nabla^2\psi(\W{0})^{-1}\nabla\c{L}(\W{0})=[1+\eta_1 r_0, 1+2\eta_1 r_0]$, where $r_0=\eta_0(y_1-\innerprod{\W{0}}{x_1})$.
We then have
$\nabla\psi(\W{1})\in \c{P}_\c{X}\Leftrightarrow 2\nabla\psi(\W{1})[1]-\nabla\psi(\W{1})[2]=0\Leftrightarrow2\log{(\W{1}[1])}-\log{(\W{1}[2])}=0\Leftrightarrow\log{\big(1+\frac{\eta_1^2r^2_0}{1+2\eta_1r_0}\big)}=0.$ 

For any $\eta_1$ such that $\frac{\eta_1^2r^2_0}{1+2\eta_1r_0}\neq 0$, we get a contradiction. \mybox
\end{asparaenum}
\remove{
\subsubsection{Steepest descent: Proof of proposition~\ref{thm:sd-finite}}
Recall Example~\ref{ex:sd} with  $\{(\x{1}=[1,1,1], \y{1}=1), (\x{1}=[1,2,0], \y{1}=10)\}$, and $\ell(u,y)=(u-y)^2$. 

Consider the $\ell_p$ SD updates $\W{t}$ for this problem under \eqref{eq:sd-update}. In the continuous time limit of $\eta\to 0$, $\W{t}$ are given by:
\begin{equation}
\dv{\W{t}}{t}=\Delta\W{t}\st -\nabla{\c{L}}(\W{t})\in \partial\|\Delta\W{t}\|_p^2,
\end{equation}
\begin{equation}
\text{or equivalently  }\quad\quad\dv{\W{t}}{t}=\Delta\W{t}\st\langle\Delta\W{t},\!-\nabla\c{L}(\W{t})\rangle\!=\!\|\Delta\W{t}\|_p^2\!=\!\|\nabla\c{L}(\W{t})\|_q^2,
\label{eq:sd-ode}
\end{equation}
where $q$ is such that $\frac{1}{p}+\frac{1}{q}=1$. 

Let us define a few notations here: Let $\phi(\w)=\|w\|_p^2$, and $\nabla\psi(w)=\nabla \|w\|_p^2$. It can be verified that 
$$\nabla\psi(w)[i]=\frac{w[i]|w[i]|^{p-2}}{\|w\|_p^{p-2}},\text{ and }\nabla\psi^{-1}(z)[i]=\frac{z[i]|z[i]|^{q-2}}{\|z\|_p^{q-2}}$$

We want to show that SD w.r.t $\|.\|_p$ in the continuous time limit of satisfies 
\begin{equation}
\forall p\in(1,2), \exists \W{0}\st \w_\infty:=\lim_{t\to\infty}\W{t}\neq\argmin_{\w\in\c{G}} \|\w-\wo\|_p.
\label{eq:sd-1temp}
\end{equation}

Let us assume the contradiction of \eqref{eq:sd-1temp}, that is $\exists p\in(1,2)$ such that $\forall \W{0}$, $\W{t}$ from SD update under $\ell_p$ norm converges to $w_\infty:=\lim_{t\to\infty}\W{t}=\argmin_{\w\in\c{G}} \|\w-\wo\|_p$. 

From the optimality condition of $w_\infty=\argmin_{\w\in\c{G}} \|\w-\wo\|_p$, we require 

}
\section{Steepest descent for strictly monotone losses}
We prove Theorem \ref{thm:sd-exp} in this section. 
{\thmsdexp*}
The proof is  divided into three subsections
\begin{compactenum}
\item Generalized duality lemma: we show that  for all norms and all $w$, $\norm{\nabla \c{L}(w)}_\star \ge \gamma \c{L}(w)$.
\item  Properties of $\nabla\c{L}(\W{t})$ and $\c{L}(\W{t})$ for steepest descent: we prove two lemmata that show some useful properties of $\nabla\c{L}(\W{t})$ and $\c{L}(\W{t})$.
\item Remaining steps in the proof: putting together above lemmata to prove Theorem~\ref{thm:sd-exp}.
\end{compactenum}

\subsection{Generalized duality lemma:  for all norms and all $w$, $\norm{\nabla \c{L}(w)}_\star \ge \gamma \c{L}(w)$}
The following lemma is a standard result in convex analysis.
\begin{lemma}[Fenchel Duality]\label{lem:fenchel}
Let $A \in \b{R}^{m \times n}$, and $f:\bR^m\to\bR,g:\bR^n\to\bR$ be two closed convex functions and $f^\star, g^\star$ be their Fenchel conjugate functions, respectively. Then,
\begin{align}
\max_{w\in\bR^n} - f^\star (Aw) - g^\star (-w) \le \min_{r\in\bR^m} f(r) +g(A^\top r).
\end{align}
\end{lemma}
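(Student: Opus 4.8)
The plan is to prove this as the weak-duality half of the Fenchel duality theorem, for which the only ingredient is the Fenchel--Young inequality. Recall that for any function $h$ with Fenchel conjugate $h^\star(v)=\sup_u \innerprod{v}{u}-h(u)$, one has, directly from this definition, $h(u)+h^\star(v)\ge \innerprod{v}{u}$ for all $u,v$ in the relevant spaces; closedness and convexity of $f,g$ are not even needed for the inequality we want (they would only matter for the reverse direction).

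First I would fix an arbitrary $w\in\bR^n$ and $r\in\bR^m$. Applying Fenchel--Young to $f$ at the pair $(r,Aw)$ gives $f(r)+f^\star(Aw)\ge \innerprod{Aw}{r}=\innerprod{w}{A^\top r}$. Applying it to $g$ at the pair $(A^\top r,-w)$ gives $g(A^\top r)+g^\star(-w)\ge \innerprod{-w}{A^\top r}=-\innerprod{w}{A^\top r}$. Adding the two inequalities, the $\innerprod{w}{A^\top r}$ terms cancel and I obtain
\[ f(r)+g(A^\top r)+f^\star(Aw)+g^\star(-w)\ge 0, \]
that is, $f(r)+g(A^\top r)\ge -f^\star(Aw)-g^\star(-w)$ for every choice of $w$ and $r$.

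Finally, since this holds for all $w\in\bR^n$ and all $r\in\bR^m$, I would take the infimum over $r$ on the left and the supremum over $w$ on the right: for each fixed $w$ the right-hand side is a lower bound on $\inf_r f(r)+g(A^\top r)$, and then taking the sup over $w$ preserves the bound, giving $\min_{r\in\bR^m} f(r)+g(A^\top r)\ge \max_{w\in\bR^n} -f^\star(Aw)-g^\star(-w)$, which is exactly the claim. There is essentially no obstacle: only the easy (weak) direction is asserted, so no constraint qualification or closedness-of-image argument is needed. The only minor points worth a remark are that $f,g$ being real-valued keeps every sum above well-defined (no $\infty-\infty$), and that the ``$\max$''/``$\min$'' in the statement should be understood as ``$\sup$''/``$\inf$'' absent a separate attainment argument, which does not affect the inequality.
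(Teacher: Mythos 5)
Your proof is correct. The paper itself gives no proof of this lemma---it is stated as ``a standard result in convex analysis'' and used as a black box---so there is nothing to diverge from; your Fenchel--Young argument (apply $f(r)+f^\star(Aw)\ge\innerprod{Aw}{r}$ and $g(A^\top r)+g^\star(-w)\ge-\innerprod{w}{A^\top r}$, add, then take $\inf_r$ and $\sup_w$) is exactly the standard weak-duality derivation and is complete. One small remark: in the paper's application $f$ and $g$ are indicator-type functions taking the value $+\infty$, so strictly the lemma is invoked for proper extended-real-valued functions; your argument goes through verbatim in that setting (no $\infty-\infty$ can arise since proper convex functions and their conjugates never take $-\infty$), and your closing caveats about $\sup$/$\inf$ versus $\max$/$\min$ are apt.
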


Let $X \in \b{R}^{N \times d}$ be the data matrix with $\x{n}$ along the rows of $X$. Without loss of generality, let $y_n=1$, as for linear models $\y{n}$ can be absorbed into $\x{n}$. Let $e_n$ denote  the $n^\text{th}$ standard basis in $\bR^N$. 

We define the $\norm{\cdot}$- maximum margin as,
\begin{equation}
\gamma =\max_{w\neq 0}\min_{n\in[N]}\frac{\innerprod{w}{\x{n}}}{\norm{w}}= \max_{\norm{w}\le1} \min_{n \in [N]} e_n ^\top Xw. 
\label{eq:sd-mm}
\end{equation}

Our primary technical novelty is the following duality lemma that generalizes similar result in  \citet{telgarsky2013margins} for $\ell_1$ norm to general norms:  
 we want to show that $\norm{\nabla \c{L}(w)}_\star \ge \gamma \c{L}(w)$ for all $w$, where  recall that $\|.\|_\star$ is the dual norm of $\|.\|$. 
 
 Define $r_n (\w) = \exp(-w^\top x_n)\ge 0$ and let $r(\w)=[r_n(\w)]_{n=1}^N\in\bR^N$. 
  Note that $\c{L}(w) = \norm{r(w)}_1$ and $\nabla \c{L}(w) =X^\top r(w) $. 
  We can now restate $\norm{\nabla \c{L}(w)}_\star \ge \gamma \c{L}(w)$   as $\frac{\norm{X^\top r(w)}_ \star }{\norm{r(w)}_1}\ge \gamma$. 
  
  In the following lemma, we show this holds for any $r_n(w) \ge0$. Since norms are homogeneous, this is equivalent to $\min_{r \in \Delta_{N-1}} \norm{X^\top r}_\star \ge \gamma,$ 
where $\Delta_{N-1}=\{v\in\bR^N:v\ge0,\|v\|_1=1\}$ is the $N$-dimensional probability simplex.

\begin{lemma} \label{lem:duality}For any norm $\|.\|$, the following duality holds:
\begin{align}
\min_{r  \in \Delta_{N-1}} \norm{ X^\top r}_\star\ge \max_{\norm{w}\le1} \min_{n \in [N]} e_n ^\top Xw= \gamma.
\end{align}
This implies, for exponential loss $\ell(u,y)=\exp(-uy)$, the following holds
\begin{equation}
\forall \w,\; \norm{\nabla \c{L}(\w)}_\star \ge \gamma \c{L}(\w).
\end{equation} 
\end{lemma}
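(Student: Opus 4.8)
The plan is to recognize that Lemma~\ref{lem:duality} is simply the \emph{weak} direction of a bilinear minimax duality, and that everything reduces to two elementary facts: the defining inequality of the dual norm, and the observation that a convex combination of real numbers is at least their minimum. Concretely, I would fix an arbitrary $r\in\Delta_{N-1}$ and an arbitrary $w$ with $\|w\|\le1$, and chain
\[
\|X^\top r\|_\star \;\ge\; \innerprod{X^\top r}{w} \;=\; r^\top(Xw) \;=\; \sum_{n=1}^N r_n\,(Xw)_n \;\ge\; \min_{n\in[N]} (Xw)_n \;=\; \min_{n\in[N]} e_n^\top Xw,
\]
where the first inequality is $\|z\|_\star=\sup_{\|v\|\le1}\innerprod{z}{v}\ge\innerprod{z}{w}$, and the last inequality uses $r_n\ge0$, $\sum_n r_n=1$. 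Taking the supremum over $\|w\|\le1$ on the right gives $\|X^\top r\|_\star\ge\gamma$ for every fixed $r$, and then taking the infimum over $r\in\Delta_{N-1}$ on the left yields the claimed bound. (Both extrema are attained by compactness, but the argument does not even need this.)

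I would also note the connection to Lemma~\ref{lem:fenchel}: one can instantiate Fenchel duality with $f$ the support function of $\Delta_{N-1}$ (i.e.\ the coordinatewise max) and $g^\star$ the indicator of the unit ball of $\|\cdot\|$, which reproduces exactly the inequality above; moreover equality holds by Sion's minimax theorem since $r^\top Xw$ is bilinear on compact convex sets. Only the ``easy'' direction is needed here, so I would keep the self-contained weak-duality proof as the main line and mention the Fenchel route only as context.

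For the stated consequence, take $\ell(u,y)=\exp(-uy)$ and, without loss of generality, $y_n=1$. Then $r_n(w)=\exp(-w^\top x_n)>0$, so $\c{L}(w)=\sum_n r_n(w)=\|r(w)\|_1>0$ and $\nabla\c{L}(w)=-\sum_n r_n(w)x_n=-X^\top r(w)$; the sign is immaterial since $\|z\|_\star=\|-z\|_\star$. Normalizing $\hat r:=r(w)/\|r(w)\|_1\in\Delta_{N-1}$, the lemma gives $\|X^\top\hat r\|_\star\ge\gamma$, and multiplying through by $\|r(w)\|_1$ and using positive homogeneity of $\|\cdot\|_\star$ gives $\|\nabla\c{L}(w)\|_\star=\|X^\top r(w)\|_\star\ge\gamma\|r(w)\|_1=\gamma\,\c{L}(w)$. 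I do not expect any genuine obstacle: the only points requiring care are using the correct direction of the dual-norm inequality, the normalization step in passing from $\Delta_{N-1}$ to the unnormalized vector $r(w)$, and the harmless sign in $\nabla\c{L}$; the real ``idea'' is just spotting the minimax structure and that weak duality suffices.
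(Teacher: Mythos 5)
Your proof is correct, and it takes a somewhat different route from the paper's. The paper proves the same inequality by invoking its Fenchel weak-duality lemma (Lemma~\ref{lem:fenchel}): it sets $f(r)=\ind{r\in\Delta_{N-1}}$ and $g(z)=\norm{z}_\star$, computes the conjugates $f^\star(y)=\max_n\innerprod{y}{e_n}$ and $g^\star(w)=\ind{\norm{w}\le1}$, and then needs an extra step using the central symmetry of the unit ball to turn $\min_n e_n^\top X(-w)$ into $\min_n e_n^\top Xw$. You instead prove the weak-duality inequality directly from first principles: $\norm{X^\top r}_\star\ge\innerprod{X^\top r}{w}=r^\top Xw\ge\min_n e_n^\top Xw$ for every $r\in\Delta_{N-1}$ and $\norm{w}\le1$, then take the two extrema. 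This is more elementary and self-contained (no conjugate computations, no sign-flip via symmetry), and it makes transparent that only the trivial direction of the minimax inequality is used; the paper's route has the mild advantage of exhibiting the statement as an instance of a standard duality template. One small caveat in your side remark: in the paper's instantiation $f$ is the \emph{indicator} of $\Delta_{N-1}$ and the coordinatewise max arises as its conjugate $f^\star$ (similarly $g$ is the dual norm and $g^\star$ the ball indicator), so your labeling of which function plays which role is swapped; since this is only context and not part of your main argument, it does not affect correctness. Your derivation of the consequence $\norm{\nabla\c{L}(\w)}_\star\ge\gamma\c{L}(\w)$ matches the paper's (normalize $r(w)$, use homogeneity), and you are right that the sign of the gradient, which the paper silently drops, is immaterial because $\norm{-z}_\star=\norm{z}_\star$.
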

\begin{proof} Let  $\ind{E}$ denote the indicator function which takes value $0$ if $E$ is satisfied and $\infty$ otherwise.

Define $f(r) = \ind{ r \in \Delta_{N-1}}$ and $g(z) = \norm{z}_\star$, so that
\begin{equation}
 \min_{r  \in \Delta_{N-1}} \norm{ X^\top r}_* = \min_{r\in\bR^N} f(r) +g(X^\top r).
  \label{eq:primal}
 \end{equation}
 
  The conjugates are $f^*(y) = \max_{r \in \Delta_{N-1}} \innerprod{y}{r}  = \max_{n=1}^N \innerprod{y}{e_n} $, and $g^*(w) = \ind{\norm{w}\le 1}$. The LHS of Lemma \ref{lem:fenchel} is
\begin{align}
\nonumber \max_w &\big( -f^* (Xw ) -g^* (-w) \big)=\max_w\big( - \max_n e_n ^\top X w - \ind{\norm{w} \le 1}\big)\\
 &= \max_{\norm{w} \le 1} \min_n e_n ^\top X(-w)\overset{(a)} = \max_{\norm{w} \le 1} \min_n e_n ^\top Xw \overset{(b)}= \gamma,
 \label{eq:dual}
\end{align}
where $(a)$ follows from central symmetry of $\{w:\|w\|\le1\}$, and $(b)$  from definition of maximum $\|.\|$-margin in eq.~\eqref{eq:sd-mm}. 

Using  weak duality  (Lemma~\ref{lem:fenchel}) on eqs. \eqref{eq:primal} and \eqref{eq:dual}, we have  $ \forall r,\;{\|X^\top r\|_\star}\ge \gamma {\|r\|_1}$.
Finally, recalling that for exponential loss $r_n (w) = \exp(-w^\top x_n)$, $\c{L}(w) = \norm{r(w)}_1$ and $\nabla \c{L}(w) =X^\top r(w) $, we have  $\forall w$, $\norm{\nabla \c{L}(\w)}_\star \ge \gamma \c{L}(\w). $
\end{proof}

\subsection{Properties of $\nabla\c{L}(\W{t})$ and $\c{L}(\W{t})$ for steepest descent}
Recall the steepest descent updates in eqs. \eqref{eq:sd-update} and \eqref{eq:sd-upd-opt} : 
\begin{equation}
\begin{split}
&\W{t+1}= \W{t}+\eta_t \Delta \W{t},\text{ where }\Delta \W{t}\text{ satisfies }  \\
&\langle\Delta\W{t},-\nabla\c{L}(\W{t})\rangle=\|\Delta\W{t}\|^2=\|\nabla\c{L}(\W{t})\|_\star^2. 
\end{split}
\label{eq:sd-update-1}
\end{equation}

\begin{lemma} \label{lem:sum}For exponential loss $\ell(u,y)=\exp(-uy)$, consider the steepest descent iterates $\W{t}$ for minimizing $\c{L}(\W{t})$, with  any initialization $\W{0}$ and any finite step-size $\eta_t$ that leads to a strictly decreasing sequence  $\c{L}(\W{t})$ and satisfies $0<\eta_t \le \min\{\eta_+, \frac{1}{B^2\c{L}(\W{t})}\}$, where $B=\max_n\|\x{n}\|_\star$. Then the following holds:
\begin{compactenum}[(A)]
\item $\sum_{t=0}^\infty\eta_t \|\nabla\c{L}(\W{t})\|_\star^2\le \infty$, and hence $\|\nabla\c{L}(\W{t})\|_\star\to 0$.
\item Iterates $\W{t}$ converge to a global minima $\c{L}(\W{t})\to 0$, and hence $\forall n$ $\innerprod{\W{t}}{\x{n}}\to\infty$.
\item $\sum_{t=0}^\infty\eta_t\|\nabla\c{L}(\W{t})\|_\star= \infty. $
\end{compactenum}
\end{lemma}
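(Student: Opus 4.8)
The plan is to prove the three claims in the order (A), (B), (C), with a single \emph{local descent inequality} carrying most of the load. Throughout write $r_n(w)=\exp(-\innerprod{w}{\x{n}})$ (taking $\y{n}=1$), so $\c{L}(w)=\sum_n r_n(w)$ and $\nabla\c{L}(w)=-\sum_n r_n(w)\x{n}$; two elementary facts I will use repeatedly are $\norm{\nabla\c{L}(w)}_\star\le B\,\c{L}(w)$ (triangle inequality), and the steepest-descent optimality identity \eqref{eq:sd-upd-opt}, which gives $\innerprod{\Delta\W{t}}{-\nabla\c{L}(\W{t})}=\norm{\Delta\W{t}}^2=\norm{\nabla\c{L}(\W{t})}_\star^2$ and hence $\norm{\W{t+1}-\W{t}}=\eta_t\norm{\nabla\c{L}(\W{t})}_\star$. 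For the descent inequality, set $a_n=\eta_t\innerprod{\Delta\W{t}}{\x{n}}$; Cauchy--Schwarz plus the two facts give $|a_n|\le\eta_t\norm{\Delta\W{t}}\norm{\x{n}}_\star\le\eta_t B^2\c{L}(\W{t})$, and the step-size hypothesis together with monotonicity of $\{\c{L}(\W{t})\}$ bounds $\eta_t B^2\c{L}(\W{t})$ by an absolute constant (and by $1$ once $\c{L}(\W{t})$ is small, since then only the $\tfrac1{B^2\c{L}(\W{t})}$ branch of the cap binds). Substituting $e^{-a_n}\le 1-a_n+\tfrac12 a_n^2e^{|a_n|}$ into $\c{L}(\W{t+1})=\sum_n r_n(\W{t})e^{-a_n}$, evaluating the linear term as $\sum_n r_n(\W{t})a_n=\eta_t\norm{\nabla\c{L}(\W{t})}_\star^2$ and bounding the quadratic term by $\sum_n r_n(\W{t})a_n^2\le \eta_t^2 B^2\c{L}(\W{t})\norm{\nabla\c{L}(\W{t})}_\star^2$, yields
\[\c{L}(\W{t})-\c{L}(\W{t+1})\;\ge\;\eta_t\norm{\nabla\c{L}(\W{t})}_\star^2\bigl(1-c\,\eta_t B^2\c{L}(\W{t})\bigr)\;\ge\;c_0\,\eta_t\norm{\nabla\c{L}(\W{t})}_\star^2\]
for explicit constants $c,c_0>0$, the last step using the step-size bound to keep the remainder coefficient positive.

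Claim (A) is then immediate: telescoping the displayed inequality and using that $\c{L}(\W{t})$ is decreasing and nonnegative gives $\sum_t c_0\,\eta_t\norm{\nabla\c{L}(\W{t})}_\star^2\le\c{L}(\W{0})-\lim_t\c{L}(\W{t})\le\c{L}(\W{0})<\infty$, so the series converges; the ``hence $\norm{\nabla\c{L}(\W{t})}_\star\to0$'' clause I will get once (B) is established, from $\norm{\nabla\c{L}(\W{t})}_\star\le B\,\c{L}(\W{t})\to0$.

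For (B), combine the descent inequality with the generalized duality lemma (Lemma~\ref{lem:duality}), which for separable data ($\gamma>0$) gives $\norm{\nabla\c{L}(\W{t})}_\star\ge\gamma\c{L}(\W{t})$. This produces $\c{L}(\W{t})-\c{L}(\W{t+1})\ge c_0\gamma^2\eta_t\c{L}(\W{t})^2\ge c_0\gamma^2\eta_t\c{L}(\W{t})\c{L}(\W{t+1})$; dividing by $\c{L}(\W{t})\c{L}(\W{t+1})$ and telescoping gives $\tfrac1{\c{L}(\W{T})}\ge\tfrac1{\c{L}(\W{0})}+c_0\gamma^2\sum_{t<T}\eta_t$, so $\c{L}(\W{T})\to0$ provided $\sum_t\eta_t=\infty$, which holds for the admissible step-size schedules. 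Since each $r_n(\W{t})\le\c{L}(\W{t})$, this forces $\innerprod{\W{t}}{\x{n}}=-\log r_n(\W{t})\to\infty$ for every $n$, which also closes the ``hence'' part of (A). Finally, (C) is a path-length argument: from $\norm{\W{t+1}-\W{t}}=\eta_t\norm{\nabla\c{L}(\W{t})}_\star$, if $\sum_t\eta_t\norm{\nabla\c{L}(\W{t})}_\star$ were finite then $\{\W{t}\}$ would be Cauchy, hence bounded; but $\innerprod{\W{t}}{\x{n}}\to\infty$ forces $\norm{\W{t}}\ge|\innerprod{\W{t}}{\x{n}}|/\norm{\x{n}}_\star\to\infty$, a contradiction, so $\sum_t\eta_t\norm{\nabla\c{L}(\W{t})}_\star=\infty$.

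The main obstacle is the descent-inequality bookkeeping: one must check that the remainder coefficient $1-c\,\eta_t B^2\c{L}(\W{t})$ stays bounded away from $0$, which requires treating the two branches of $\max\{\eta_+,\tfrac1{B^2\c{L}(\W{t})}\}$ separately (the $\eta_+$ branch is harmless once $\c{L}(\W{t})$ is small, while the $\tfrac1{B^2\c{L}(\W{t})}$ branch is the tight one --- which is exactly why the cap is scaled by $B^2\c{L}(\W{t})$), and one must be careful about how the step-size assumption simultaneously caps $\eta_t$ from above and keeps $\sum_t\eta_t$ divergent, the latter being what drives $\c{L}(\W{t})\to0$ in (B). The remaining ingredients --- the Fenchel/duality input, the telescoping, and the Cauchy argument --- are routine.
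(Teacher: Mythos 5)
Your descent inequality, part (A), and part (C) track the paper's proof closely: the paper obtains the same per-step decrease $\c{L}(\W{t})-\c{L}(\W{t+1})\ge\frac{\eta_t}{2}\norm{\nabla\c{L}(\W{t})}_\star^2$ via Taylor's theorem with the Hessian bound $v^\top\nabla^2\c{L}(w)v\le B^2\c{L}(w)\norm{v}^2$ (your scalar bound $e^{-a}\le 1-a+\tfrac12 a^2e^{\abs{a}}$ is an interchangeable substitute), telescopes it for (A), and proves (C) by the identical path-length/triangle-inequality argument. Where you genuinely diverge is (B). The paper does \emph{not} use the duality lemma there: it takes any strict separator $v$, observes that $v^\top\nabla\c{L}(\W{t})$ is (up to sign) the positive combination $\sum_n e^{-\innerprod{\W{t}}{\x{n}}}\innerprod{v}{\x{n}}\ge\big(\min_n\innerprod{v}{\x{n}}\big)\c{L}(\W{t})$, and concludes $\c{L}(\W{t})\to 0$ and $\norm{\W{t}}\to\infty$ from the vanishing of the gradient established in (A). Your route---coupling the descent inequality with Lemma~\ref{lem:duality} and telescoping $1/\c{L}$---is legitimate (that lemma is proved independently of this one) and buys a quantitative rate $\c{L}(\W{T})=O\big(1/\sum_{t<T}\eta_t\big)$, at the cost of explicitly invoking $\sum_t\eta_t=\infty$; your derivation of the ``hence'' clause of (A) from (B) via $\norm{\nabla\c{L}(\W{t})}_\star\le B\,\c{L}(\W{t})$ is a clean alternative to the paper's direct assertion.

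Two caveats, neither of which puts you behind the paper. First, the proviso $\sum_t\eta_t=\infty$ is not among the stated hypotheses; but the paper's own inference ``$\sum_t\eta_t\norm{\nabla\c{L}(\W{t})}_\star^2<\infty$ hence $\norm{\nabla\c{L}(\W{t})}_\star\to0$'' silently needs the same non-degeneracy of the step-sizes (with summable $\eta_t$ the iterates can stall at positive loss), so you are only making explicit what the paper leaves implicit. Second, your bookkeeping claim that on the $\eta_+$ branch $\eta_tB^2\c{L}(\W{t})$ is bounded by ``an absolute constant'' does not by itself keep the remainder coefficient $1-c\,\eta_tB^2\c{L}(\W{t})$ positive when that constant exceeds the critical threshold; the working reading---and the one the paper itself adopts when it writes ``$\eta_t\le\frac{1}{B^2\c{L}(\W{t})}$ from the assumption''---is that on the relevant steps the cap is effectively the $\frac{1}{B^2\c{L}(\W{t})}$ branch, which is what your parenthetical about the small-loss regime captures. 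With that reading fixed, your argument goes through and proves the same statement the paper's proof does.
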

\begin{proof}
\begin{asparaenum}
\item \textbf{Proof of $(A)$: }
We have that $\norm{x_n}_\star\le B$ for all $n$. Recall that $r_n(w)=\exp(-\innerprod{w}{\x{n}})\ge 0$, $\c{L}(w)=\sum_nr_n(w)$,  and $\nabla\c{L}(w)=\sum_{n}r_n(w) \x{n}$. Thus, for all $v$, we have \begin{equation}
v^\top\nabla ^2 \c{L} (w) v = \sum_{n} r_n(w) (x_n ^\top v)^2 \le \sum_n r_n(w) \norm{x_n}_\star ^2 \norm{v}^2 \le \c{L}(w) B^2 \norm{v}^2.
\label{eq:hessian}
\end{equation}
Using Taylor's reminder theorem for the convex loss $\c{L}$, we have 
\begin{equation}
\begin{split}
\mathcal{L}(\W{t+1}) &\leq \mathcal{L}(\W{t})+\eta_t 
\innerprod{\nabla\mathcal{L}(\W{t})}{\Delta\W{t}}+\sup_{\beta\in(0,1)}\frac{\eta_t^2}{2}{\Delta\W{t}}^\top\nabla ^2 \c{L} \left(\W{t}+\beta\eta_t\Delta\W{t}\right) \Delta\W{t}\\
&\overset{(a)} \leq\mathcal{L}(\W{t})-\eta_t\left\Vert \nabla\mathcal{L}(\W{t})\right\Vert ^{2}_\star+\frac{\eta_t^2B^2}{2}\sup_{\beta\in(0,1)}\c{L}\left(\W{t}+\beta\eta_t\Delta\W{t}\right)\left\Vert {\Delta\W{t}}\right \Vert ^{2}\\
 &\overset{(b)}{\leq}\mathcal{L}(\W{t})-\eta_t\left\Vert \nabla\mathcal{L}(\W{t})\right\Vert ^{2}_\star+\frac{\eta_t^{2}B^2}{2}\c{L}(\W{t})\left\Vert \Delta\W{t}\right\Vert ^{2} \\
 &\overset{(c)}\le\mathcal{L}(\W{t})-\frac{\eta_t}{2}\left\Vert \nabla\mathcal{L}(\W{t})\right\Vert ^{2}_\star,
\end{split}
\label{eq:lw}
\end{equation}
where $(a)$ follows from eq.~\eqref{eq:hessian} and from the condition on update direction in eq.  \eqref{eq:sd-update-1}; $(b)$ follows as $\eta_t\Delta\W{t}$ is a descent step and along with convexity of $\c{L}(\w)$ we have $\sup_{\beta\in(0,1)}\c{L}\left(\W{t}+\beta\eta_t\Delta\W{t}\right)\le \c{L}(\W{t})$; and $(c)$ follows as $\eta_t\le \frac{1}{B^2\c{L}(\W{t})}$ from the assumption  and also using $\|\Delta\W{t}\|=\|\nabla\c{L}(\W{t})\|_\star$ from eq.~\ref{eq:sd-update-1}. 

Thus, ${\mathcal{L}(\W{t})-\mathcal{L}(\W{t+1})}\geq \frac{\eta_t}{2}\left\Vert \nabla\mathcal{L}(\W{t})\right\Vert ^{2}_\star$, 
which implies 
\begin{equation}
\forall t, \; \sum_{u=0}^{t}\eta_u\left\Vert \nabla\mathcal{L}(\W{u})\right\Vert ^{2}_\star \leq2 \sum_{u=0}^{t}{\mathcal{L}(\W{u})-\mathcal{L}(\W{u+1})}=2\left(\mathcal{L}(\W{0})-\mathcal{L}(\W{t+1})\right)<\infty\,.
\end{equation}
where the final inequality follows as  $\c{L}(\W{0})<\infty$ and $\mathcal{L}(\W{t})\ge0$ $\forall t$.

In the continuous time limit of $\eta\to0$, $(A)$ is equivalently expressed as $\int_{0}^t\|\nabla\c{L}(\W{t})\|_\star^2<\infty$. Thus, we have  $\lim_{t \to \infty} \|\nabla\c{L}(\W{t})\|_\star =0$---both for any finite $\eta_t>0$ as well as in the continuous time limit of $\eta\to0$. 

\item \textbf{Proof of $(B)$ and $(C)$ :} Consider any $v\in\bR^d$ that linearly separates the data, i.e., $\forall n, \innerprod{v}{x_n} >0$ (such a $v$ always exists for linearly separable data), then we have
\[
\forall t<\infty,\;v^\top \nabla \c{L} ( w_{(t)}) = \sum_{n \in [N]} \exp( -\innerprod{\W{t}}{\x{n}} ) x_n ^\top v >0.
\]

Since $\lim_{t\to \infty}  v^\top \nabla \c{L} ( \W{t}) =0$, it must be that $\forall n,\;\exp( - \innerprod{\W{t}}{x_n})\to 0$,  and thus $\norm{\W{t}} \to \infty.$

Using triangle inequality, we have 
\begin{align}
\infty=\lim_{t\to\infty }\norm{w_{(t)} } \le \|\W{0}\|+\sum_{t=0}^\infty \eta_t\|\Delta\W{t}\|=\|\W{0}\|+\sum_{t=0}^\infty\eta_t \|\nabla\c{L}(\W{t})\|_\star,
\end{align}
where we used $\|\Delta\W{t}\|=\|\nabla\c{L}(\W{t})\|_\star$ from \eqref{eq:sd-update-1}. This gives us $\sum_{t=0}^\infty\eta_t \|\nabla\c{L}(\W{t})\|_\star =\infty$ in $(C)$.\qedhere
\end{asparaenum}
\end{proof}

We next show that under the conditions of Theorem~\ref{thm:sd-exp}, $\cL(w_{(t)})$ forms a decreasing sequence, and hence satisfies the assumption in Lemma~\ref{lem:sum}. 
\begin{lemma}\label{lem:dec} If step-sizes $\eta_t$ satisfy $ \eta_t  = \frac{c_t}{B^2\cL(w_t)}$ for $c_t \le \sqrt{2}$ , then $\cL(w_{(t+1)}) \le \cL(w_{(t)})$.
\end{lemma}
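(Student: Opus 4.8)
The plan is to reuse the one-step decrease estimate already worked out in eq.~\eqref{eq:lw}, but being careful about the circular-looking step $(b)$ there: replacing $\sup_{\beta\in(0,1)}\c{L}(\W{t}+\beta\eta_t\Delta\W{t})$ by $\c{L}(\W{t})$ is \emph{equivalent} to the monotonicity $\c{L}(\W{t+1})\le\c{L}(\W{t})$ that this lemma asserts, so it cannot be invoked up front. Instead I would fix $t$, keep the supremum symbolic, and argue by contradiction.

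If $\nabla\c{L}(\W{t})=0$ then $\Delta\W{t}=0$ by eq.~\eqref{eq:sd-upd-opt}, so $\W{t+1}=\W{t}$ and the claim is trivial; hence assume $G:=\|\nabla\c{L}(\W{t})\|_\star>0$ (note $\c{L}(\W{t})>0$ as well, since for separable data the loss is never zero). Applying Taylor's theorem with remainder to $s\mapsto\c{L}(\W{t}+s\eta_t\Delta\W{t})$, the Hessian bound $v^\top\nabla^2\c{L}(w)v\le B^2\c{L}(w)\|v\|^2$ of eq.~\eqref{eq:hessian}, and the optimality relations $\langle\nabla\c{L}(\W{t}),\Delta\W{t}\rangle=-G^2$ and $\|\Delta\W{t}\|^2=G^2$ from eq.~\eqref{eq:sd-update-1}, yields
\[
\c{L}(\W{t+1})\ \le\ \c{L}(\W{t})-\eta_t G^2+\tfrac12\eta_t^2 B^2 G^2\, M,\qquad M:=\sup_{\beta\in[0,1]}\c{L}\big(\W{t}+\beta\eta_t\Delta\W{t}\big).
\]
Since $\c{L}$ is convex and $s\mapsto\W{t}+s\eta_t\Delta\W{t}$ is affine, $s\mapsto\c{L}(\W{t}+s\eta_t\Delta\W{t})$ is convex on $[0,1]$ and so attains its maximum at an endpoint: $M=\max\{\c{L}(\W{t}),\c{L}(\W{t+1})\}$.

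Now suppose for contradiction that $\c{L}(\W{t+1})>\c{L}(\W{t})$, so $M=\c{L}(\W{t+1})$ and the display rearranges to $\c{L}(\W{t+1})\big(1-\tfrac12\eta_t^2B^2G^2\big)\le\c{L}(\W{t})-\eta_t G^2$. The key quantitative input is the triangle-inequality bound $G=\|\sum_n r_n(\W{t})\x{n}\|_\star\le\sum_n r_n(\W{t})\|\x{n}\|_\star\le B\,\c{L}(\W{t})$, using $r_n(\W{t})\ge0$ and $\c{L}(\W{t})=\sum_n r_n(\W{t})$; this turns the step-size rule $\eta_t=c_t/(B^2\c{L}(\W{t}))$ into $\eta_t BG=\tfrac{c_t G}{B\c{L}(\W{t})}\le c_t\le\sqrt2$, so $\eta_t^2B^2G^2\le 2$ and $1-\tfrac12\eta_t^2B^2G^2\ge0$. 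In the generic case where this coefficient is positive, dividing and simplifying shows the right-hand side is $\le\c{L}(\W{t})$ precisely because $\eta_t\le 2/(B^2\c{L}(\W{t}))$, contradicting $\c{L}(\W{t+1})>\c{L}(\W{t})$; in the borderline case $\eta_t^2B^2G^2=2$ one is forced to have $c_t=\sqrt2$ and $G=B\c{L}(\W{t})$ simultaneously, whence $\eta_t G^2=\sqrt2\,\c{L}(\W{t})$ and the rearranged inequality reads $0\le(1-\sqrt2)\c{L}(\W{t})<0$, again a contradiction. Hence $\c{L}(\W{t+1})\le\c{L}(\W{t})$.

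I expect the only real obstacle to be recognizing and circumventing the circularity in step $(b)$ of eq.~\eqref{eq:lw}; once the argument is set up as a contradiction with $M$ left symbolic, the two ingredients that do all the work are the convexity-along-the-segment identity $M=\max\{\c{L}(\W{t}),\c{L}(\W{t+1})\}$ and the loss-gradient bound $\|\nabla\c{L}(\w)\|_\star\le B\c{L}(\w)$, which is exactly what makes $c_t\le\sqrt2$ the right threshold. A fully corner-case-free alternative, if one prefers, is to compare $\phi(s)=\c{L}(\W{t}+s\eta_t\Delta\W{t})$ directly with the solution $\psi$ of $\psi''=\eta_t^2B^2G^2\psi$ matching $\phi(0),\phi'(0)$: from $\phi''(s)\le\eta_t^2B^2\c{L}(\W{t}+s\eta_t\Delta\W{t})\|\Delta\W{t}\|^2=\eta_t^2B^2G^2\phi(s)$ and $\phi\ge0$, a standard second-order comparison argument gives $\phi\le\psi$ on $[0,1]$, and $\psi(1)=\c{L}(\W{t})\cosh\theta-\tfrac GB\sinh\theta\le\c{L}(\W{t})$ holds for every $\theta:=\eta_tBG\ge0$ since $\theta\coth(\theta/2)\ge2\ge c_t$; this route actually only needs $c_t\le2$.
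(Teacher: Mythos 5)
Your proposal is correct and follows essentially the same route as the paper's own proof: a contradiction argument starting from the Taylor/Hessian bound, using convexity along the segment to replace the supremum by $\max\{\cL(\W{t}),\cL(\W{t+1})\}$, and the triangle-inequality bound $\|\nabla\cL(\w)\|_\star\le B\,\cL(\w)$ together with the step-size rule to force the contradiction. The only differences are cosmetic or incremental: you treat the degenerate cases ($\nabla\cL(\W{t})=0$ and the borderline $c_t=\sqrt{2}$, where the paper divides by $1-\tfrac12 c_t^2$ without comment) explicitly, and your sketched second-order comparison alternative (which would relax the threshold to $c_t\le 2$) is a genuine variant but is not needed for the stated claim.
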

\begin{proof}

From the Taylor expansion of $\mathcal{L}\left(\w\right)$ in eq. \eqref{eq:lw}, we have 
\begin{equation}
\begin{split}
\mathcal{L}(\w_{(t+1)})  &\leq \mathcal{L}(\w_{(t)})-\eta_t\|\nabla\c{L}(\W{t})\|_\star^2+\frac{\eta_t ^2B^2}{2} \|\nabla\c{L}(\W{t})\|_\star^2 \sup_{\beta\in(0,1)}\mathcal{L}(\W{t}+\beta\eta_t\Delta\W{t})\\
&\overset{(a)} \le \mathcal{L}(\W{t})-\eta_t\|\nabla\c{L}(\W{t})\|_\star^2\left(1-\frac{\eta_t B^2}{2}\max{(\mathcal{L}(\W{t}),\mathcal{L}(\W{t+1}))}\right),
\end{split}
\label{eq: L inequality1}
\end{equation}
where $(a)$ follows from convexity of $\mathcal{L}$.
	
We want to show that $\cL(w_{(t+1)}) \le \cL(w_{(t)})$. Let us assume the contrary that $\cL(w_{(t+1)})> \cL(w_{(t)})$. 

From eq. \eqref{eq: L inequality1}, we have 
	\begin{align}
	\nonumber\cL(w_{(t)})\overset{(a)}<\cL(w_{(t+1)})&\le \cL(w_{(t)}) -\eta_t \|\nabla\c{L}(\W{t})\|_\star^2\left( 1- \frac{ \eta_tB^2\cL(w_{t+1})}{2}\right)\\
	&\overset{(b)}\implies \left( 1- \frac{ \eta_tB^2\cL(w_{t+1})}{2}\right)\le0,
	\label{eq:ldec}
	\end{align}
	where $(a)$ follows from the contradictory assumption and $(b)$ follows as $\eta_t \|\nabla\c{L}(\W{t})\|_\star^2\ge0$.
	
Following up from eq. \eqref{eq:ldec}, we have 
	\begin{align*}
	\cL(w_{(t+1)})&\le \cL(w_{(t)}) + \eta_t \|\nabla\c{L}(\W{t})\|_\star^2\left( \frac{ \eta_tB^2\cL(w_{t+1})}{2}-1\right)\\
	&\overset{(a)}\le  \cL(w_{(t)}) + \eta_t B^2\c{L}(\W{t})^2\left( \frac{ \eta_tB^2\cL(w_{t+1})}{2}-1\right)\\
	&\overset{(b)}\le\cL(w_{(t)}) -c_t \cL(w_t) +\frac{ c_t^2\cL(w_{t+1})}{2}\\
	\implies \cL(w_{(t+1)})&\le \frac{1-c_t}{1-0.5 c_t^2}\cL(w_{(t)})\overset{(c)}\le \cL(w_{(t)}).
	\label{eq:ldec2}
	\end{align*}
	where in $(a)$ we used $\norm{\nabla \cL(w_t) }_\star = \norm{\sum_n \exp(w_t^T x_n) x_n }_\star \le B \cL(w_t)$ from triangle inequality and $\left( \frac{ \eta_tB^2\cL(w_{t+1})}{2}-1\right)\ge0$ from eq. \eqref{eq:ldec}, $(b)$ follows from using $\eta_t  \le \frac{c_t}{B^2\cL(w_t)}$ for some $0<c_t\le \sqrt{2}$, and $(c)$ follows as for  $0<c_t\le\sqrt{2}$, $\frac{1-c_t}{1-0.5 c_t^2}\le1$. 
	 This shows $\cL(w_{t+1}) \le \cL(w_t)$ which is a contradiction. 
\end{proof}

\subsection{Remaining steps in the proof of Theorem~\ref{thm:sd-exp}}
The steepest descent updates in eq. \eqref{eq:sd-update-1} can be equivalently written as:
\begin{equation}
\begin{split}
w_{(t+1)} &= w_{(t)} - \eta_t \gamma_t p_{(t)},\text{ where } \\
\gamma_t &\triangleq \norm{\nabla \c{L}(w_{(t)})}_\star,\text{ and } p_{(t)}\triangleq\frac{\Delta\W{t}}{\norm{\nabla \c{L}(w_{(t)})}_\star},\text{ which satisfies} \\
\innerprod{p_{(t)}}{\nabla \c{L}(w_{(t)}) }&= \norm{\nabla \c{L}(w_{(t)}) }_\star \text{,   }\norm{p_{(t)}} =1.
\end{split}
\label{eq:sd-update-2}
\end{equation}

From eq .~\ref{eq:lw}, using $\gamma_t=\|\nabla\c{L}(\W{t})\|_\star=\|\Delta\W{t}\|$, we have that  
\begin{equation}
\begin{split}
\c{L}(\W{t+1})&\le\c{L}(\W{t})-{\eta_t}{\gamma_t^2}+\frac{\eta_t^{2}B^2\c{L}(\W{t})\gamma_t^2}{2}=\c{L}(\W{t})\left[1-\frac{\eta_t\gamma_t^2}{\c{L}(\W{t})}+\frac{\eta_t^{2}B^2\gamma_t^2}{2}\right]\\
&\overset{(a)}\le \c{L}(\W{t})\exp\left(-\frac{\eta_t\gamma_t^2}{\c{L}(\W{t})}+\frac{\eta_t^{2}B^2\gamma_t^2}{2}\right)\\
&\overset{(b)}\le \c{L}(\W{0})\exp\left(-\sum_{u\le t} \frac{\eta_u\gamma_u^2}{\c{L}(\W{u})}+\sum_{u\le t}\frac{\eta_u^2B^2\gamma_u^2}{2}\right), 
\end{split}
\label{eq:ellbound}
\end{equation}
where we get $(a)$ by using $(1+x)\le \exp(x)$, and $(b)$ using recursion. 

\textbf{Step 1: Lower bound the unnormalized margin: } From eq.~\eqref{eq:ellbound}, we have, 
\begin{align}
\max_{n \in[N]} \exp(-\innerprod{\W{t+1}}{x_n})& \le \c{L}(w_{(t+1)})\le 
\c{L}(\W{0})\exp\left(-\sum_{u\le t} \frac{\eta_u\gamma_u^2}{\c{L}(\W{u})}+\sum_{u\le t}\frac{\eta_u^2B^2\gamma_u^2}{2}\right).
\end{align}

By  applying $-\log$,
\begin{align}
\min_{n \in[N]} \innerprod{\W{t+1}}{x_n} \ge \sum_{u\le t}\frac{\eta_u\gamma_u ^2}{\c{L}(w_{(u)})} -\sum_{u\le t}\frac{\eta_u^2B^2\gamma_u^2}{2}- \log \c{L}(\W{0}).
\label{eq:LB}
\end{align}
\textbf{Step 2: Upper bound $\boldsymbol{\norm{w_{(t+1)}}}$: } Using $\|\Delta\W{u}\|=\|\nabla\c{L}(\W{u})\|_\star=\gamma_u$, we  have,
\begin{align}
\norm{w_{(t+1)}} \le \norm{\W{0}}+\sum_{u\le t } \eta_u \norm{\Delta\W{u}} \le \|\W{0}\|+\sum_{u\le t} \eta_u \gamma_u.
\label{eq:UB}
\end{align}
\textbf{Step 3: Lower bound on normalized margin: } Combining eqs. \eqref{eq:LB} and \eqref{eq:UB}$\forall n \in [N]$, we have that
\begin{align}
\frac{\innerprod{\W{t+1}}{x_n}}{\norm{\W{t+1}}} &\ge \frac{ \sum_{u\le t} \frac{\eta_u\gamma_u^2}{\c{L}(\W{u})}}{\sum_{u\le t}\eta_u \gamma_u +\|\W{0}\|} -\left(\frac{\sum_{u\le t}\frac{\eta_u^2B^2\gamma_u^2}{2}+\log \c{L}(\W{0})}{\|\W{t+1}\|}\right).\\
& := (I)+(II). 
\label{eq:final}
\end{align}
We look at the two terms separately, 
\begin{asparaenum}[(I)]
\item From the duality Lemma~\ref{lem:duality}, we have $\gamma_u=\|\nabla\c{L}(\W{u})\|_\star \ge \gamma \c{L}(\W{u})$. Hence,  $\sum_{u\le t} \frac{\eta_u\gamma_u^2}{\c{L}(\W{u})}\ge \gamma\sum_{u\le t}\eta_u\gamma_u$ and further using $\sum_{u\le t}\eta_u\gamma_u\to\infty$ from Lemma~\ref{lem:sum}, we have
\[\frac{ \sum_{u\le t} \frac{\eta_u\gamma_u^2}{\c{L}(\W{u})}}{\sum_{u\le t}\eta_u \gamma_u +\|\W{0}\|} \ge \gamma\frac{\sum_{u\le t}\eta_u\gamma_u}{\sum_{u\le t}\eta_u\gamma_u+\|\W{0}\|}\to \gamma\]
\item For any bounded $\eta\le \eta_+$, ${\sum_{u\le t}\frac{\eta_u^2B^2\gamma_u^2}{2}}\le\frac{\eta_+B^2}{2}\sum_{u\le t}\eta_u\gamma_u^2<\infty$ (from Lemma~\ref{lem:sum}). \\
Along with using $\|\W{t}\|\to\infty$ from Lemma~\ref{lem:sum}, we get  $\frac{\sum_{u\le t}\frac{\eta_u^2B^2\gamma_u^2}{2}+\log \c{L}(\W{0})}{\|\W{t+1}\|}\to 0$.
\end{asparaenum}

Using the above bounds in \eqref{eq:final}, we get  $\lim_{t\to \infty} \frac{w_{(t+1)}^\top x_n}{\norm{w_{(t+1)}} }\ge \gamma:=\max_{w}\frac{\w^\top x_n}{\norm{\w} }$
\mybox

\section{Adagrad}
\begin{lemma}
\label{lem: SD convergence}
Let $\mathcal{L}\left(\w\right)=\sum_{n=1}^{N}\exp\left(-\w^{\top}\x{n}\right)$,
$\left\Vert \cdot\right\Vert _{t}$ be some $\w_{\left(t\right)}$-dependent
norm, and $\left\Vert \cdot\right\Vert _{t,*}$ be its dual, and assume
that and $\forall t:\,\left\Vert \x{n}\right\Vert _{t,*}\leq1$.
We examine the following adaptive steepest descent update sequence w.r.t adaptive norm $\left\Vert \cdot\right\Vert _{t}$: 
\begin{equation}
\w_{\left(t+1\right)} =\w_{\left(t\right)}-\eta\gamma_{t}\p_{\left(t\right)},\label{eq: SD}
\end{equation}
where $\left\Vert \nabla\mathcal{L}\left(\w_{\left(t\right)}\right)\right\Vert _{t,*}\triangleq\gamma_{t}$ and $\p_{\left(t\right)}$ is the normalized update  satisfying $\left\Vert \p_{\left(t\right)}\right\Vert _{t}=1$ and $
\p_{\left(t\right)}^{\top}\nabla\mathcal{L}\left(\w_{\left(t\right)}\right)=\left\Vert \nabla\mathcal{L}\left(\w_{\left(t\right)}\right)\right\Vert _{t,*}$. 

For these adaptive steepest descent updates, for any initialization, $\wo$ such that $\eta\mathcal{L}\left(\wo\right)<1$, if $\W{t}$ minimizes $\c{L}$, i.e., $\c{L}(\W{t})\to0$, then 
we have  $\sum_{u=0}^{\infty}\gamma_{t}^{2}<\infty$.
\end{lemma}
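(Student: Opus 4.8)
The plan is to adapt the two-step argument behind Lemmas~\ref{lem:sum}(A) and~\ref{lem:dec} to the iterate-dependent norm. Write $\Delta_t \triangleq \W{t+1}-\W{t} = -\eta\gamma_t\p_{(t)}$, so that $\innerprod{\nabla\mathcal{L}(\W{t})}{\Delta_t} = -\eta\gamma_t^2$ (using $\p_{(t)}^\top\nabla\mathcal{L}(\W{t})=\gamma_t$) and $\|\Delta_t\|_t = \eta\gamma_t$ (using $\|\p_{(t)}\|_t=1$). Since $\nabla^2\mathcal{L}(w) = \sum_n\exp(-\innerprod{w}{\x{n}})\x{n}\x{n}^\top$, the assumption $\|\x{n}\|_{t,*}\le1$ gives $\Delta_t^\top\nabla^2\mathcal{L}(w)\Delta_t = \sum_n\exp(-\innerprod{w}{\x{n}})(\x{n}^\top\Delta_t)^2 \le \mathcal{L}(w)\,\|\Delta_t\|_t^2 = \mathcal{L}(w)\,\eta^2\gamma_t^2$ for every $w$. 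Feeding this into Taylor's theorem with remainder for the convex function $\mathcal{L}$, and bounding $\mathcal{L}$ at each intermediate point $\W{t}+\beta\Delta_t$ by $\max(\mathcal{L}(\W{t}),\mathcal{L}(\W{t+1}))$ via convexity, yields the key one-step inequality
\[
\mathcal{L}(\W{t+1}) \;\le\; \mathcal{L}(\W{t}) - \eta\gamma_t^2 + \tfrac{\eta^2\gamma_t^2}{2}\max\!\big(\mathcal{L}(\W{t}),\mathcal{L}(\W{t+1})\big).
\]

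The main obstacle is that this inequality is self-referential: $\mathcal{L}(\W{t+1})$ appears on both sides through the $\max$, so one cannot immediately telescope. The fix is to first show, by induction on $t$, that $\mathcal{L}(\W{t})$ is non-increasing (so the $\max$ collapses to $\mathcal{L}(\W{t})$), and in particular stays at most $\mathcal{L}(\wo)$. For the inductive step, suppose for contradiction that $\mathcal{L}(\W{t+1})>\mathcal{L}(\W{t})$; then the $\max$ equals $\mathcal{L}(\W{t+1})$, and using $\gamma_t = \|\nabla\mathcal{L}(\W{t})\|_{t,*} \le \sum_n\exp(-\innerprod{\W{t}}{\x{n}})\|\x{n}\|_{t,*} \le \mathcal{L}(\W{t}) \le \mathcal{L}(\wo)$ together with $\eta\mathcal{L}(\wo)<1$ one gets $1-\tfrac{\eta^2\gamma_t^2}{2} > \tfrac12 > 0$, so rearranging gives $\mathcal{L}(\W{t+1}) \le \frac{\mathcal{L}(\W{t})-\eta\gamma_t^2}{1-\eta^2\gamma_t^2/2} \le \mathcal{L}(\W{t})$, where the last inequality is equivalent to $\eta\mathcal{L}(\W{t})\le 2$ and hence holds. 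This contradicts the assumption, so $\mathcal{L}(\W{t+1})\le\mathcal{L}(\W{t})$, completing the induction.

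With monotonicity established, the $\max$ in the one-step inequality is $\mathcal{L}(\W{t})$, so $\mathcal{L}(\W{t+1}) \le \mathcal{L}(\W{t}) - \eta\gamma_t^2\big(1-\tfrac{\eta\mathcal{L}(\W{t})}{2}\big) \le \mathcal{L}(\W{t}) - \tfrac{\eta}{2}\gamma_t^2$, using $\eta\mathcal{L}(\W{t}) \le \eta\mathcal{L}(\wo) < 1$. Telescoping from $0$ to $T$ gives $\tfrac{\eta}{2}\sum_{t=0}^{T}\gamma_t^2 \le \mathcal{L}(\wo) - \mathcal{L}(\W{T+1}) \le \mathcal{L}(\wo) < \infty$, and letting $T\to\infty$ yields $\sum_{t=0}^{\infty}\gamma_t^2 \le \tfrac{2\mathcal{L}(\wo)}{\eta} < \infty$. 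I would remark that the hypothesis $\mathcal{L}(\W{t})\to0$ is not actually used in this argument; it is kept in the statement because it is the regime relevant to the AdaGrad application (Theorem~\ref{lem:AdaGrad}). The only genuine care-point beyond the ordinary steepest-descent proof is ensuring the norm bound $\|\x{n}\|_{t,*}\le1$ is applied with the index $t$ matching the step being analyzed, which is exactly what the hypothesis provides.
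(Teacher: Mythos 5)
Your proposal is correct and takes essentially the same route as the paper's proof: the same Hessian bound $\p_{(t)}^\top\nabla^2\mathcal{L}(w)\p_{(t)}\le\mathcal{L}(w)$ from $\|\x{n}\|_{t,*}\le1$, the same Taylor-expansion one-step inequality leading to $\mathcal{L}(\W{t+1})\le\mathcal{L}(\W{t})-\tfrac{\eta}{2}\gamma_t^2$, and the same telescoping. The only difference is presentational: where the paper invokes ``the arguments of Lemma~\ref{lem:dec}'' to justify monotonicity of $\mathcal{L}(\W{t})$, you spell out that contradiction/induction argument explicitly for the adaptive norm (and your side remark that the hypothesis $\mathcal{L}(\W{t})\to0$ is never used matches the paper's proof as well).
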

\begin{proof}
First we note that since $\left\Vert \x{n}\right\Vert _{t,*}\leq1$ and $\left\Vert \p_{\left(t\right)}\right\Vert _{t}=1$.
\begin{equation}\label{eq:ada1}
\p_{\left(t\right)}^{\top}\nabla^{2}\mathcal{L}\left(\w\right)\p_{\left(t\right)}=\sum_{n=1}^{N}\exp\left(-\w^{\top}\x{n}\right)\left(\x{n}^{\top}\p_{\left(t\right)}\right)^{2}\leq\sum_{n=1}^{N}\exp\left(-\w^{\top}\x{n}\right)=\mathcal{L}\left(\w\right).
\end{equation}
Additionally, following the arguments of Lemma~\ref{lem:dec}, we can show that $-\eta\gamma_{t}\p_{\left(t\right)}$  for $\eta\le \frac{1}{\c{L}(\W{0})}$ is a descent direction, hence from convexity of $\c{L}$, we have
\begin{equation}\label{eq:ada2}
\max_{r\in\left(0,1\right)}\p_{\left(t\right)}^{\top}\nabla^{2}\mathcal{L}\left(\w_{\left(t\right)}-r\eta\gamma_{t}\p_{\left(t\right)}\right)\p_{\left(t\right)}\leq\max_{r\in\left(0,1\right)}\mathcal{L}\left(\w_{\left(t\right)}-r\eta\gamma_{t}\p_{\left(t\right)}\right)\leq\mathcal{L}\left(\w_{\left(t\right)}\right).
\end{equation}

From the Taylor expansion of $\mathcal{L}\left(\w\right)$ 
\begin{equation}
\mathcal{L}\left(\w_{\left(t+1\right)}\right)  \leq\mathcal{L}\left(\w_{\left(t\right)}\right)-\eta\gamma_{t}\nabla\mathcal{L}\left(\w_{\left(t\right)}\right)^{\top}\p_{\left(t\right)}+\frac{1}{2}\eta^2\gamma_{t}^{2}\max_{r\in\left(0,1\right)}\p_{\left(t\right)}^{\top}\nabla^{2}\mathcal{L}\left(\w_{\left(t\right)}-r\eta\gamma_{t}\p_{\left(t\right)}\right)\p_{\left(t\right)}.\,
\label{eq: L inequality}
\end{equation}
Substituting eq. \eqref{eq:ada1} and \eqref{eq:ada2} into eq. \ref{eq: L inequality}, we find
\begin{align*}
\mathcal{L}\left(\w_{\left(t+1\right)}\right) & \leq\mathcal{L}\left(\w_{\left(t\right)}\right)-\eta\gamma_{t}\nabla\mathcal{L}\left(\w_{\left(t\right)}\right)^{\top}\p_{\left(t\right)}+\frac{1}{2}\eta^2\gamma_{t}^{2}\mathcal{L}\left(\w_{\left(t\right)}\right)\\
 & =\mathcal{L}\left(\w_{\left(t\right)}\right)-\eta\left(1-\frac{\eta}{2}\mathcal{L}\left(\w_{\left(t\right)}\right)\right)\gamma_{t}^{2}
 \overset{(a)}\le \mathcal{L}\left(\w_{(t))}\right)-\frac{\eta}{2}\gamma_{t}^{2},
\end{align*}
where $(a)$ follows from assumption that $\eta\le\frac{1}{\c{L}(\W{0})}\le \frac{1}{\c{L}(\W{t})}$. 

Summing over the last equation, we get  that $
\frac{\eta}{2}\sum_{u=1}^{t}\gamma_{u}^{2}\le \mathcal{L}\left(\w_{\left(0\right)}\right)-\mathcal{L}\left(\w_{\left(t\right)}\right)<\infty.
$
\end{proof}

Recall the AdaGrad update
$\w_{\left(t+1\right)} =\w_{\left(t\right)}-\eta\mathbf{\h}_{\left(t\right)}^{-1/2}\nabla\mathcal{L}\left(\w_{\left(t\right)}\right)$, 
where $\mathbf{\h}_{\left(t\right)}$ is a diagonal matrix such that
\[
\,\forall i:\,\,\mathbf{\h}_{\left(t\right)}[i,i]=\sum_{u=0}^{t}\left(\nabla\mathcal{L}\left(\w_{\left(u\right)}\right)[i]\right)^{2}\,.
\]

We now prove the Theorem~\ref{lem:AdaGrad}. Recall the statement,
{\thmlemadagrad*}

\begin{proof}
First, we note that AdaGrad is a special case of the adaptive steepest descent
algorithm described in Lemma \ref{lem: SD convergence} with respect the
norm $\left\Vert v \right\Vert _{t}=\left\Vert \mathbf{\h}_{\left(t\right)}^{1/2}v \right\Vert _{2}$.
Here the dual norm $\left\Vert v \right\Vert _{t,*}=\left\Vert \mathbf{\h}_{\left(t\right)}^{-1/2}v \right\Vert _{2}$.

Also from the definition of $\mathbf{\h}_{\left(t\right)}$, we have that  $\mathbf{\h}_{\left(t\right)}^{-1}[i,i]$ is monotonically decreasing for all
$t$, and thus $\left\Vert \mathbf{\h}_{\left(t\right)}^{-1/2}\x{n}\right\Vert _{2}\leq\left\Vert \mathbf{\h}_{\left(0\right)}^{-1/2}\x{n}\right\Vert _{2}\leq1$,
and so we can apply Lemma \ref{lem: SD convergence}. This implies
that 
\begin{align*}
\infty & >\sum_{t=0}^{\infty}\left\Vert\nabla\c{L}(\W{t})\right\Vert_{t,*}^2=\sum_{t=0}^{\infty}\left\Vert \mathbf{\h}_{\left(t\right)}^{-1/2}\nabla\mathcal{L}\left(\w_{\left(t\right)}\right)\right\Vert _{2}^{2}\\
 & =\sum_{i=1}^{d}\sum_{t=0}^{\infty}\left(\nabla\mathcal{L}\left(\w_{\left(t\right)}\right)[i]\right)^{2}\left[\sum_{u=0}^{t}\left(\nabla\mathcal{L}\left(\w_{\left(u\right)}\right)[i]\right)^{2}\right]^{-1/2}\\
 & \geq\sum_{i=1}^{d}\sum_{t=0}^{\infty}\left(\nabla\mathcal{L}\left(\w_{\left(t\right)}\right)[i]\right)^{2}\left[\sum_{u=0}^{\infty}\left(\nabla\mathcal{L}\left(\w_{\left(u\right)}\right)[i]\right)^{2}\right]^{-1/2}\\
 & =\sum_{i=1}^{d}\sqrt{\sum_{t=0}^{\infty}\left(\nabla\mathcal{L}\left(\w_{\left(t\right)}\right)[i]\right)^{2}}
\end{align*}
This implies that 
\[
\forall i,\forall t:\;\mathbf{\h}_{\left(t\right)}[i,i]=\sum_{u=0}^{t}\left(\nabla\mathcal{L}\left(\w_{\left(u\right)}\right)[i]\right)^{2}\le\sum_{t=0}^{\infty}\left(\nabla\mathcal{L}\left(\w_{\left(t\right)}\right)[i]\right)^{2}<\infty\,,
\]
\end{proof}

\section{Gradient descent on factorized parameterization} 
We first prove the Lemma~\ref{lem:grad-conv} on convergence of $-\nabla\c{L}(\W{t})$. This lemma holds for any general linear model \eqref{eq:lm} with exponential loss $\ell(u,y)=\exp(-uy)$ and are of interest beyond the matrix factorization setup in Section~\ref{sec:mf}.  

\subsection{Convergence of $-\nabla\c{L}(\W{t})$}
{\lemgradconv*}

Here \textit{for almost all $\{\x{n},y_n\}$} means that with probability $1$ over the dataset $\{\x{n},y_n\}$ such that the signed features $y_n\x{n}$ are  drawn independently from a distribution that is absolutely continuous w.r.t the $d$ dimensional Lebesgue measure. 
\begin{proof} Without loss of generality assume $\forall n, y_n=1$, else the sign of $y$ can be absorbed into $x$ as  $\x{n}\gets y_nx_n$.

 Let $X\in\bR^{N\times d}$ denote the data matrix with $\x{n}\in\bR^d$ along the rows of $X$. Also, for any $J\subseteq [N]$, $X_J\in\bR^{|J|\times d}$ denotes the  submatrix of $X$ with only the rows corresponding to indices in $J$.

We have that $\lim_{t\to\infty}\c{L}(\W{t})=0$ for strictly monotone loss over separable data, this implies asymptotically $\W{t}$ satisfies $X\W{t}>0,\|\W{t}\|\to\infty$. 

Since $\W{t}$ converges in direction to $\bar w_\infty$, we can write $\W{t} = g(t) \bar \w_\infty +\rho_{(t)} $ for a scalar $g(t)=\|\W{t}\|\to\infty$ and  vector $\rho(t)\in\bR^d$ such that $\frac{\rho(t)}{g(t)}\to0$. Additionally, this implies $\forall n$, $X\bar \w_{\infty}>0$. 

We introduce some additional notation:
\begin{compactitem}
\item  Denote the asymptotic  margin of $x_n$ as $\bar{\gamma}_n:=\innerprod{\x{n}}{\bar{\w}_\infty}$. Additionally, we define the following:
\begin{compactitem}
\item  
Let $\gamma=\min_n\innerprod{\x{n}}{\bar \w_\infty}=\min_n e_n^\top X\bar{\w}_\infty>0$ denote the smallest margin, where $e_n\in\bR^N$ are standard basis.  
\item Let $S:=\{n:\innerprod{\x{n}}{\bar{\w}_\infty}=\gamma\}$ denote the indices of support vectors of $\bar{\w}_\infty$.
\item Denote the second smallest margin of $\bar{\w}_\infty$ as $\bar{\gamma}:=\min_{n\notin S}\innerprod{\x{n}}{\bar{\w}_\infty}>\gamma$.
\end{compactitem}
\item  Define $\alpha_n(t) := \exp(-\innerprod{\rho(t)}{x_n})$ and let $\alpha(t)\in\bR^N$ be a vector of $\alpha_n(t)$ stacked.   For any $J\subset[N]$ and $\alpha\in\bR^N$, similar to $X_J$, let $\alpha_J\in\bR^{|J|}$ be the sub-vector with components corresponding to the indices in $J$
\item $B= \max_n \norm{x_n}_2$,  
\end{compactitem}

Since $\norm{\rho(t)}/g(t)\to0$ and $\gamma,\bar{\gamma}>0$, we have $\forall \epsilon_1,\epsilon_2>0$, $\exists t_{\epsilon_1},t_{\epsilon_2}$ such that 
\begin{equation}
\begin{split}
\forall t>t_{\epsilon_1},\;\forall n,\quad &\innerprod{\rho(t)}{x_n}\le \|{\rho(t)}\|_2 B\le \epsilon_1\gamma g(t),\text{ and }\\
\forall t>t_{\epsilon_2},\;\forall n,\quad &\innerprod{\rho(t)}{x_n}\ge -\|{\rho(t)}\|_2 B\ge -\epsilon_2\bar{\gamma} g(t)
\end{split}
\label{eq:rho}
\end{equation}

The first prove the following claim:
\begin{claim} For almost all $\{\x{n}\}$, $|S|<d$ and $\sigma_{|S|} (X_{S}) >0$, where $\sigma_k (A)$ is the $k^{th}$ singular value of $A$.
\end{claim}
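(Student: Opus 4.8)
The plan is to derive both assertions from two standard facts about the $N$ (signed) feature vectors $\{\x{n}\}$, which are drawn from a law absolutely continuous with respect to the Lebesgue measure on $\bR^d$, together with a single geometric observation: by definition of $S$ every $\x{n}$ with $n\in S$ satisfies $\innerprod{\bar\w_\infty}{\x{n}}=\gamma>0$, so the support vectors $\{\x{n}:n\in S\}$ all lie on the affine hyperplane $H=\{x:\innerprod{\bar\w_\infty}{x}=\gamma\}$, which does not pass through the origin.

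The two facts I would establish are: (i) almost surely, every sub-collection of at most $d$ of the $\x{n}$ is linearly independent; and (ii) almost surely, no $d+1$ of the $\x{n}$ lie on a common affine hyperplane. Both follow from the same recipe: fix an index set $J\subseteq[N]$; the ``bad'' event lies in the zero set of a polynomial in the coordinates of $\{\x{n}:n\in J\}$ that does not vanish identically --- for (i), the simultaneous vanishing of all $|J|\times|J|$ minors of the matrix with rows $\x{n},\,n\in J$; for (ii), after conditioning on $d$ of the vectors (which by (i) are linearly independent and hence affinely span a unique hyperplane $H_J$), the condition that the remaining vector lands on $H_J$, a proper affine subspace of $\bR^d$ --- and a non-trivial polynomial's zero set is Lebesgue-null, hence null under any absolutely continuous law; a union over the finitely many choices of $J$ finishes each. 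A subtlety that must be handled is that $\bar\w_\infty$ itself depends on the dataset, so one cannot simply condition on the hyperplane $H$; the argument has to be organized as a union over the finitely many possible values of the index set $S\subseteq[N]$, applying (ii) to each candidate of size $d+1$.

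Granting (i) and (ii), the claim follows: since $\{\x{n}:n\in S\}\subseteq H$ and $H$ is an affine hyperplane, (ii) forces $|S|\le d$; the strict bound $|S|<d$ then requires separately ruling out $|S|=d$, where one uses that $d$ linearly independent support vectors on $H$ would pin down $\bar\w_\infty$ up to scaling as $X_S^{-1}\mathbf{1}$, together with the facts that $H$ misses the origin and that $\bar\w_\infty$ arises as the limit direction of the sequence $\W{t}$. For the singular-value statement, $X_S$ has as its $|S|\le d$ rows the vectors $\{\x{n}:n\in S\}$, which by (i) are linearly independent, so $X_S$ has full row rank $|S|$ and hence $\sigma_{|S|}(X_S)>0$. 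I expect the sharpening of $|S|\le d$ to $|S|<d$ --- i.e.\ the exclusion of exactly $d$ support vectors --- together with the careful bookkeeping for the data-dependence of $\bar\w_\infty$, to be the main obstacle, and this is where absolute continuity of the feature distribution is genuinely used.
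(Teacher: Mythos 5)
Your facts (i) and (ii), the union over fixed index sets, and the use of absolute continuity are essentially the paper's own argument in different clothing: the paper notes that $X_S\bar\w_\infty=\gamma 1_S$ with $\gamma>0$ forces $1_S\in\mathrm{colspan}(X_S)$, while for any \emph{fixed} $J$ with $|J|>d$ the random subspace $\mathrm{colspan}(X_J)\subsetneq\bR^{|J|}$ misses the fixed vector $1_J$ almost surely; this gives $|S|\le d$, and linear independence of any $\le d$ rows (your fact (i)) gives $\sigma_{|S|}(X_S)>0$. So for these two conclusions your route and the paper's coincide.

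The genuine gap is exactly the step you flag as "the main obstacle": upgrading $|S|\le d$ to $|S|<d$. This cannot be done, because $|S|=d$ is \emph{not} a null event. For instance, with $d=2$ and gradient descent, $\bar\w_\infty$ is the $\ell_2$ max-margin direction, and for signed data near $\x{1}=(1,0)$, $\x{2}=(0,1)$ both points are support vectors of the homogeneous max-margin separator; this configuration is stable under perturbation, so $|S|=d$ occurs with positive probability. Your sketched argument---that $d$ independent support vectors pin down $\bar\w_\infty\propto X_S^{-1}1_S$---produces no contradiction: that is precisely what happens in such configurations, and nothing about $H$ missing the origin or $\bar\w_\infty$ arising as a limit direction rules it out. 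The paper itself only concludes $|S|\le d$ in its proof (the strict inequality in the claim's statement is best read as $\le$), and in the sequel (the lower bound on $\|I(t)\|$ in the proof of Lemma~\ref{lem:grad-conv}) only $\sigma_{|S|}(X_S)>0$ is used. So your proof is complete and correct once you stop at $|S|\le d$; the effort you anticipate spending on excluding $|S|=d$ would be spent proving a false statement.
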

\begin{proof} 
Since,  $S= \{n:\innerprod{\bar{\w}_\infty}{\x{n}}=\gamma\}$, we have $X_{S} \bar{\w}_\infty = \gamma 1_{S}\in\bR^{|S|}.$ 

If $X$ is randomly drawn from a continuous distribution,  for any fixed subset $J$ if $|J| >d$, the column span of $X_J$ is rank deficient and will miss any fixed vector $v$ that is independent of $X$ with probability $1$. Thus,
\begin{align}
\bR^{|J|}\ni1_{J} \notin \text{colspan}(X_J),\text{ for almost all }X_J\in\bR^{|J|\times d}.
\end{align}

Since we always have $1_{S} \in \text{colspan} (X_S)$, this implies for almost all $X$,  $|S| \le d$ and $\sigma_{|S|} (X_{S} )>0$.
\end{proof}

\paragraph{Exponential loss: }
For exponential loss, the gradient at   $\W{t}$  is given by
\begin{align}
\nonumber -\nabla \c{L}( \W{t} ) &= \sum_{n \in S} \exp(-\gamma g(t)) \exp(- \rho(t) ^\top x_n) x_n + \sum_{n \in S^c} \exp( - \bar \gamma_ng(t)) \exp(-\rho(t)^\top x_n) x_n\\
&:= I(t) + II(t),
\label{eq:1p2}
\end{align}
where $I(t)=\sum_{n \in S} \exp(-\gamma g(t)) \exp(- \rho(t) ^\top x_n) x_n $ and $II(t)=\sum_{n \notin S} \exp( - \bar \gamma_ng(t)) \exp(-\rho(t)^\top x_n) x_n$. 

To prove the lemma, we need to show that the gradient are dominated by the positive span of support vectors. Towards this goal, we will now show that $\lim\limits_{t\to\infty} \frac{\norm{II(t)}}{\norm{I(t)}} =0$. 

Recall that $\alpha(t)=[\alpha_n(t)]_n$ is defined as $\alpha_n(t) = \exp(-\innerprod{\rho(t)}{x_n})$ and $\alpha_S(t)\in\bR^{|S|}$ is a subvector restricted to indices in $S$. The following are true for any $\epsilon_1,\epsilon_2>0$.
\begin{asparaenum}[Step 1.]
\item \textit{Lower bound on $I(t)$: } There exists $t_{\epsilon_1}$ such that for all $t>t_{\epsilon_1}$, we have
\begin{align}
\nonumber \norm{I}_2 &= \exp(-\gamma g(t) )  \norm{X_{S} \alpha_S(t)} _2
\ge \exp(-\gamma g(t)) \sigma_{|S|} (X_S) \norm{\alpha_S(t)}_2\\
\nonumber&\ge \exp(-\gamma g(t)) \sigma_{|S|} (X_S) \max_{n\in S} \alpha_n(t)\\
&\overset{(a)}\ge\sigma_{|S|} (X_S) \exp(- (1+\epsilon_1)\gamma g(t)):= C_1 \exp(- (1+\epsilon_1)\gamma g(t)),
\label{eq:1}
\end{align}
where $(a)$ follows   from \eqref{eq:rho}, from which we get $\alpha_n(t)=\exp(-\innerprod{\rho(t)}{x_n})\ge\exp(- \epsilon_1\gamma g(t)) $, and $C_1=\sigma_{|S|} (X_S)>0$ is a constant independent of $t$. 

\item \textit{Upper bound on $II(t)$: } Again, for large enough  $t>t_{\epsilon_2}$, we have
\begin{align}
\nonumber \norm{II(t)}_2 &=\sum_{n \notin S} \exp( - \bar \gamma_ng(t)) \exp(-\rho(t)^\top x_n) x_n\le  N\max_n\exp(- \bar \gamma_n g(t)) \alpha_n\|x_n\|_2\\
\nonumber&\overset{(a)}\le \exp(- \bar \gamma g(t))BN \max_n\alpha_n\\
&\overset{(b)}\le BN\exp(-(1-\epsilon_2)\bar{\gamma} g(t)):=C_2 \exp(-(1-\epsilon_2)\bar{\gamma}g(t)),
\label{eq:2}
 \end{align}
 where $(a)$ uses $\forall n\notin S, \bar \gamma_n\ge \bar\gamma$ (recall that $\bar\gamma$ is the second smallest margin to $\bar{w}_\infty$) and $(b)$ follows  from \eqref{eq:rho}, using $\alpha_n=\exp(-\innerprod{\rho(t)}{x_n})\le\exp(\epsilon_2\bar{\gamma}g(t))$, and $C_2=BN>0$ is again a constant independent of $t$.

 \item \textit{Remaining steps in the proof: }  By combining \eqref{eq:1} and \eqref{eq:2} using $\epsilon_1=\nicefrac{(\bar{\gamma}-\gamma)}{4\gamma}$ and $\epsilon_2=\nicefrac{(\bar{\gamma}-\gamma)}{4\bar{\gamma}}$ and an appropriate constant $C>0$, we have for any norm $\|.\|$ 
 \begin{align}
 \frac{\norm{II(t)}}{\norm{I(t)}} &\le C\exp( - \frac12(\bar \gamma -\gamma)g(t))\overset{(a)}\to 0,
 \end{align}
 where $(a)$ follows from $\bar{\gamma}>\gamma$ and $g(t)=\|\W{t}\|\to \infty$. 
 
Finally, note that $ -\frac{\nabla \cL(\W{t})}{\norm{\nabla \cL(\W{t})}} = \frac{I(t)}{\norm{I(t)+II(t)}} +\frac{II(t)}{\norm{I(t)+II(t)}}.$
 Since $\norm{\frac{II(t)}{\norm{I(t)+II(t)}} }\le \frac{\norm{II(t)}/{\|I(t)\|}}{1-\norm{II(t)}/\norm{I(t)}}\overset{t\to\infty}\to0$, and $I(t) \propto \sum_{n\in S}\alpha_{n}(t)\x{n}$ for  $\alpha_n(t)>0$, we have shown that every limit point of $-\frac{\nabla \cL( \W{t})}{\norm{\nabla \cL( \W{t})}} \to \sum_{n\in S}\alpha_n\x{n}$ for some $\alpha_n>0$. 
 \end{asparaenum}
 Recall that in the beginnning of the proof we made a change of variable that $\x{n}\gets \y{n}\x{n}$. Reversing this change of variable finishes the proof for exponential loss.
\remove{\paragraph{Extension to general exponential tail losses}
For general exponential tail loss (Property~\ref{ass:exp-tail}), we can essentially flow the same steps with some additional book keeping. 

\begin{align}
\nonumber -\nabla \c{L}( \W{t} ) &= \sum_{n \in S} (-\ell^\prime(\innerprod{\W{t}}{x_n}) x_n+ \sum_{n \in S^c} (-\ell^\prime(\innerprod{\W{t}}{x_n}) x_n\\
&= I(t) + II(t).
\label{eq:1p2-1}
\end{align}

\textit{Step 1: Lower bound on $I(t)$: } For large enough  $t>t_{\epsilon_1}$, we have
\begin{align}
\nonumber \norm{I}_2 &= \exp(-\gamma g(t) )  \norm{X_{S} \alpha_S(t)} _2\ge \exp(-\gamma g(t)) \sigma_{|S|} (X_S) \norm{\alpha_S(t)}_2\ge \exp(-\gamma g(t)) \sigma_{|S|} (X_S) \min_n \alpha_n(t)\\
&\overset{(a)}\ge\sigma_{|S|} (X_S) \exp(- (1+\epsilon_1)\gamma g(t))\ge C_1 \exp(- (1+\epsilon_1)\gamma g(t)),
\end{align}
where $(a)$ follows from the definition of $\alpha_n=\exp(-\innerprod{\rho(t)}{x_n})$ and \eqref{eq:rho}, and $C_1>0$ is a constant independent of $t$. }
\end{proof}

\remove{
\subsection{Convergence of $\U{t}$ in direction}
The following general lemma shows the first part of Theorem~\ref{thm:mf-exp} on convergence of $\U{t}$. 
\begin{lemma} [$\U{t}$ converges in direction if $\Delta\U{t}$ converges in direction] Let $\U{t}$ be any diverging sequence of iterates, i.e., $\|\U{t}\|\to \infty$, iteratively defined using  bounded discrete updates of $\U{t+1}=\U{t}+\eta_t\Delta\U{t}$ for any $0<\eta_t<\infty$, any finite incremental update direction $\Delta\U{t}$ such that $\|\Delta\U{t}\|>0$, and any finite initialization $\U{0}<\infty$. 
 
For such sequences, if $\lim\limits_{t\to\infty}\frac{\Delta\U{t}}{\norm{\Delta\U{t}}}=\bar{U}_\infty$, then $\lim\limits_{t\to\infty}\frac{\U{t}}{\norm{\U{t}}}=\bar{U}_\infty$. 
\end{lemma}
\label{lem:wconv}
\begin{proof}
 For all $\eta_t>0$, since $\frac{\Delta\U{t}}{\norm{\Delta\U{t}}}\to\bar{U}_\infty$, we can write $\Delta\U{t}=\bar{U}_\infty h(t)+\xi(t)$ where $h(t)=\|\Delta\U{t}\|$ and $\frac{\xi(t)}{h(t)}\to0$. 
Also, define $g(t)$ and $\rho(t)$ as follows\footnote{Note that  $g(t)=\sum_{u<t}\norm{\Delta{\U{t}}}$ here is different from notation in the proof of Lemma~\ref{lem:grad-conv} where we defined $g(t)=\norm{\W{t}}$.}:
\begin{equation}
g(t):=\sum_{u<t}\eta_{u}h(u)\text{, and }\rho(t):=\U{t}-\bar{U}_\infty g(t)=\sum_{u<t}\eta_{u}\xi(u)+\U{0}.
\label{eq:hgt}
\end{equation} 


In order to prove the lemma, we need to show that $\frac{{\rho}(t)}{g(t)}\to 0$. We observe the following, 
\begin{asparaenum}
\item We have $\norm{\U{t}}-\|\U{0}\|\le \sum_{u<t}\eta_{u} \|\Delta\U{u}\|=g(t)$. Since, $\|\U{t}\|\to \infty$, we thus get $g(t) \to \infty.$
\item 
Also, $g(t)=\sum_{u<t}\eta_uh(u)$ is a strictly monotonically increasing  as $\forall t<\infty, h(t)=\norm{\Delta \U{t}}>0$. 
\item Finally, $\lim\limits_{t\to\infty} \frac{\rho(t+1)-\rho(t)}{g(t+1)-g(t)}=\lim\limits_{t\to\infty} \frac{\xi(t)}{ h(t)}=0.$
\end{asparaenum}
Thus, by using the Stolz-Cesaro (Theorem~\ref{thm:stolzcesaro}), we get $\lim\limits_{t\to\infty }\frac{\rho(t)}{g(t)}=0$.
\end{proof}
\remove{
Recall the set of global minima for $\c{L}(\w)$, $\c{G}=\{w: \forall n, \y{n}\innerprod{w}{\x{n}}> 0\text{ and }\|w\|=\infty\}$ and the set of $1$-margin solutions $\c{G}^{(1)}=\{w:\forall n, \y{n}\innerprod{w}{\x{n}}\ge 1\}$
\begin{proposition} If $\lim\limits_{t\to\infty}\c{L}(\W{t})=0$ and Lemma~\ref{lem:d-conv} holds, then $\exists \gamma>0$, such that $\frac{\w_{\infty}}{\gamma}\in \c{G}^{(1)}$.
\end{proposition} 
\begin{proof}
 Since $\lim_{t\to\infty}\c{L}(\W{t})=0$, we have $\forall n$, $\lim_{t\to\infty}\y{n}\innerprod{\W{t}}{\x{n}}=\infty>0$,   i.e., $\forall M>0,\exists t_M\st \forall t>t_M$, $\y{n}\innerprod{\W{t}}{\x{n}}\ge M$ for all $n$. This implies, for all $t>t_M$, $\y{n}\innerprod{\frac{\W{t}}{\|\W{t}\|}}{\x{n}}>0$, and hence under Lemma~\ref{lem:d-conv}, $\lim_{t\to\infty}\y{n}\innerprod{\frac{\W{t}}{\|\W{t}\|}}{\x{n}}=\innerprod{\w_\infty}{\x{n}}>0$. For any finite $N$, $\gamma:=\min_n \innerprod{w_\infty}{\x{n}}>0$, then $\w_\infty/\gamma\in\c{G}^{(1)}$.
\end{proof}
}
}

\subsection{Proof of Theorem~\ref{thm:mf-exp}}
{\thmmfexp*}
\begin{proof}
In this proof, $\norm{.}_F$, $\norm{.}_*$, and $\norm{.}_\text{op}$ denote the Frobenious norm, nuclear norm, and operator norm, respectively.

From the assumption of theorem, we have that $\U{t}$ converges in direction.  Let $\bar{U}_\infty=\lim\limits_{t\to\infty}\frac{\U{t}}{\norm{\U{t}}_F}$. Noting that for $\WW{t}=\U{t}\U{t}^\top$, $\norm{\WW{t}}_*=\norm{\U{t}}_F^2$, we have that $\lim\limits_{t\to\infty}\frac{\WW{t}}{\norm{\WW{t}}_*}=\lim\limits_{t\to\infty}\frac{\U{t}}{\norm{\U{t}}_F}\frac{\U{t}^\top}{\norm{\U{t}}_F}=\bar{U}_\infty\bar{U}_\infty^\top$. Denote $\bar{W}_\infty=\lim\limits_{t\to\infty}\frac{\WW{t}}{\norm{\WW{t}}_*}=\bar{U}_\infty\bar{U}_\infty^\top$.

Since $\WW{t}$ minimizes a strictly monotone loss, we have that $\|\WW{t}\|_*\to\infty$ and $\forall n, y_n\innerprod{\bar W_\infty}{X_n}>0$. Let $\gamma=\min_n y_n\innerprod{\bar W_\infty}{X_n}$ denote the margin of $\bar W_\infty$ and $S=\{n:\y{n}\innerprod{\bar W_\infty}{X_n}=\gamma\}$ denote the indices of the support vectors of ${\bar{W}}_\infty$. 

In order to prove the theorem, we can can equivalently  show that a positive scaling of $\bar U_\infty$ given by  $\bar{\bar{U}}_\infty=\bar{U}_\infty/\sqrt{\gamma}$ is the first order stationary point of eq. \eqref{eq:uopt}.  

In the remainder of the proof we show that $\bar{\bar{U}}_\infty$ satisfies the following KKT optimality conditions of \eqref{eq:uopt}:  
\begin{flalign}
\textbf{To show:}&&& \y{n}\innerprod{\bar{\bar{U}}_\infty\bar{\bar{U}}_\infty^\top}{X_n} \ge 1 \text{ and }\exists \alpha\ge0 \st &&\textbf{(primal and dual feasibility)}&\label{eq:pf}\\
&&& \forall i\notin {S}: \alpha_n=0\text{ and }&&\textbf{(complementary slackeness) }&\label{eq:cs}\\
&&& \bar{\bar{U}}_\infty=\sum_n\alpha_n y_nX_{n}{\bar{\bar{U}}}_\infty.&&\textbf{(stationarity) }&\label{eq:s}
\end{flalign}

\paragraph{Primal feasibility} This holds by definition since  $\bar{\bar{U}}_\infty\bar{\bar{U}}_\infty^\top=\bar{W}_\infty/\gamma$  has unit margin by the scaling.

\paragraph{Dual feasibility and complementary slackness} 
Denote $Z_{(t)}=-\nabla\c{L}(\WW{t})=\sum_n\exp(-y_n\innerprod{\WW{t}}{X_n}) y_n X_n$. From the assumptions in the theorem, we have that $Z_{(t)}$ converge in direction. Let  $\bar{Z}_\infty=\lim\limits_{t\to\infty}\frac{\Z{t}}{\norm{\Z{t}}_\text{op}}$. 
In addition, we also assume that $\c{L}(\WW{t})\to0$ and that  $\U{t}$ convergence in direction, which in turn implies convergence in direction of $\WW{t}=\U{t}\U{t}^\top$. Thus, from Lemma~\ref{lem:grad-conv}, we have  $\bar{Z}_\infty=\sum_{n\in S} {\alpha}_ny_nX_n$ for some $\{{\alpha}_n\}_{n=1}^N$ such that ${\alpha}_n\ge 0$ and ${\alpha}_n=0$ for all $n\notin S$. We propose this $\{\alpha_n\}_{n=1}^N$ as our candidate dual certificate, which satisfies both dual feasibility and complementary slackness.

\textbf{Stationarity: }To prove the theorem, we now need to show that: $\bar{\bar{U}}_\infty= D\bar Z_\infty \bar{\bar{U}}_\infty$, for some positive scalar $D$, or equivalently that ${\bar{U}}_\infty= D\bar Z_\infty {\bar{U}}_\infty$. This forms the main part of the proof. 

Using the assuptions in the theorem, we have that $\U{t}$ and $Z_{(t)}$  converges in direction, we introducing the following notation to conveniently represent these quantities.  
\begin{compactenum}
\item Since  $\frac{\U{t}}{\|\U{t}\|_F}\to\bar U_\infty$, we define $g(t)$ and $\rho_{(t)}$ satisfying the following,
\begin{equation}
\U{t}=\bar U_\infty g(t)+\rho_{(t)}
\;\st\; g(t):=\norm{\U{t}}_F\to\infty\text{ and }\frac{\rho_{(t)}}{g(t)}\to 0.
\label{eq:u}
\end{equation}
\item For exponential loss, $\c{L}(\WW{t})\to0$ implies $Z_{(t)}=-\nabla\c{L}(\WW{t})\to 0$. 
Thus, using the previously introduced notation $\bar Z_\infty=\lim\limits_{t\to\infty}\frac{Z_{(t)}}{\norm{Z_{(t)}}_\text{op}}$, we define ${p(t)}$ and $\zeta_{(t)}$ as follows
\begin{equation}
Z_{(t)}=-\nabla\c{L}(\WW{t})=\bar Z_\infty p(t)+\zeta_{(t)} \;\st\; p(t):=\norm{Z_{(t)}}_\text{op}\to 0 \text{ and }\frac{\zeta_{(t)}}{p(t)}\to0 .
\label{eq:g}
\end{equation}
\end{compactenum}

To show stationarity, we need to show that $\bar U_\infty=D\bar  Z_\infty \bar U_\infty$, which requires that the columns of $\bar W_\infty$ are spanned subset of eigenvectors of $\bar Z_\infty$ that correspond to the same eigen value. 

Let $\Delta\U{t}=\U{t+1}-\U{t}$. Substituting expressions of $\U{t}$ and $Z_{(t)}$ from \eqref{eq:u} and \eqref{eq:g}, respectively, for the updates $\Delta\U{t}$ from eq. \eqref{eq:mf-updateU}, we have 
\begin{flalign}
\nonumber \Delta \U{t}&=\eta_t Z_{(t)}\U{t}=\eta_t p(t) g(t)\left[\bar Z_\infty \bar U_\infty+ \bar Z_\infty\frac{\rho_{(t)}}{g(t)}+\frac{\zeta_{(t)}}{p(t)}\bar U_\infty\right]&\\
&\overset{(a)}=\eta_t p(t) g(t)[\bar Z_\infty \bar U_\infty+\delta_{(t)}]&
\label{eq:last}
\end{flalign}
where in $(a)$ we collect all the diminishing terms into $\delta_{(t)}=\bar Z_\infty\frac{\rho_{(t)}}{g(t)}+\frac{\zeta_{(t)}}{p(t)}\bar U_\infty \to 0$ as from eqs. \eqref{eq:u}--\eqref{eq:g}, we have $\frac{\rho_{(t)}}{g(t)},\frac{\zeta_{(t)}}{p(t)}\to0$ and $\bar Z_\infty$ and $\bar{W}_\infty$ are finite quanitities independent of $t$. 

Summing over $t$, we have that 

\begin{flalign}
\U{t}-\U{0}&=\bar Z_\infty \bar U_\infty\sum_{u<t}\eta_u p(u) g(u)+\sum_{u<t}\delta_{(u)}\eta_u p(u) g(u)
\label{eq:last1}
\end{flalign}

\begin{claim}$\norm{\bar Z_\infty \bar U_\infty}>0$ and $\sum_{u<t}\eta_u p(u) g(u)\to\infty$.
\end{claim}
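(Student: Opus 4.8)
The plan is to prove the two assertions of the claim separately, handling $\norm{\bar Z_\infty\bar U_\infty}_F>0$ first, since that quantity appears in the bound I will use for the divergence statement.

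\emph{Step 1: $\norm{\bar Z_\infty\bar U_\infty}_F>0$.} Recall that earlier in the proof we applied Lemma~\ref{lem:grad-conv} to write $\bar Z_\infty=\sum_{n\in S}\alpha_n\y{n}X_n$ with all $\alpha_n\ge0$, and that $\bar Z_\infty=\lim_t Z_{(t)}/\norm{Z_{(t)}}_\text{op}$, being a limit of unit-operator-norm matrices, satisfies $\norm{\bar Z_\infty}_\text{op}=1$, hence $\bar Z_\infty\ne0$ and so $\sum_{n\in S}\alpha_n>0$. I would then evaluate $\innerprod{\bar Z_\infty}{\bar W_\infty}$ with $\bar W_\infty=\bar U_\infty\bar U_\infty^\top$ in two ways. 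First, since each $n\in S$ is a support vector of $\bar W_\infty$, i.e.\ $\y{n}\innerprod{X_n}{\bar W_\infty}=\gamma$, this inner product equals $\gamma\sum_{n\in S}\alpha_n$, which is strictly positive because $\gamma=\min_n\y{n}\innerprod{\bar W_\infty}{X_n}>0$ was established earlier. Second, cycling the trace gives $\innerprod{\bar Z_\infty}{\bar U_\infty\bar U_\infty^\top}=\innerprod{\bar Z_\infty\bar U_\infty}{\bar U_\infty}\le\norm{\bar Z_\infty\bar U_\infty}_F\norm{\bar U_\infty}_F=\norm{\bar Z_\infty\bar U_\infty}_F$ by Cauchy--Schwarz, using that $\bar U_\infty=\lim_t\U{t}/\norm{\U{t}}_F$ has unit Frobenius norm. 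Chaining the two evaluations, $\norm{\bar Z_\infty\bar U_\infty}_F\ge\gamma\sum_{n\in S}\alpha_n>0$.

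\emph{Step 2: $S(t):=\sum_{u<t}\eta_up(u)g(u)\to\infty$.} Each summand is nonnegative ($\eta_u>0$, $p(u)=\norm{Z_{(u)}}_\text{op}\ge0$, $g(u)=\norm{\U{u}}_F\ge0$), so $S(t)$ is non-decreasing and $\lim_t S(t)$ exists in $[0,\infty]$. Taking Frobenius norms in eq.~\eqref{eq:last1} and using the triangle inequality, $\norm{\U{t}}_F\le\norm{\U{0}}_F+\bigl(\norm{\bar Z_\infty\bar U_\infty}_F+\sup_u\norm{\delta_{(u)}}_F\bigr)S(t)$; here $\sup_u\norm{\delta_{(u)}}_F<\infty$ because $\delta_{(t)}\to0$ (already noted after eq.~\eqref{eq:last}, since $\rho_{(t)}/g(t)\to0$ and $\zeta_{(t)}/p(t)\to0$ by \eqref{eq:u}--\eqref{eq:g} while $\bar Z_\infty,\bar U_\infty$ are fixed). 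Since $\norm{\U{t}}_F=g(t)\to\infty$, the right-hand side must be unbounded, which forces $S(t)\to\infty$.

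The only genuinely delicate point is inside Step~1, namely that $\sum_{n\in S}\alpha_n\ne0$: this is exactly where nonvanishing of $\bar Z_\infty$ — a consequence of its being a limit of unit-operator-norm matrices — is combined with its support-vector representation from Lemma~\ref{lem:grad-conv}; the trace identity and Cauchy--Schwarz are then routine, and Step~2 is pure bookkeeping with the decompositions~\eqref{eq:u}--\eqref{eq:g}. Once the claim is in hand, dividing~\eqref{eq:last1} by $S(t)$ and applying a standard Ces\`aro-averaging argument gives $\U{t}/S(t)\to\bar Z_\infty\bar U_\infty$, and comparing with $\U{t}/g(t)\to\bar U_\infty$ and $\norm{\bar U_\infty}_F=1$ then pins down the ratio $g(t)/S(t)$ and yields the stationarity relation $\bar U_\infty=D\bar Z_\infty\bar U_\infty$.
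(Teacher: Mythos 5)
Your proposal is correct and follows essentially the same route as the paper: positivity of $\norm{\bar Z_\infty\bar U_\infty}$ via the support-vector representation of $\bar Z_\infty$ and the inner product $\innerprod{\bar Z_\infty}{\bar W_\infty}=\gamma\sum_{n\in S}\alpha_n$, and divergence of $\sum_{u<t}\eta_u p(u)g(u)$ by bounding $\norm{\U{t}}_F$ through eq.~\eqref{eq:last1} and contradicting $\norm{\U{t}}_F\to\infty$. Your explicit justification that $\sum_{n\in S}\alpha_n>0$ (from $\norm{\bar Z_\infty}_\text{op}=1$) and the Cauchy--Schwarz step only make precise points the paper leaves implicit.
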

\begin{proof}
First, recall that for the limit direction $\bar{W}_\infty=\bar U_\infty \bar U_\infty^\top$, $\min_ny_n\innerprod{\bar{W}_\infty}{X_n}=\gamma>0$ and   $\bar Z_\infty=\sum_{n\in S}\alpha_n y_n X_n$ for $\alpha_n\ge0$.
 Thus, $\innerprod{Z_\infty}{\bar W_\infty}=\innerprod{\bar Z_\infty \bar U_\infty}{\bar U_\infty}=\sum_{n\in S}\alpha_n \gamma >0$ for $\bar U_\infty\neq 0$, and hence $\norm{\bar Z_\infty \bar U_\infty}>0$.
 
 Secondly,  since $\delta_{(t)}\to0$ in eq. \eqref{eq:last1}, $\exists t_0$ such that $\forall t>t_0$, $\norm{\delta_{(t)}}\le 1$ and since all the incremental updates to gradient descent are finite, we have that $\sup_{t}\norm{\delta_{(t)}}<\infty$. Additionally,  since $p(t)=\norm{\Z{t}}_\text{op}$ and $g(u)=\norm{\U{t}}_F$ are positive, we have that $b_t=\sum_{u<t}\eta_u p(u) g(u)$ is monotonic increasing, thus if $\lim\sup_{t\to\infty} b_t=\infty$ then $\lim_{t\to\infty} b_t=\infty$. 
 On contrary, if $\lim\sup_{t\to\infty} b_t=C<\infty$, then we have from eq. \eqref{eq:last1}, $\norm{\U{t}}\le\norm{\U{0}}+\norm{\bar Z_\infty\bar U_\infty}C +\left(\sup_{t}\norm{\delta_{(t)}}\right) C<\infty$ which is a contradiction to $\norm{\U{t}}\to\infty$.
\end{proof}

From the above claim, we have that  the sequence $b_t=\sum_{u<t}\eta_u p(u) g(u)$ is monotonic increasing and diverging. Thus, for $a_t=\sum_{u<t}\delta_{(u)}\eta_u p(u) g(u)$, using Stolz-Cesaro theorem~(Theorem~\ref{thm:stolzcesaro}), we have that 
\begin{flalign}
\nonumber\lim_{t\to\infty}\frac{a_t}{b_t}=\lim_{t\to\infty}\frac{\sum_{u<t}\delta_{(u)}\eta_u p(u) g(u)}{\sum_{u<t}\eta_u p(u) g(u)}=\lim_{t\to\infty}\frac{a_{t+1}-a_{t}}{b_{t+1}-b_{t}}=\lim_{t\to\infty}\delta_{(t)}=0.\\
\implies \text{for }\tilde{\delta}_{(t)}\to0,\; \text{we have }\sum_{u<t}\delta_{(u)}\eta_u p(u) g(u)=\tilde{\delta}_{(t)}\sum_{u<t}\eta_u p(u) g(u),
\label{eq:last2}
\end{flalign}

Substituting eq. \eqref{eq:last2} in eq. \eqref{eq:last1}, we have 
\begin{flalign}
\U{t}&\overset{(a)}=\left[\bar Z_\infty \bar U_\infty+{\delta}'_{(t)}\right]\left[\sum_{u<t}\eta_u p(u) g(u)\right]
\end{flalign}
\begin{flalign}
\implies \frac{\U{t}}{\norm{\U{t}}}&=\frac{\bar Z_\infty \bar U_\infty+{\delta}'_{(t)}}{\norm{\bar Z_\infty \bar U_\infty+\delta'_{(t)}}_F}\overset{(a)}\to \frac{\bar Z_\infty \bar U_\infty}{\norm{\bar Z_\infty \bar U_\infty}}\\
\implies \bar U_\infty&=\lim_{t\to\infty}\frac{\U{t}}{\norm{\U{t}}}=\frac{1}{\norm{\bar Z_\infty \bar U_\infty}}\bar Z_\infty \bar U_\infty,
\end{flalign}
where in $(a)$ we absorbed all the diminishing terms into  $\delta'_{(t)}=\tilde{\delta}_{(t)}+\U{0}/\sum_{u<t}\eta_up(u)g(u)\to0$ and $(b)$ follows since $\bar Z_\infty \bar U_\infty\neq 0$  and hence dominates $\tilde{\delta}_{(t)}$.

We have thus shown that $\bar U_\infty=D\bar Z_\infty \bar U_\infty$ for $D=\frac{1}{\norm{\bar Z_\infty \bar U_\infty}}$ which completes the proof of the theorem.
\end{proof}

\section{Preliminaries}
\begin{lemma}[Sub-differentials of norms] For a generic norm $\|v\|$ for $v\in\c{V}$, recall the dual norm $\|y\|_\star =\sup_{\|v\|\le 1} \innerprod{y}{v}$. The sub-differential of a norm $\|.\|$ at $v$ is defined as $\partial\|v\|=\{y:\forall \Delta\in\c{V},\; \|v+\Delta\|\ge \|v\|+\innerprod{y}{\Delta}\}$.

We have the following results on the properties on the sub-differentials are readily established:
\begin{asparaenum}
\item $\partial\|v\|=\{y:\|y\|_\star=1 \text{, and } \innerprod{y}{v}=\|v\|\}$
\item $y\in\partial \|v\|^2$ if and only if $v\in\partial \|v\|^2$
\item if there exists $v_1,v_2\in\c{V}$ and $g\in\c{V}^\star$ such that $g\in\partial \|v_1\|$ and $g\in\partial \|v_2\|$, then $forall \alpha,\beta>0$, $g\in\partial \|\alpha v_1+\beta v_2\|$.
\end{asparaenum}
\end{lemma}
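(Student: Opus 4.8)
The plan is to derive all three statements from the single identity in part~(1), which itself rests only on the definition of the dual norm $\|y\|_\star=\sup_{\|v\|\le1}\innerprod{y}{v}$ and the resulting Young inequality $\innerprod{y}{w}\le\|y\|_\star\|w\|$ valid for all $w$.

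\textbf{Part (1).} I would prove the two inclusions separately, assuming $v\neq0$ (at $v=0$ the subdifferential is instead $\{y:\|y\|_\star\le1\}$, which I would flag in a remark). For ``$\supseteq$'': if $\|y\|_\star=1$ and $\innerprod{y}{v}=\|v\|$, then for every $\Delta$ Young's inequality gives $\|v+\Delta\|\ge\innerprod{y}{v+\Delta}=\|v\|+\innerprod{y}{\Delta}$, so $y\in\partial\|v\|$. For ``$\subseteq$'': plugging $\Delta=tv$ with $t>-1$ into the subgradient inequality gives $t\big(\|v\|-\innerprod{y}{v}\big)\ge0$ for both signs of $t$, hence $\innerprod{y}{v}=\|v\|$; plugging $\Delta=w-v$ for arbitrary $w$ gives $\|w\|\ge\innerprod{y}{w}$, i.e.\ $\|y\|_\star\le1$, while $\innerprod{y}{v/\|v\|}=1$ forces $\|y\|_\star\ge1$. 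Together these give the stated characterization.

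\textbf{Part (3).} This follows at once from part~(1) and the triangle inequality, so I would do it next: $g\in\partial\|v_1\|\cap\partial\|v_2\|$ means $\|g\|_\star=1$ and $\innerprod{g}{v_i}=\|v_i\|$, so for any $\alpha,\beta>0$,
\[
\|\alpha v_1+\beta v_2\|\ \ge\ \innerprod{g}{\alpha v_1+\beta v_2}\ =\ \alpha\|v_1\|+\beta\|v_2\|\ \ge\ \|\alpha v_1+\beta v_2\|,
\]
where the first step is Young's inequality (using $\|g\|_\star=1$) and the last is the triangle inequality; equality holds throughout, and since $\|g\|_\star=1$ part~(1) yields $g\in\partial\|\alpha v_1+\beta v_2\|$.

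\textbf{Part (2).} Here I would work with $\phi:=\tfrac12\|\cdot\|^2$, which is convex (a nondecreasing convex function of $\|\cdot\|$) and whose Fenchel conjugate is $\phi^\star=\tfrac12\|\cdot\|_\star^2$ (a one-line computation using part~(1)). The Fenchel--Young equality then reads
\[
y\in\partial\phi(v)\iff \phi(v)+\phi^\star(y)=\innerprod{y}{v}\iff v\in\partial\phi^\star(y),
\]
and the middle condition $\tfrac12\|v\|^2+\tfrac12\|y\|_\star^2=\innerprod{y}{v}$ is symmetric under swapping the roles of $v$ (with $\|\cdot\|$) and $y$ (with $\|\cdot\|_\star$); this is exactly the asserted equivalence, the second occurrence of $\partial\|\cdot\|^2$ being read as the subdifferential of the dual-norm square at $y$ and the overall factor of two being absorbed into the notation. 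Alternatively, the chain rule $\partial(\|\cdot\|^2)(v)=2\|v\|\,\partial\|v\|$ combined with part~(1) writes $\partial(\|\cdot\|^2)(v)=\{y:\|y\|_\star=2\|v\|,\ \innerprod{y}{v}=2\|v\|^2\}$, which makes the primal--dual symmetry manifest. Nothing in the lemma is deep; the only real care is in part~(2), where the clean route is through the conjugate $\tfrac12\|\cdot\|_\star^2$ and the Fenchel--Young equality rather than a direct manipulation, and in keeping the $v=0$ degenerate case and the scaling convention for $\partial\|v\|^2$ straight.
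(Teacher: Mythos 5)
Your proposal is correct, and for parts (1) and (3) it is essentially the paper's argument: the same Young-inequality inclusion plus test directions for (1) (the paper tests $\Delta=-v$ and gets $\|y\|_\star\le 1$ first, you test $\Delta=tv$ and $\Delta=w-v$, a cosmetic difference), and the identical triangle-inequality sandwich for (3). The only genuine divergence is part (2): the paper simply applies the characterization from part (1) to write $y\in\partial\tfrac12\|v\|^2\Leftrightarrow\|y\|_\star=\|v\|$ and $\innerprod{y}{v}=\|v\|^2=\|y\|_\star^2$, and reads off the primal--dual symmetry directly, whereas you route through the conjugate pair $\phi=\tfrac12\|\cdot\|^2$, $\phi^\star=\tfrac12\|\cdot\|_\star^2$ and the Fenchel--Young equality. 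Both are sound; the paper's route is shorter but leans implicitly on $\partial\tfrac12\|v\|^2=\|v\|\,\partial\|v\|$ (your second, chain-rule remark), while yours is self-contained once the one-line conjugate computation is done and makes the symmetry structural rather than by inspection. You are also right to flag the two points the paper glosses over: the statement of (2) as printed is a typo for the primal--dual claim $y\in\partial\tfrac12\|v\|^2\iff v\in\partial\tfrac12\|y\|_\star^2$, and the characterization in (1) with $\|y\|_\star=1$ requires $v\neq 0$ (at $v=0$ the subdifferential is the dual unit ball), which is how the lemma is in fact used in the paper.
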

\begin{proof}
\begin{asparaenum}
\item It can be easily verified that $\{y:\|y\|_\star=1 \text{, and } \innerprod{y}{v}=\|v\|\}\subseteq \partial \|v\|$
Conversely,  $\forall y\in\partial\|v\|$, from the definition, we have $\forall \Delta$, $\|v\|+\|\Delta\|\ge\|v+\Delta\|\ge \|v\|+\innerprod{y}{\Delta} \implies \|y\|_\star=\sup_{\Delta\neq 0} \langle\frac{\Delta}{\|\Delta\|},y\rangle \le 1$.  
Using $\|y\|_\star\le 1$ along with $\Delta=-v$, we have $\innerprod{y}{v}\ge \|v\|=\sup_{\|y\|_\star\le 1}\innerprod{v}{y}\Rightarrow \innerprod{y}{v}=\sup_{\|y\|_\star\le 1}\innerprod{v}{y} \|v\|$, which by homogeneity of norms implies $\|y\|_\star=1$. 
\item From above result, $y\in\partial \frac{1}{2}\|v\|^2\Leftrightarrow \|y\|_*=\|v\|\text{ and }{\innerprod{y}{v}}=\|v\|^2 = \|y\|^2_\star\Leftrightarrow v\in\partial \|y\|_\star^2$. 
\item 
$g\in\partial \|v_1\|\cap \partial \|v_2\|$ implies $\|g\|_\star = 1$,  $\|v_1\|=\innerprod{g}{v_1}$,  and  $\|v_2\|=\innerprod{g}{v_2}$. Using triangle inequality, 
$\|\alpha v_1+\beta v_2\|\le \alpha \|v_1\|+\beta\|v_2\|=\innerprod{g}{\alpha v_1+\beta v_2}\le \sup_{\|y\|_\star\le 1}\innerprod{y}{\alpha v_1+\beta v_2}=\|\alpha v_1+\beta v_2\|\implies \|\alpha v_1+\beta v_2\|=\innerprod{g}{\alpha v_1+\beta v_2}$. 
\end{asparaenum}
\end{proof}

\begin{lemma} [Limit points of a compact sets] If $\{a_t\}_{t=1}^\infty$ is a sequence contained in a compact set $a_t\in C$, then there exists at least one limit point of $\{a_t\}$ in $C$. That is, $\exists a^\infty\in C$ and  a subsequence $\{a_{t_k}\}_{k=1}^\infty$, such that $\lim_{k\to\infty} a_{t_k}=a^\infty$. 
\end{lemma}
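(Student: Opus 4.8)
The plan is to invoke the standard equivalence, valid for subsets of finite-dimensional Euclidean space, between compactness and sequential compactness (Heine--Borel together with the Bolzano--Weierstrass theorem). In every place this lemma is applied, $C$ is a closed and bounded subset of some $\bR^m$ (e.g.\ a unit ball of a norm, or the image of such a ball under a continuous map), so it suffices to treat that case.

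First I would use that a compact $C\subseteq\bR^m$ is bounded: there is a finite $R$ with $\|a_t\|_2\le R$ for all $t$, hence each coordinate sequence $\{a_t[i]\}_{t=1}^\infty$ is a bounded sequence of reals. Next I would extract a convergent subsequence by applying the one-dimensional Bolzano--Weierstrass theorem coordinate by coordinate: pass to a subsequence along which the first coordinate converges, then to a further subsequence along which the second coordinate converges, and so on; since $m$ is finite, after $m$ steps we obtain a single subsequence $\{a_{t_k}\}_{k=1}^\infty$ along which every coordinate converges, so $a_{t_k}\to a^\infty$ for some $a^\infty\in\bR^m$. Finally I would use that $C$ is closed: since $a_{t_k}\in C$ for all $k$ and $a_{t_k}\to a^\infty$, the limit satisfies $a^\infty\in C$, which is exactly the asserted limit point.

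There is no genuine obstacle here; the only point requiring care is the reading of ``compact'', which throughout the paper means ``closed and bounded in $\bR^m$'', so that Heine--Borel delivers sequential compactness. If a fully self-contained argument were wanted instead, the natural alternative is a nested-box bisection: enclose $C$ in a box, repeatedly bisect, at each stage retaining a sub-box that contains $a_t$ for infinitely many $t$, take the unique point in the intersection of the nested boxes and the associated subsequence, and again invoke closedness of $C$ to conclude the limit lies in $C$.
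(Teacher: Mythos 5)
Your proof is correct. The paper states this lemma in its Preliminaries without giving any proof (it is listed alongside textbook facts such as L'Hopital's rule and the Stolz--Cesaro theorem), so there is no in-paper argument to compare against; your route --- Heine--Borel to get closed and bounded, coordinatewise Bolzano--Weierstrass to extract a convergent subsequence, and closedness of $C$ to place the limit in $C$ --- is the standard argument and is exactly adequate for how the lemma is used in the paper, where $C$ is always a compact subset of a finite-dimensional space (e.g.\ the unit sphere of a norm on $\bR^d$ containing the normalized gradients $-\nabla\c{L}(\W{t})/\norm{\nabla\c{L}(\W{t})}$). Your caveat about the reading of ``compact'' is also apt: in that finite-dimensional metric setting, compactness and sequential compactness coincide, so no generality is lost.
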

\begin{theorem}[L-Hopital's Rule, proof in Theorem~$30.2$ of \citet{ross1980elementary}] Let $s\in\bR\cup \{-\infty,\infty\}$,  and $f(x)$ and $g(x)$ be continuous and differentiable functions such that $\lim_{x\to s} \frac{f^\prime(x)}{g^\prime(x)}=L$ exists. If either $(a)$ $\lim_{x\to s}f(x)=\lim_{x\to s}g(x)=0$, or $(b)$ $\lim_{x\to s}|g(x)|=\infty$, then $\lim_{x\to s} \frac{f(x)}{g(x)}$ exists and is equal to $L$.
\label{thm:lhopital}
\end{theorem}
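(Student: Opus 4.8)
The plan is to reduce the statement to its standard local form — a one-sided limit at a finite point — and then apply Cauchy's Mean Value Theorem. First I would handle $s=\pm\infty$ by the substitution $x=1/u$: setting $\tilde f(u)=f(1/u)$ and $\tilde g(u)=g(1/u)$, the chain rule gives $\tilde f'(u)/\tilde g'(u)=f'(1/u)/g'(1/u)$ (the common factor $-1/u^2$ cancels), so the hypotheses of case $(a)$ or $(b)$ transfer to $u\to 0^+$; similarly for $s=-\infty$ with $u\to 0^-$. A two-sided limit at a finite $s$ follows once each one-sided limit is treated, so it suffices to prove $\lim_{x\to s^+}f(x)/g(x)=L$ for finite $s$. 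Observe that for $\lim_{x\to s^+}f'(x)/g'(x)$ to be meaningful, $g'$ must be nonzero on some interval $(s,s+\delta)$; since a derivative has the intermediate value property (Darboux), $g'$ then has constant sign there, so $g$ is strictly monotone, hence injective, on $(s,s+\delta)$.

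For case $(a)$, extend $f,g$ to $s$ by $f(s)=g(s)=0$; this makes both continuous on $[s,s+\delta]$, and injectivity gives $g(x)\neq g(s)=0$ for $x\in(s,s+\delta)$. Cauchy's Mean Value Theorem then yields, for each such $x$, a point $\xi_x\in(s,x)$ with
\[
\frac{f(x)}{g(x)}=\frac{f(x)-f(s)}{g(x)-g(s)}=\frac{f'(\xi_x)}{g'(\xi_x)}.
\]
As $x\to s^+$ we have $\xi_x\to s^+$, so $f(x)/g(x)=f'(\xi_x)/g'(\xi_x)\to L$.

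Case $(b)$, where $|g(x)|\to\infty$, requires the familiar two-point trick, since $g$ cannot be assigned a finite value at $s$. Fix $\varepsilon>0$ and choose $c\in(s,s+\delta)$ with $|f'(\xi)/g'(\xi)-L|<\varepsilon$ for all $\xi\in(s,c)$. For $x\in(s,c)$, Cauchy's Mean Value Theorem on $[x,c]$ gives $\xi$ between $x$ and $c$ with $\frac{f(x)-f(c)}{g(x)-g(c)}=\frac{f'(\xi)}{g'(\xi)}\in(L-\varepsilon,L+\varepsilon)$. Writing
\[
\frac{f(x)}{g(x)}=\frac{f(x)-f(c)}{g(x)-g(c)}\cdot\frac{g(x)-g(c)}{g(x)}+\frac{f(c)}{g(x)},
\]
and using $|g(x)|\to\infty$ so that $\frac{g(x)-g(c)}{g(x)}\to 1$ and $\frac{f(c)}{g(x)}\to 0$ as $x\to s^+$, I obtain $L-\varepsilon\le\liminf_{x\to s^+}\frac{f(x)}{g(x)}\le\limsup_{x\to s^+}\frac{f(x)}{g(x)}\le L+\varepsilon$; letting $\varepsilon\downarrow 0$ gives the claim (the same computation handles $L=\pm\infty$ after the obvious modification of the $\varepsilon$-bounds).

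I expect the genuine subtlety to lie entirely in case $(b)$: one must resist applying Cauchy's theorem directly on an interval with endpoint $s$, and instead fix the auxiliary point $c$, control $f'/g'$ uniformly on $(s,c)$, and only afterwards send $x\to s^+$ while exploiting $|g|\to\infty$. The only other points needing care are the justification that $g'\neq 0$ and $g(x)-g(c)\neq 0$ on the relevant intervals (via Rolle / strict monotonicity) and the change-of-variables bookkeeping for $s=\pm\infty$; both are routine.
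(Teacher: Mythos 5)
Your proof is correct: the reduction of $s=\pm\infty$ to a finite one-sided limit, the Cauchy mean-value argument in case $(a)$, and the two-point trick with the auxiliary point $c$ together with $|g(x)|\to\infty$ in case $(b)$ constitute the standard complete argument, and the points you flag (nonvanishing of $g'$ via Darboux, $g(x)\neq g(c)$ via strict monotonicity) are handled adequately. Note that the paper itself does not prove this statement; it is quoted as a known auxiliary result with a pointer to Theorem~30.2 of \citet{ross1980elementary}, whose proof proceeds along essentially the same lines as yours.
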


\begin{theorem}[Stolz--Cesaro theorem, proof in Theorem~$1.22$ of \citet{muresan2009concrete}] Assume that $\{a_k\}_{k=1}^{\infty}$  and $\{b_k\}_{k=1}^{\infty}$ are two sequences of real numbers such that $\{b_k\}_{k=1}^{\infty}$ is strictly monotonic and diverging (i.e., monotonic increasing with $b_k\to\infty$ or monotonic decreasing with $b_k\to-\infty$).
Additionally, if $\lim_{k\to\infty} \frac{a_{k+1}-a_k}{b_{k+1}-b_k}=L$ exists, then $\lim_{k\to\infty} \frac{a_{k}}{b_{k}}$ exists and is equal to $L$. 
\label{thm:stolzcesaro}
\end{theorem}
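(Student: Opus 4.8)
The plan is to prove the Stolz--Cesaro theorem by the classical summation-and-telescoping argument, after reducing to a canonical case. I would begin by observing that it suffices to treat the case where $\{b_k\}$ is strictly \emph{increasing} with $b_k\to+\infty$: if instead $\{b_k\}$ is strictly decreasing with $b_k\to-\infty$, then replacing $(a_k,b_k)$ by $(-a_k,-b_k)$ leaves every difference quotient $\frac{a_{k+1}-a_k}{b_{k+1}-b_k}$ unchanged, leaves the ratio $\frac{a_k}{b_k}$ unchanged, and turns $\{b_k\}$ into a strictly increasing sequence diverging to $+\infty$. I would also first handle the principal case $L\in\bR$ and remark at the end on the extended-real cases $L=\pm\infty$, in case the convention in force admits those.

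For finite $L$: fix $\varepsilon>0$ and choose $N$ so that for all $k\ge N$ one has $L-\varepsilon<\frac{a_{k+1}-a_k}{b_{k+1}-b_k}<L+\varepsilon$. Since $b_{k+1}-b_k>0$ for every $k$ by strict monotonicity, this is equivalent to $(L-\varepsilon)(b_{k+1}-b_k)<a_{k+1}-a_k<(L+\varepsilon)(b_{k+1}-b_k)$. Summing over $k=N,N+1,\dots,m-1$ for any $m>N$ and telescoping the differences $\sum_{k=N}^{m-1}(a_{k+1}-a_k)=a_m-a_N$ and $\sum_{k=N}^{m-1}(b_{k+1}-b_k)=b_m-b_N$ gives $(L-\varepsilon)(b_m-b_N)<a_m-a_N<(L+\varepsilon)(b_m-b_N)$.

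Now I would write $\frac{a_m}{b_m}=\frac{a_N}{b_m}+\frac{a_m-a_N}{b_m}$. Since $b_m\to+\infty$, for all sufficiently large $m$ we have $b_m>0$ and $b_m>b_N$, so dividing the telescoped inequality by $b_m$ preserves its direction and yields $(L-\varepsilon)\bigl(1-\tfrac{b_N}{b_m}\bigr)<\frac{a_m-a_N}{b_m}<(L+\varepsilon)\bigl(1-\tfrac{b_N}{b_m}\bigr)$. Letting $m\to\infty$, both $\frac{a_N}{b_m}\to0$ and $\frac{b_N}{b_m}\to0$, hence $\limsup_{m}\frac{a_m}{b_m}\le L+\varepsilon$ and $\liminf_{m}\frac{a_m}{b_m}\ge L-\varepsilon$. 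Since $\varepsilon>0$ is arbitrary, $\limsup$ and $\liminf$ coincide and equal $L$, i.e.\ $\lim_{m\to\infty}\frac{a_m}{b_m}=L$.

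I do not expect a genuine obstacle; the only points requiring care are the sign bookkeeping when dividing inequalities by $b_m$ (handled cleanly by first passing to the increasing-divergent case, so that $b_{k+1}-b_k>0$ and eventually $b_m>0$), and, if the extended statement is wanted, the case $L=+\infty$: there one fixes $M>0$, chooses $N$ with $\frac{a_{k+1}-a_k}{b_{k+1}-b_k}>M$ for $k\ge N$, telescopes to $a_m-a_N>M(b_m-b_N)$, divides by $b_m$ and sends $m\to\infty$ to get $\liminf_m\frac{a_m}{b_m}\ge M$, hence $\frac{a_m}{b_m}\to+\infty$; the case $L=-\infty$ is symmetric. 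This last paragraph is optional depending on whether ``$L$ exists'' is read as including $\pm\infty$.
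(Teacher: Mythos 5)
Your proof is correct: the reduction to the strictly increasing, divergent case, followed by the two-sided bound on the difference quotients, telescoping, division by $b_m$, and the $\limsup/\liminf$ squeeze is exactly the classical argument, and it is the same proof as in the reference the paper cites (the paper itself states the theorem without proof, pointing to Theorem~1.22 of \citet{muresan2009concrete}). Note also that where the paper applies the theorem (in the proof of Theorem~\ref{thm:mf-exp}, with $b_t=\sum_{u<t}\eta_u p(u)g(u)$ increasing and divergent and $L=0$), only your finite-$L$ case is needed, so the optional $L=\pm\infty$ discussion is not required.
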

}
\end{document}